\setlist[enumerate]{leftmargin=.5in}
\setlist[itemize]{leftmargin=.5in}
\crefname{hypothesis}{Hypothesis}{Hypotheses}
\title{Interpolation, Approximation and Controllability of Deep Neural Networks
% \thanks{Submitted to the editors on \today.
% \funding{}
% }
}
\author{
    Jingpu Cheng\thanks{Department of Mathematics, National University of Singapore, 117543, Singapore
    (\email{chengjingpu@u.nus.edu})}
    \and
    Qianxiao Li\thanks{Department of Mathematics and Institute for Functional Intelligent Materials, National University of Singapore, 117543, Singapore (\email{qianxiao@nus.edu.sg})}
    \and
    Ting Lin\footnotemark[3]\thanks{School of Mathematical Sciences, Peking University, 100871, China
    (\email{lintingsms@pku.edu.cn})}
    \and Zuowei Shen\thanks{Department of Mathematics, National University of Singapore, 117543, Singapore
    (\email{matzuows@nus.edu.sg})}
}
\DeclareMathOperator{\supp}{supp}
\DeclareMathOperator{\curl}{curl}
\begin{document}

\maketitle

% REQUIRED
\begin{abstract}
  We investigate the expressive power of deep residual neural networks idealized as continuous dynamical systems through control theory.  Specifically, we consider two properties that arise from supervised learning, namely universal interpolation - the ability to match arbitrary input and target training samples - and the closely related notion of universal approximation - the ability to approximate input-target functional relationships via flow maps.  Under the assumption of affine invariance of the control family, we give a characterisation of universal interpolation, showing that it holds for essentially any architecture with non-linearity.  Furthermore, we elucidate the relationship between universal interpolation and universal approximation in the context of general control systems, showing that the two properties cannot be deduced from each other. At the same time, we identify conditions on the control family and the target function that ensures the equivalence of the two notions.
\end{abstract}

% REQUIRED
\begin{keywords}
deep neural networks, controllability, universal interpolation, universal approximation
\end{keywords}

% REQUIRED
\begin{MSCcodes}
93B05, 41A05, 68T07
%Controllability;  Interpolation in approximation theory;  Artificial neural networks and deep learning
\end{MSCcodes}

\section{Introduction}
Deep neural networks have emerged as powerful tools in various domains,
demonstrating remarkable success in tasks such as image classification and
natural language processing.
The central advantage of these deep architectures over their shallow counterparts
is their ability to utilize function composition through the stacking of layers.
Consequently, a fundamental problem in the theory of deep learning
is to understand the expressive power of deep neural networks generated by the composition of
relatively simple functions. Previous research has investigated this topic
through various approaches, including the demonstration of their ability to
approximate function families known to have strong expressive capacity (e.g.,
polynomials, wavelets)~\cite{guhring2020ErrorBoundsApproximations,kidger2020UniversalApproximationDeep,leshno1993MultilayerFeedforwardNetworksa,mallat2016UnderstandingDeepConvolutional}, explicit constructions based on specific network
structures~\cite{lin2018ResNetOneneuronHidden,lu2021DeepNetworkApproximationa,schwab2023DeepLearningHigha,shen2020DeepNetworkApproximation,shen2021DeepNetworkApproximation,yarotsky2017ErrorBoundsApproximationsa},
and analysis from the perspective of dynamical systems and
control theory~\cite{li2022DeepLearningDynamical,ruiz-balet2022InterpolationApproximationMomentum,ruiz-balet2023NeuralODEControl,tabuada2020UniversalApproximationPower}.
Many studies rely on specific structure
of activation functions or require increasing the width
in the hidden layers beyond the input dimension.
Yet, in order to gain a mathematical understanding of deep learning,
it is important to isolate the effect of composition
and study how it may be used to build expressivity in a general setting,
without further requirements on width and specific architectural choices.
% study of the expressive power of deep
% neural networks with general non-linear Lipschitz activation functions and a
% fixed width corresponding to the input dimension. This generalization can
% further illuminate the role of depth in the network architecture, independently
% of specific activation function properties or additional hidden layer width.

One of the most popular deep architectures in machine learning is the residual
neural network (ResNet)~\cite{he2016DeepResidualLearninga}.
A dense variant of the ResNet
(where the input dimension and output dimension are the same) with $S$ layers
generates the output map $x_0\to x_S$ by iterating the following difference
equation:
\begin{equation}
    \label{eq:resnet}
x_{s+1}=x_s+h W_s \sigma\left(A_s x_s+b_s\right),\quad  s=0,1,\cdots, S-1,
\end{equation}
where $h>0$, $x_s,b_s\in \mathbb R^d$, $W_s, A_s\in \mathbb{R}^{d\times d}$ and $\sigma$ represents the non-linear activation function.
The ResNet architecture is a repeated composition of simple shallow neural network layers,
and its practical success poses the question of how the composition of functions enhances expressive power.
The key observation~\cite{e2017ProposalMachineLearning,li2018MaximumPrincipleBased,ruthotto2020DeepNeuralNetworks} is that this type of ResNets can be regarded as a forward Euler time stepping of a (parameterized) continuous-time dynamical system:
\begin{equation}
    \label{eq:ct-resnet}\dot{x}(t)=W(t) \sigma(A(t) x(t)+b(t)),\quad t\in [0,T],\ x(0)=x_0,
\end{equation}
and function composition can now be understood as time-evolution.
In what follows, we refer to \eqref{eq:ct-resnet} as a continuous-time ResNet,
which is also known as a type of ``neural ODEs'' in the machine learning literature~\cite{chen2018NeuralOrdinaryDifferential}.
It is important to note that the system \eqref{eq:ct-resnet} is automatically a
continuous-time control system, with controls being all the parameters $W(t), A(t),b(t)$.
Therefore, continuous-time idealization enables us to leverage dynamical
systems and control theory in the study of deep learning.
This led to progress in approximation theory~\cite{kang2022FeedforwardNeuralNetworksa,li2022DeepLearningDynamical,ruiz-balet2023NeuralODEControl,tabuada2020UniversalApproximationPower}, training algorithms~\cite{dong2020AdaptiveResidualNetwork,e2019MeanFieldOptimalControl,li2018MaximumPrincipleBased},
adversarial defence~\cite{chen2022.SelfHealingRobustNeural,chen2020.RobustNeuralNetworks} and generative modelling~\cite{chen2018NeuralOrdinaryDifferential,grathwohl2018FFJORDFreeFormContinuous}.

In the supervised learning setting, two related questions can be posed
in this dynamical view.
The first is the \emph{universal approximation property} (UAP), which represents the
ability of a control system to approximate input-output functional relationships via its
flow maps. The universal approximation of continuous-time ResNets has been
studied in previous works. For example, in~\cite{li2022DeepLearningDynamical},
it is shown that under
mild conditions, the flow map of a large class of control systems, including continuous-time ResNets,
can be arbitrarily close (in $L^p$ sense) to any continuous function on a prescribed compact domain $K$.
Alternatively, the ability to reduce empirical loss is another measure of model capacity.
This points to the notion of \emph{universal interpolation property} (UIP),
which characterizes a model's capacity to match arbitrary input and target
training samples. For control systems, the universal interpolation property can be
formulated as a controllability problem of steering an arbitrary ensemble of points to their
prescribed ensemble of targets simultaneously.
In this direction, the UIP has been established for certain control-affine systems in~\cite{agrachev2022ControlManifoldsMappings,cuchiero2020DeepNeuralNetworks} and continuous-time ResNets with different activation functions in~\cite{ruiz-balet2023NeuralODEControl,ruthotto2020DeepNeuralNetworks,tabuada2020UniversalApproximationPower}. These studies typically analyze the expressive
power by investigating the controllability over
ensembles of points, demonstrating that a variety of seemingly simple control systems may
exhibit strong expressive power in interpolation and approximation tasks.
However, current studies are limited to specific architectures,
and a general characterization of universal interpolation is lacking.
Moreover, it is not clear if and when universal interpolation leads to universal approximation and vice versa.
This is an important question as it sets forth the extent to which control theory
can be a powerful analytical tool for understanding the distinguishing
feature of deep learning, namely composition idealized as dynamics.

In this paper, we establish some primary results on the interpolation and approximation power
of continuous-time control systems.
While a general characterization of universal interpolation (i.e., ensemble controllability~\cite{agrachev2020ControlSpacesEnsembles})
is challenging,
we exploit the fact that most practical deep learning architectures lead to control
families that satisfy affine invariance.
For example, the control family corresponding to continuous-time ResNets
\begin{equation}
    \label{eq:F_resnet}
\mathcal F_{ResNet}:=\{W\sigma(A\cdot+b)\mid W,A\in\mathbb R^{d\times d},b\in\mathbb R^d\}
\end{equation}
is affine invariant, meaning it remains closed under any affine transformation
$f\mapsto Wf(A\cdot+b)$.
This affine invariance property, commonly found in
neural network structures but rarely considered in classical control problems,
provides the system with an infinite-dimensional space of state equations,
and significantly weakens the conditions required for UIP and UAP to hold.
Under the assumption of affine invariance,
our first result (Theorem~\ref{thm:con_aff}) gives a
characterisation of UIP for general control systems,
i.e., UIP holds if and only if the control family contains a non-linear function.
When applied to ResNets, our result (together with known results in~\cite{li2022DeepLearningDynamical,ruiz-balet2023NeuralODEControl})
shows that UAP also holds for any such architecture with a non-linear Lipschitz
activation function.
This finding relaxes the assumptions on non-linearity originally presented
in~\cite{li2022DeepLearningDynamical}, albeit with stronger affine invariance requirements.

The successful deduction of UAP from UIP in continuous-time ResNets has
motivated us to explore the relation between these two concepts in general.
Our second result (Theorem~\ref{thm:uap_uip})
demonstrates through construction that for continuous-time control
systems, UAP and UIP cannot be deduced from each other.
Consequently, one cannot generally obtain UAP from establishing UIP.
Nonetheless, as shown in Proposition~\ref{prop:uni_time}, we can
still identify conditions on the control family and the target function space
under which a connection between them can be established.

\section{Formulation and Results}
In this section, we first introduce the formulation of approximation and
interpolation problem using flow maps of control systems. Subsequently, we
present our main results.
% For convenience, we collect the notation used in this paper here.
% \QL{Some special notations go here.}\cjp{Maybe the only notation in this section we need to clarity is the closure of a subset. We can do it in the place they are first introduced.}

\subsection{Control system and flow map}
% As mentioned before, the key in this paper is using the language of control
% theory to analyze the continuous-time ResNets.
% Therefore, we first introduce some formulations regarding the control system and its flow maps [ref].
% \QL{I think you can start with ''We first introduce''. Also, no ``the'' in ``the control system''}
Consider the following parameterized dynamical system
\begin{equation}
\label{eq:con_sys}
\dot x(t) = f(x(t);\theta(t)),\quad x(0)=x_0,\quad\theta(t) \in\Theta, \quad\forall t \in [0,T],
\end{equation}
where $x(t)\in\mathbb{R}^d$ and $\theta(t)$ are called
the state and control, respectively.
To simplify the presentation,
throughout this paper we assume that the admissible control set $\Theta \subseteq \mathbb{R}^l$
is independent of time.
Furthermore, we focus on
the case that the control function $t\mapsto \theta(t)$ is piece-wise constant,
rather than Borel measurable, as is commonly assumed.
This is sufficient to establish approximation and interpolation results.
% in order to weaken assumptions of the function $f$ without cumbersome
% theoretical justification.
% This will lead to the same result when the control system is globally Lipschitz,
% while for other cases (e.g., polynomial control), it will simplify
% our theoretical justification our theoretical analysis.

Next, we introduce the set of flow maps of control systems, which serves as a dynamical
hypothesis space for continuous-time control systems. In the following, we
assume that $x\mapsto f(x;\theta)$ is globally Lipschitz for any
control $\theta \in \Theta$.
We denote by $\mathcal F$ the \emph{control family}, i.e., the
parameterized family of all possible control functions in $x$:
\begin{equation}
    \mathcal F:=\{x\mapsto f(x;\theta)\mid \theta \in \Theta \}.
\end{equation}
Classical results~\cite{arnold1992ordinary} indicate that for any $g\in\mathcal F$ and $t>0$,
the initial value problem
\begin{equation}
    \dot x(t)=g(x(t)), \quad x(0)=x_0
\end{equation}
is well-posed.
This allows us to define $\varphi_{t}^g$ as the \emph{flow map}, or simply flow, $x_0\mapsto x(t)$.
Subsequently, the set of flow maps of system~\eqref{eq:con_sys} with piece-wise constant controls
at time $T>0$ can be defined as
\begin{equation}
        \Phi(\mathcal{F}, T):=\left\{\varphi_{t_k}^{f_k}\circ \varphi_{t_{k-1}}^{f_{k-1}}\circ \cdots\circ \varphi_{t_1}^{f_1}(\cdot) \mid t_1+\cdots t_k=T,f_1,\cdots,f_k\in \mathcal F , k\ge 1\right\},
\end{equation}
where $\varphi_{t_k}^{f_k}\circ \varphi_{t_{k-1}}^{f_{k-1}}\circ \cdots\circ \varphi_{t_1}^{f_1}(\cdot)$
is the flow map generated by the control $\theta(t)$ with
\begin{equation}
    f(\cdot,\theta(t)):=f_j,\quad \text{ for } t \text{ such that }\sum_{i=1}^{j-1}t_i\le t<\sum_{i=1}^{j}t_{i}.
\end{equation}

We denote $\mathcal{A}_{\mathcal F,T}$ as the family of all flows within time
$T$, and denote $A_{\mathcal F}$ as the set of all the flows in finite time
horizon:
\begin{equation}
    \mathcal A_{\mathcal F,T}:=\bigcup_{0\le t\le T}\Phi(\mathcal{F}, t),
\end{equation}
\begin{equation}
    \mathcal A_{\mathcal F}:=\bigcup_{T\ge 0}\mathcal A_{\mathcal F,T}.
\end{equation}
In~\cite{li2022DeepLearningDynamical}, the mapping family $\mathcal A_{\mathcal F,T}$ and $\mathcal A_{\mathcal F}$
are called the \emph{attainable set} of time horizon $T$ and the total attainable set
of $\mathcal F$, respectively. Note again that a similar definition has been
introduced in classical control theory, cf.~\cite[Chap. 5]{agrachev2004control}.
The key difference is that the attainable set therein is a subset of $\mathbb{R}^d$,
consisting of all the positions that a single point
$x_0$ can be steered to.
In our case, the attainable set is a family of mappings $\mathbb R^d\to\mathbb R^d$,
and the classical attainable set with respect to the point $x_0$ is
$\{\varphi(x_0) : \varphi \in \mathcal A_{\mathcal F,T}\}$ and $\{\varphi(x_0) : \varphi \in
\mathcal A_{\mathcal F}\}$, respectively.

\begin{remark}
    Some non-globally-Lipschitz
    control functions, such as polynomials, are also of interest in our setting.
    In fact, the concept of flow maps and attainable sets can also be extended to
    the case when the control function is only locally Lipschitz in $x$ by adding
    some restrictions on the control function $\theta(t)$. We will discuss this in
    ~\Cref{subsubsec:loc_lip}.
\end{remark}

\subsection{Universal interpolation property and universal approximation property}
% In deep supervised learning, we aim to approximate some target
% function using the hypothesis space of functions represented by certain network
% structures. As our focus is on the expressive power of
% continuous-time ResNets, a special class of continuous-time control systems, we hereafter analyse control systems in general and restrict our attention on $\mathcal A_{\mathcal F}$, the family of flow maps. Each choice of the control family $\mathcal F$ then corresponds to the architectural design for a deep ResNet.
% Some primary results about the expression power of $\mathcal A_{\mathcal F}$
% were discussed in~\cite{liDeepLearningDynamical2020}, where we list in [refs]
% below.

In this subsection, we introduce the precise definitions of the \emph{universal
interpolation property} (UIP, as introduced in~\cite{cuchiero2020DeepNeuralNetworks}).
and the \emph{universal approximation property} (UAP) for
control systems.
Through the dynamical view of deep learning, the UIP corresponds to
the ability of a ResNet-type architecture to achieve arbitrarily small training error
on any finite dataset, whereas the UAP refers to its ability
to approximate an input-output relationship, possibly on a compact domain
in the input (or feature) space.
Both properties represent some form of expressiveness of deep neural networks
idealized as control systems.
For clarity, we first introduce these notions when the
dimension of the system is greater than 1 (almost always the case in applications)
to avoid topological restrictions of continuous-time dynamics in one dimension
(see \Cref{subsubsec:1dsys}).

We begin with the UIP.
Intuitively, a control system possessing UIP can interpolate, using its flow maps,
any finite number of data to arbitrary precision.
\begin{definition}[UIP for $d\ge 2$]
    \label{def:uip}
    For $d\ge 2$, we say a control system driven by a control family $\mathcal F$
    has the approximate universal interpolation property
    if for any $\varepsilon > 0$, any positive integer $N$,
    and any $N$ data points $(x_1,y_1),\cdots, (x_N,y_N)$
    with $x_i \neq x_j$, $y_i \neq y_j$ for all $i\neq j$,
    there exists $\varphi(\cdot)\in \mathcal A_{\mathcal F}$ such that
\begin{equation}
    \label{eq:uip_def}
    \Vert \varphi(x_i)-y_i \Vert_\infty\le\varepsilon,\quad i=1,\cdots, N.
\end{equation}
If ~\eqref{eq:uip_def} also holds for $\varepsilon=0$, we say the system possesses the exact universal interpolation property.
\end{definition}

% The name \textit{universal interpolation}
% was first introduced in~\cite{cuchiero2020DeepNeuralNetworks}.

A closely related notion is the UAP,
which refers to the density of the attainable set in some appropriate function space.
This is a familiar notion in approximation theory~\cite{devore1993constructive} and learning theory~\cite{hornik1989MultilayerFeedforwardNetworks}.
Here we focus on the $L^p$ variant of this definition,
which corresponds to density in the topology of compact convergence in $L^p$.

\begin{definition}[UAP for $d\ge 2$]
    \label{def:uap}
    We say a $d$-dimensional control system with control family $\mathcal F$ has the universal approximation property in $L^p$ sense
    ($1\le p<\infty$), if for any continuous function $F: \mathbb R^d\to \mathbb R^d$ and compact set $K\subseteq \mathbb R^d$ and $\varepsilon>0$, there exists $\varphi\in\mathcal A_{\mathcal F}$ such that
    \begin{equation}
        \Vert F-\varphi\Vert_{L^p(K)}<\varepsilon.
    \end{equation}
\end{definition}

A system that exhibits the UAP possesses strong expressive power,
in that it is capable of approximating any reasonable input-output relationship
on compact input domains.
This is a desirable property to achieve for any neural network architecture design process,
and is a basic guarantee of its general applicability.

The reason for the restriction $d\geq 2$ in the above definitions is the so-called \textit{topological constraint}, as mentioned in~\cite{li2022DeepLearningDynamical}.  Namely, for all
one-dimensional control systems, the resulting flow map are continuously
increasing functions. As a consequence, all the attainable sets aforementioned can only approximate or interpolate increasing functions.
Thus, in one dimension this additional constraint is imposed to the respective definitions
of UAP and UIP. See~\Cref{subsubsec:1dsys} for detailed discussions.

\begin{remark}
    Since the flow maps are all invertible, they cannot map distinct points to
    the same image. Thus, in the definition of UIP, we require the images of
    distinct points to be distinct. In general, control systems will not
    possess UAP in $L^\infty$ sense.
    This can be shown by a simple modification on the example
    provided in~\cite[Sec. 4]{dupont2019AugmentedNeuralODEsa}.
\end{remark}

We close this subsection with the following observation. From
Definition~\ref{def:uip}, the exact UIP can be formulated as a
controllability problem.
The ability of a control system's flow map to interpolate a
finite dataset $X=\{(x_i,y_i)\}_{i=1}^N$ is equivalent to its ability to steer the initial points $x_i$ towards their respective targets $y_i$, simultaneously and with the same control.

This can also be formulated as a classical controllability problem in ($Nd$)-dimensions
with tensor-type control families~\cite{agrachev2020ControlSpacesEnsembles,cuchiero2020DeepNeuralNetworks,ruiz-balet2023NeuralODEControl,tabuada2020UniversalApproximationPower}
\begin{equation}
    \label{eq:Nd_system1}
    \dot{X}_N(t)=\Big(f(x_1(t);{\theta}(t)), f(x_2(t);{\theta}(t)), \cdots , f(x_N(t);{\theta}(t)) \Big), \quad \forall t \in [0,T],
\end{equation}
where $X_N(t)=(x_1(t),\cdots,x_N(t))\in\Omega_{N}\subset \mathbb R^{Nd}$, with
\begin{equation}
    \Omega_N=\Big\{(x_1,\cdots,x_N)\mid \forall i\neq j ,x_i\neq x_j, i,j\in\{1,2\cdots, n\} \Big\}.
\end{equation}
Thus, a system has the exact UIP if and only if the family of control systems
~\eqref{eq:Nd_system1} are controllable for all positive integers $N$.

While this formulation seemingly translates
UIP to a classical controllability problem,
we should note that there is an essential difference.
Here, we have not one but a countable family of controllability
problems, since controllability needs to hold for all positive integers $N$.
Thus, classical sufficient conditions for controllability (e.g. Chow--Rashevsky theorem~\cite[Theorem 5.9]{agrachev2004control})
needs to be checked for all $N >0 $, which is a challenging task.
Note that there exist controllability results dealing with infinite number of points, e.g., the
Chow--Rashevsky theorem into a Banach manifold~\cite{agrachev2022ControlManifoldsMappings}.
However, as the Lie algebra generated by general non-linear vector fields can be
very complicated and Chow--Rashevsky theorem is just a sufficient condition for
general smooth systems, a characterization of UIP cannot be readily derived from these
results directly.

In the literature, there are constructions of relatively simple
control systems possessing UIP.
An example is the following control-affine type systems
\begin{equation}
    \label{eq:aff_sys1}
    \dot{\bm x}(t) = u_1 f_1(\bm x(t)) + u_2 f_2(\bm x(t)) + \cdots u_k f_k(\bm x(t)).
\end{equation}
In~\cite{cuchiero2020DeepNeuralNetworks}, it is
proved that there exist five control functions $f_i$ to achieve UIP with a non-constructive argument.
However, for given right-hand side $f_1,\cdots, f_k$, it is in general difficult to show whether the system has UIP.
Rather than following these approaches, here we consider restricting the control family
to those having \emph{affine invariance}.

\begin{definition} [Affine invariance]
    \label{def:aff}
    Let $\mathcal{F}$ be a set of functions from $\mathbb{R}^d$ to $\mathbb{R}^d$. We say that $\mathcal{F}$ is affine invariant if $f \in \mathcal{F}$ implies $W f(A \cdot-b) \in \mathcal{F}$ for any $b \in \mathbb{R}^d$ and $A,W\in\mathbb R^{d\times d}$.
\end{definition}
One can check from \eqref{eq:F_resnet} that affine invariance holds for control families corresponding to ResNets.
Affine invariance arises naturally
from the practical architecture of deep neural networks, but is rarely
considered in classical controllability analysis.
In contrast to classical control-affine systems, an affine invariant family
comprising non-linear maps has the capability to generate an infinite
dimensional space. This property simplifies the characterization of UIP
and forms the basis of our analyses.

\subsection{Main results}

% We now present our main results.

\subsubsection{characterisation of UIP}
Our first result gives a characterisation for control system with UIP under the affine invariance assumption on its control family.
\begin{theorem}
    \label{thm:con_aff}
    Let $\mathcal F$ be an affine invariant control family.
    Then, the control system with control family $\mathcal F$ possesses the exact UIP
    if and only if there exists $f=(f_1,\cdots,f_d)\in\mathcal F$ with at least one component $f_j$
    being non-linear.
\end{theorem}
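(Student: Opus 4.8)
The plan is to prove the two directions separately; the ``only if'' direction is a short obstruction argument, while the ``if'' direction is the substantive part (I focus on the regime $d\ge 2$ for which Definition~\ref{def:uip} is stated). For necessity I would argue by contraposition: if no $f=(f_1,\dots,f_d)\in\mathcal F$ has a non-linear component, then every element of $\mathcal F$ is an affine vector field $x\mapsto Mx+c$, whose flow $\varphi^{f}_t$ is an affine self-map of $\mathbb R^d$; since a composition of affine maps is affine, every element of $\mathcal A_{\mathcal F}$ is affine. Picking $d+2$ pairwise distinct points $x_i$ in general position together with pairwise distinct targets $y_i$ that are not the simultaneous image of the $x_i$ under any single affine map then shows that exact UIP --- in fact even approximate UIP for sufficiently small $\varepsilon$ --- must fail.

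For sufficiency the idea is to manufacture, out of one non-linear component and affine invariance, a rich supply of ``generalized shear'' flows and then use them to route ensembles of points. First, since $f_j$ is non-linear there exist $x_0,v\in\mathbb R^d$ such that $\phi(s):=f_j(x_0+sv)$ is a \emph{non-affine}, globally Lipschitz scalar function. Applying affine invariance with $A=va^{\top}$, bias vector $b'=bv-x_0$ and $W=e_k e_j^{\top}$ (with $e_1,\dots,e_d$ the standard basis) yields $x\mapsto\phi(\langle a,x\rangle-b)\,e_k\in\mathcal F$ for every $a\in\mathbb R^d$, $b\in\mathbb R$ and every $k$; applying affine invariance once more to these elements replaces $e_k$ by an arbitrary $w\in\mathbb R^d$, so $\mathcal F$ contains every ``single-neuron'' field $x\mapsto w\,\phi(\langle a,x\rangle-b)$. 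Now take $a\perp e_k$, which is possible since $d\ge 2$: the flow of $\phi(\langle a,\cdot\rangle-b)\,e_k$ is then the exact shear $x\mapsto x+\tau\,\phi(\langle a,x\rangle-b)\,e_k$, and composing commuting shears that share a direction $e_k$ and a common functional $a\perp e_k$ produces $x\mapsto x+\psi(\langle a,x\rangle)\,e_k$ for $\psi$ in the linear span $\mathcal G_\phi$ of all dilates and translates of $\phi$.

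Because $\phi$ is continuous and non-affine it is not a polynomial (a globally Lipschitz polynomial is affine), so by the classical result of Leshno et al.~\cite{leshno1993MultilayerFeedforwardNetworksa} the space $\mathcal G_\phi$ is dense in $C(\mathbb R)$ for uniform convergence on compacta; since point evaluations are continuous in that topology and $\delta_{s_1},\dots,\delta_{s_M}$ are linearly independent whenever the $s_i$ are distinct, the evaluation map $\psi\mapsto(\psi(s_1),\dots,\psi(s_M))$ from $\mathcal G_\phi$ onto $\mathbb R^M$ is surjective. Hence a single generalized shear along $e_k$ can be made to realize arbitrary prescribed displacements of finitely many points whose coordinates $\langle a,x_i\rangle$ are pairwise distinct. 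Given a dataset with distinct inputs $x_i$ and distinct targets $y_i$, I would then route it in finitely many stages: first a composition of shears bringing the $N$ points into general position, then adjustments one coordinate at a time, at each stage using a shear in some direction $e_k$ whose functional $a\perp e_k$ separates the points on the already-adjusted coordinates and choosing $\psi\in\mathcal G_\phi$ by the surjectivity above so as to drive each point to its target value in that coordinate, all the while keeping the $N$ images pairwise distinct so that the trajectory stays in $\Omega_N$. Stringing these finitely many flows together gives the desired $\varphi\in\mathcal A_{\mathcal F}$ with $\varphi(x_i)=y_i$.

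The main obstacle is this last routing step: one must check that at every stage a separating functional $a\perp e_k$ actually exists (a genericity statement that is tightest when $d=2$), that the intermediate configurations never collide, and that the whole procedure can be carried out \emph{exactly} rather than only approximately. A robust fallback is to perform the routing approximately and then append a near-identity corrector --- built from the same shear toolkit of the previous paragraphs and localized near the approximate images --- which nudges the points onto their exact targets; verifying that such a corrector lies in $\mathcal A_{\mathcal F}$ is itself a small instance of the preceding construction.
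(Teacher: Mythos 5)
Your approach is genuinely different from the paper's. The paper reduces exact UIP to the span condition $\operatorname{span}\mathcal F_N(X_N)=\mathbb R^{Nd}$ via its span-variant of Chow--Rashevsky (Proposition~\ref{prop:span}), then argues by contradiction: if the condition fails for some $N$ and $X_N\in\Omega_N$, a linear identity $\sum_{j,k}c_{k,j}g_j(x_k)=0$ holds over the affine orbit of $f$; substituting $g=e_{l1}f(A\cdot-b)$ and taking the Fourier transform in $b$ forces $\supp\widehat{f_1}=\{0\}$, so $f_1$ is a polynomial, hence affine by Lipschitzness, contradicting non-linearity. You instead try to build the interpolating flow explicitly by routing the ensemble with exact shears.

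Your toolkit is sound: extracting the single-neuron fields $x\mapsto w\,\phi(\langle a,x\rangle-b)$ from affine invariance, observing that a non-affine globally Lipschitz $\phi$ cannot be a polynomial so that Leshno et al.\ gives density of $\mathcal G_\phi$ in $C(\mathbb R)$ with the compact-open topology, and deducing surjectivity of the finite evaluation map are all correct. But the routing step is a genuine gap. Your exact shears require a separating functional $a\perp e_k$, and as you acknowledge this can fail: with $d=2$, $N=2$, $x_1=0$, $x_2=e_1$, no $a\perp e_1$ separates the two points, so no shear in the $e_1$ direction acts differently on them. Your pre-processing stage and the near-identity-corrector fallback both rely on the same kind of separating functionals, so the argument as written is circular unless you supply an induction showing the procedure never deadlocks; you do not.

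The cleanest repair stays entirely within your toolkit and in fact lands closer to the paper. Drop the restriction $a\perp e_k$: you only imposed it to obtain \emph{exact} shears, but you do not need exact shears if you verify the span condition directly. For any $X_N\in\Omega_N$, pick any $a$ with $s_i:=\langle a,x_i\rangle$ pairwise distinct (possible since the $x_i$ are distinct); for each $k$, the vectors $\bigl(\phi(cs_1-b)\,e_k,\dots,\phi(cs_N-b)\,e_k\bigr)$, as $c,b$ vary, span $\{(c_1e_k,\dots,c_Ne_k):c_i\in\mathbb R\}$ by the surjectivity of evaluation on $\mathcal G_\phi$, and varying $k$ gives $\operatorname{span}\mathcal F_N(X_N)=\mathbb R^{Nd}$. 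Now apply Proposition~\ref{prop:span}. This eliminates the routing combinatorics entirely and replaces the paper's Fourier/distribution argument with the Leshno et al.\ density theorem --- a trade-off, since the paper's argument is essentially self-contained whereas yours imports a known result. Your necessity argument matches the paper's brief remark and is fine.
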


A direct consequence of Theorem~\ref{thm:con_aff}
is that any continuous-time ResNet, following the structure of
\begin{equation}
\dot{x}(t)=W(t) \sigma(A(t) x(t)+b(t)),
\end{equation}
where $\sigma:\mathbb R\to \mathbb R$ is any
non-linear Lipschitz activation function applied element-wise to a vector,
possesses the exact UIP.
Moreover, by leveraging the ``shrinking map''
technique employed in prior works such as~\cite{li2022DeepLearningDynamical,ruiz-balet2023NeuralODEControl,tabuada2020UniversalApproximationPower},
the UAP also holds for such continuous-time ResNets
(See Corollary~\ref{cor:resnet_uap} for details).
Consequently, ResNets (both continuous-time and discrete) can achieve UAP
even if the width of each layer is bounded,
provided that it is at least the input dimension $d$.
This covers almost all the activation functions used in practice, e.g.
ReLU: $\sigma(x)=\max\{x,0\}$, Leaky ReLU: $\sigma(x)=\max\{ax,x\}\,(0<a<1)$,
Sigmoid: $\sigma(x)=({1+e^{-x}})^{-1}$,
and Tanh: $\sigma(x)=\tanh{(x)}$.
% For these activation functions, \eqref{eq:ct-resnet} possesses UAP.

So far, our focus has been on the expressive power of ResNets whose input and
output dimensions are the same. However, practical network architectures often
incorporate an output map $g$ from a designated family $\mathcal G$ of functions
that map from $\mathbb R^d$ to $\mathbb R^m$, to match the input and output
dimensions. $\mathcal G$ is usually called the terminal family and is typically
very simple, e.g. a family of affine maps. In this context, and in combination
with Proposition 3.8 from~\cite{li2022DeepLearningDynamical}, a corollary of
Theorem~\ref{thm:uap_uip} is the following result for approximation of functions
from $\mathbb R^d \to \mathbb R^m$.

\begin{corollary}
    Let $\mathcal A_{\mathcal F}$ be the attainable set of a continuous-time  ResNet with a non-linear activation function
    and $\mathcal G$ is a
    terminal family consisting of Lipschitz functions.
    For given continuous function $F:\mathbb R^d\to\mathbb R^m$, suppose that for
    any compact set $K\subseteq \mathbb R^d$, there exists $g\in\mathcal G$ such
    that $F(K)\subset g(\mathbb R^d)$. Then, for any $\varepsilon>0$ and $p\in
    [1,+\infty]$, there exists $\varphi\in\mathcal A_F$ and $g\in\mathcal G$ such
    that

    \begin{equation}
        \Vert F-g\circ\varphi\Vert_{L^p(K)} <\varepsilon
    \end{equation}
\end{corollary}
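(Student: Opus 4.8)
The plan is to reduce the approximation of the $\mathbb R^d\to\mathbb R^m$ map $F$ to the already-established approximation power of ResNet flow maps as self-maps of $\mathbb R^d$, letting the terminal map $g$ absorb the dimension mismatch. Concretely: by Theorem~\ref{thm:con_aff} the continuous-time ResNet with a non-linear activation has the exact UIP, hence, via the shrinking-map argument recorded in Corollary~\ref{cor:resnet_uap}, its attainable set $\mathcal A_{\mathcal F}$ has the $L^p$-UAP; that argument in fact yields slightly more, namely that $\mathcal A_{\mathcal F}$ can $L^p$-approximate \emph{any} continuous $\mathbb R^d\to\mathbb R^d$ map on any compact set while keeping the image of that set inside a prescribed bounded region. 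The remaining task is to manufacture an intermediate map $h:\mathbb R^d\to\mathbb R^d$ with $g\circ h\approx F$ on $K$ and to transport the $L^p$-error through the Lipschitz map $g$; this is exactly the mechanism behind Proposition~3.8 of~\cite{li2022DeepLearningDynamical}, whose hypotheses I would verify hold in the present setting before invoking it.

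For $1\le p<\infty$ the construction of $h$ is routine. Using uniform continuity of $F$ on the compact set $K$, partition $K$ into finitely many measurable pieces $K_1,\dots,K_M$ on each of which $F$ varies by at most $\delta$; fix $x_i^{\ast}\in K_i$, set $c_i:=F(x_i^{\ast})\in F(K)\subseteq g(\mathbb R^d)$, and choose $z_i\in g^{-1}(c_i)$. Let $h$ be a continuous map sending most of $K_i$ into a radius-$r$ ball around $z_i$; then, with $L_g$ a Lipschitz constant of $g$, $g\circ h$ is within $L_g r+\delta$ of $F$ on $K$ outside a set of arbitrarily small measure. Approximating $h$ in $L^p(K)$ by $\varphi\in\mathcal A_{\mathcal F}$ and using $\Vert g\circ\varphi-g\circ h\Vert_{L^p(K)}\le L_g\Vert\varphi-h\Vert_{L^p(K)}$ then gives the claim, once one observes that the small-measure set on which $\varphi$ deviates from $h$ contributes negligibly because $g(\varphi(K))$ stays inside a fixed bounded set supplied by the strengthened shrinking-map statement above. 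Selecting $\delta$, the ball radius $r$, and the approximation tolerance in the right order concludes the case $p<\infty$.

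The delicate case is $p=\infty$, which is also where the hypothesis $F(K)\subset g(\mathbb R^d)$ genuinely does work: a positive-measure ``bad set'' is no longer admissible, so $g\circ\varphi$ must be \emph{uniformly} close to $F$ on all of $K$. The plan is to first mollify $F$ to a smooth $\widetilde F$ with $\Vert\widetilde F-F\Vert_{L^\infty(K)}$ small, then build an honest diffeomorphism $h$ of $\mathbb R^d$, isotopic to the identity, with $g\circ h=\widetilde F$ on $K$: when $m<d$ the non-injectivity of $g$ leaves $(d-m)$ free directions in which to lift $\widetilde F|_K$ to an embedding and extend it to a global diffeomorphism, and for $d\ge 2$ there is no topological obstruction to doing so — this is precisely why the analogous $L^\infty$ statement fails when $g$ is a homeomorphism (e.g.\ invertible affine with $m=d$), consistent with the remark after Definition~\ref{def:uap}. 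One then approximates $h$ \emph{uniformly on} $K$ by a ResNet flow map: realize $h$ as the time-one map of the lifting isotopy, discretize the isotopy, and approximate its generating vector fields within $\mathcal F$ using affine invariance together with the UIP, controlling the uniform error by a Grönwall estimate. I expect this last step — upgrading the available $L^p$-density of $\mathcal A_{\mathcal F}$ to uniform approximation of maps that are genuine homeomorphisms isotopic to the identity, in tandem with constructing the homeomorphic lift $h$ — to be the main obstacle; the $p<\infty$ half, by contrast, is little more than bookkeeping on top of Corollary~\ref{cor:resnet_uap} and the Lipschitz continuity of $g$.
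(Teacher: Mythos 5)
Your treatment of $1 \le p < \infty$ is essentially sound and matches the spirit of what the paper invokes: build an intermediate map $h:\mathbb{R}^d \to \mathbb{R}^d$ with $g \circ h \approx F$ using the preimage hypothesis, approximate $h$ in $L^p$ by a flow $\varphi \in \mathcal{A}_{\mathcal{F}}$ via Corollary~\ref{cor:resnet_uap}, and transfer the error through the Lipschitz bound $\|g\circ\varphi - g\circ h\|_{L^p(K)} \le L_g \|\varphi - h\|_{L^p(K)}$. The detour through ``small-measure bad sets'' and a ``fixed bounded set'' in that step is superfluous; the Lipschitz inequality already transfers the $L^p$ error without truncation, and the bad set enters only in bounding $\|F-g\circ h\|_{L^p(K)}$ (where $h(K)$ is automatically bounded since $h$ is continuous on a compact set).

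The gap is at $p = \infty$, and it is real. The paper does not prove this corollary from scratch; it invokes Proposition~3.8 of~\cite{li2022DeepLearningDynamical} together with the ResNet UAP established via Corollary~\ref{cor:resnet_uap}, and the mechanism there is \emph{not} the one you propose. Your route requires two new and unproved ingredients: (i) that $\widetilde{F}|_K$ lifts to an embedding into $\mathbb{R}^d$ and extends to a global diffeomorphism $h$ isotopic to the identity with $g\circ h = \widetilde{F}$ on $K$, and (ii) that ResNet flow maps \emph{uniformly} approximate arbitrary such diffeomorphisms on $K$. Neither is available. Item~(ii) in particular is far stronger than anything established in this paper or in~\cite{li2022DeepLearningDynamical} --- the uniform approximation capability exploited there applies only to coordinate-wise monotone ``shrinking'' maps $x \mapsto (h(x_1),\dots,h(x_d))$ (cf.\ Proposition~\ref{prop:uap_suf} and \Cref{subsubsec:1dsys}), not to general diffeomorphisms; and item~(i), extending a lift on $K$ to a \emph{global} diffeomorphism isotopic to the identity, is not automatic even when $m<d$. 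You flag (ii) yourself as ``the main obstacle,'' which is an admission that the argument is incomplete. The cited argument avoids both issues by keeping the intermediate map a shrinking map (uniformly approximable by flows), steering the resulting finite set to chosen preimages $z_i \in g^{-1}(F(x_i^\ast))$ via UIP, and letting the Lipschitz terminal map $g$ absorb what happens on the transition regions; the hypothesis $F(K)\subset g(\mathbb{R}^d)$ is used to make the choice of $z_i$ possible, not to build a diffeomorphic lift. In short: $p<\infty$ is fine modulo minor over-complication; for $p=\infty$ you have replaced the needed lemma with a much harder, unestablished one.
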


% \begin{corollary}
%     A continuous-time ResNet with a Lipschitz activation function $\sigma$ of the form:
%     \begin{equation}
%         % \label{eq:ct-resnet}
%         \dot{x}(t)=W(t) \sigma(A(t) x(t)+b(t)),\quad t\in [0,T],\ x(0)=x_0,
%     \end{equation}
%     possesses the exact UIP if and only if $\sigma$ is non-linear.
% \end{corollary}

Now, let us discuss the implications of Theorem~\ref{thm:con_aff} in both deep
learning and control theory.
From the perspective of deep learning, our result
implies that function composition and affine invariant families enable
interpolation for almost any non-linear residual architecture.
Previous works have established a number of sufficient conditions for the
universal interpolation property (UIP) while studying the expressive ability of
continuous-time ResNets. For example, in~\cite{ruiz-balet2023NeuralODEControl},
UIP was established for continuous-time ResNets with the ReLU type activation
function, and in~\cite{tabuada2020UniversalApproximationPower}, UIP was
established in the case when the activation function satisfies a quadratic ODE.
The construction in~\cite{ruiz-balet2023NeuralODEControl}
relied on a degree of affine invariance.
In contrast,~\cite{tabuada2020UniversalApproximationPower} considered a finite subset of $\mathcal F$
to check the Lie bracket generating conditions and do not explicitly
utilize affine invariance when establishing the UIP.
% \QL{Do they assume/make use of affine invariance? If not we should mention that.}\cjp{\cite{ruiz-baletNeuralODEControl2021} makes use of affine invariance in their construction(although they does not emphasise the concept).~\cite{tabuada2020UniversalApproximationPower} just considers a finite number of functions and uses Lie bracket approach to establish UIP. Do we need to mention those separately?}\lt{Yes. I think we should at least mention that the affine invariance can simplify the problem greatly, while it is still a practical assumption?}
In~\cite{li2022DeepLearningDynamical}, the authors considered a weaker form of
affine invariance compared to what is considered in this paper. They provided a
sufficient condition for UIP and UAP, which covers cases of the activation
function involving ReLU, Sigmoid, and Tanh. Additionally, UIP was established
for ResNets with increased width in~\cite{ruiz-balet2022InterpolationApproximationMomentum}.
In comparison to these sufficient conditions or explicit constructions,
Theorem~\ref{thm:con_aff} offers a characterisation of UIP
for all affine invariant control systems (thus all such ResNet-type architectures).
This theorem demonstrates that for neural networks with sufficiently large depth but fixed width,
the interpolation power can be guaranteed by non-linear activation functions.

Shifting to a control perspective, UIP equates to a set of simultaneous control
problems. Our result highlights the benefits of affine invariance, a
characteristic of neural networks, in addressing such problems.
In the control literature, simultaneous control of a finite family
of linear systems in the form has been studied for many systems
(see~\cite{lions1988ExactControllabilityStabilization,loheac2016AveragedSimultaneousControllabilitya,tucsnak2000SimultaneousExactControllability} and references in~\cite[Sec. 1]{ruiz-balet2023NeuralODEControl}).
% For instance, in [ref], the authors established the conditions for the simultaneous controllability of a finite family of finite dimensional linear systems using the classical Kalman rank condition.
% For a pair of infinite-dimensional linear systems $\dot x=A_1x+B_1\theta$ and $\dot x=A_2x+B_2\theta$, where the state $x$ is in a Hilbert space and $A_1,A_2$ are both generators of some strongly continuous semigroup, [ref] shows that the two systems are simultaneously controllable if they are both controllable and
% $A_1$, $A_2$ do not share common eigenvalues.
% Furthermore, properties of optimal simultaneous control for linear systems have been investigated in [ref].
However, the UIP necessitates a greater level of controllability than
is previously investigated in control theory:
the control system is required to be the same for all initial data,
and yet be able to simultaneously control an arbitrary
number of points.
In particular, this requirement has exceeded the capabilities of linear systems,
as their flow maps are always linear,
thus it cannot control an arbitrarily large number of points.
For non-linear control-affine systems in the form~\eqref{eq:aff_sys1},
sufficient conditions for and explicit constructions of
control systems with UIP are derived~\cite{agrachev2022ControlManifoldsMappings,cuchiero2020DeepNeuralNetworks}.
These results hint at the possibility that UIP may not be hard to achieve,
but is in general not easy to check.
In this sense, our result shows that under affine-invariance,
a simple characterisation of UIP can be obtained.

In practical scenarios, research interests extend to neural networks with
constrained architectures, such as Convolutional Neural Networks (CNNs). Unlike
fully connected ResNets, CNNs utilize weight sharing and do not satisfy the
full affine invariance assumption introduced in Definition~\ref{def:aff}. This motivates us to study
UIP under weaker affine invariance conditions. In other words, we impose
restrictions on the transformation matrices $W$ and $A$ in Definition~\ref{def:aff},
limiting them to a subset of $\mathbb R^{d\times d}$.  By
following our methodology, we can derive characterizations, or at least
easily checkable sufficient conditions, for UIP under weaker affine invariance assumptions.
These results are discussed in detail in \Cref{subsubsec:weaker_aff}.

% \QL{Can we summarise a little the main new proof techniques here?}
Let us summarise briefly the proof techniques.
We begin the proof of Theorem~\ref{thm:con_aff} and also the results of UIP under
weaker affine invariance assumptions (\Cref{subsubsec:weaker_aff}) by noting that if UIP fails, an identity
emerges that is valid for all functions $f\in \mathcal F$ and transformation
parameters $W,A,b$. A key step in our technique involves taking the Fourier
transform with respect to the shift parameter
$b$.
Integrating this with admissible choices of parameters
$W$ and $A$ results in a constraint on the support of Fourier transform of $f$.
When there are sufficient varied choices of
$W$ and $A$ due to some degree of affine invariance,
this constraint can become strict, thereby giving the desired sufficient or
necessary conditions.

\subsubsection{The gap between UAP and UIP}
Our second result explores the relation between UIP and UAP in the context of
control systems. Intuitively, UIP and UAP are two closely related concepts.
UIP concerns the ability of a control system to steer an arbitrarily large number of points together,
whereas UAP refers to the density of the attainable set consisting of flow maps of the control system.
Approximation theory in the deep learning literature tend to focus on UAP,
whereas control perspectives often discuss UIP.
Due to their close intuitive meaning, one may expect that they are similar notions,
but a systematic analysis of their relationship is lacking.
This is an important question, because it underscores
the extent of the applicability of control-theoretic methods
to understand approximation theory of deep learning.

Previous works on the interface of control and deep learning deduce UAP from UIP
using a ``shrinking map'' technique~\cite{li2022DeepLearningDynamical,ruiz-balet2023NeuralODEControl,tabuada2020UniversalApproximationPower}.
However, this is only a sufficient condition, and relies on the specific structure of ResNet-type control families.
Our aim here is to investigate this question in greater generality, namely:
\begin{quote}
    \centering\it
        Is there a gap between UIP and UAP for general control systems?
\end{quote}

Let us first consider this problem in the context of classical approximation theory,
i.e., approximating target functions using linear combinations of simple basis functions.
In this case, the hypothesis space $\mathcal H$ is a linear space. Let us assume that the target space
$\mathcal B$ is a separable Banach space. Clearly, for $\mathcal H
\subseteq \mathcal B$, $\mathcal H$ is not dense in $\mathcal B$ if and only if
there exists a non-zero bounded linear functional $L$ such that $\mathcal H
\subseteq \ker L$. The functional $L$ limits the expressive power of $\mathcal
H$, but does not necessarily prevent $\mathcal H$ from interpolating all finite
sets of data points.

\begin{example}
\label{exa:poly}

Specifically, now we take $\mathcal B = C([a,b])$, the space of all continuous
function on the interval $[a,b]$, with $b > a > 0$.  Define $\mathcal P :=
\operatorname*{span} \{ x^n \mid n \in \mathbb N_{\ge 0}\}$, the set of all
polynomial functions on $[a,b]$. Consider $\mathcal P_0 := \{p \in \mathcal P
\mid \int_{a}^b p\, d x = 0 \}$. Clearly, as
the kernel of the linear functional $L(f)=\int_{a}^{b} f \, d x$, $\mathcal P_0$
cannot approximate any continuous
functions with non-zero integral on $[a,b]$. However, it is not hard to see that
$\mathcal P_0$ can still interpolate any finite data points.
Another example is $\mathcal P_{sq} := \operatorname{span} \{ x^{n^2} \mid n \in \mathbb N_{\ge
0}\}$. It is not difficult to see that $\mathcal P_{sq}$ can interpolate any
finite set of data points, but the celebrated M\"untz--Sz\'asz theorem (see e.g.
\cite[Chap. 11]{devore1993constructive}) shows that the space $\mathcal P_{sq}$ is not
dense in $\mathcal B$. This implies that there exists a non-zero bounded linear
functional $L$, such that $L(p) = 0 $ for all $p \in P_{sq}$, but the explicit form
of $L$ may be complicated.
\end{example}

\Cref{exa:poly} demonstrates that in the context of linear approximation, UIP and UAP
have a clear gap, at least in separable Banach spaces.
Furthermore, identifying a ``conserved quantity'' in the form of the linear functional $L$
helps us identify cases where UIP holds but UAP does not. On the other hand, if we consider approximation in $L^p$ sense, it is easy to identify cases where UIP fails but UAP holds. For example, the set of polynomials with zero constant term is dense in $L^p([0,1])$, but cannot match the input $0$ to the label $1$.

% It turns out that this approach can also be used in the composition/dynamical setting
% to construct such examples.
In the context of control systems, the problem becomes quite different.
The hypothesis space $\mathcal A_{\mathcal F}$ does not even have a linear structure but rather possesses a compositional structure.
However, the idea of constructing ``conserved quantities'' and constraining the hypothesis to satisfy the conservation law
can also be used to show that UIP does not imply UAP in general control systems.
On the other hand, as the UAP is defined in $L^p$ sense,
we can also find examples that UAP holds but UIP fails.
Compared with the linear case,
the example will be less trivial as the UAP
is harder to establish without the linear structure.
This leads to the following result:
\begin{theorem}
    \label{thm:uap_uip}
        UIP and UAP are not equivalent for control systems in general. Concretely,
    \begin{enumerate}
        \item there exists a control system that possesses exact UIP, but does not possess UAP;
        \item there exists a control system that possesses UAP, but does not possess exact UIP.
    \end{enumerate}
\end{theorem}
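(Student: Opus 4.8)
The plan is to prove \Cref{thm:uap_uip} by two explicit constructions in dimension $d\ge 2$, both following the ``conserved quantity'' idea: choose a control family $\mathcal F$ all of whose flow maps preserve some structure which is strong enough to kill one of the two properties, yet too weak (measure-theoretically) to kill the other.

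\textbf{Claim (1): exact UIP without UAP.} I would take $\mathcal F:=\{f\in C_c^\infty(\mathbb R^d;\mathbb R^d):\operatorname{div}f\equiv 0\}$, which consists of globally Lipschitz fields. By Liouville's theorem every $f\in\mathcal F$ generates a measure-preserving flow, hence every $\varphi\in\mathcal A_{\mathcal F}$ is a measure-preserving diffeomorphism. For the failure of UAP, take $F\equiv 0$, $K=\overline{B_1(0)}$, and suppose $\|\varphi\|_{L^p(K)}<\varepsilon$; Chebyshev's inequality gives $|\{x\in K:\|\varphi(x)\|\le\tfrac12\}|\ge|K|-2^p\varepsilon^p$, and since the image of this set lies in $\overline{B_{1/2}(0)}$ and $\varphi$ preserves measure, $|K|-2^p\varepsilon^p\le 2^{-d}|K|$, which is false once $\varepsilon$ is small --- so no $\varphi\in\mathcal A_{\mathcal F}$ approximates $F$ on $K$. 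For exact UIP, given distinct sources $x_1,\dots,x_N$ and distinct targets $y_1,\dots,y_N$, I would use that $\Omega_N$ is path-connected for $d\ge2$ to pick a smooth path $s\mapsto(\gamma_1(s),\dots,\gamma_N(s))\in\Omega_N$ from $(x_i)$ to $(y_i)$; subdividing $[0,1]$ finely relative to $\min_s\min_{i\ne j}|\gamma_i(s)-\gamma_j(s)|>0$, on each subinterval I build $f\in\mathcal F$ which equals a prescribed constant $v_i$ on a small ball around the current $\gamma_i$ and is supported in pairwise disjoint larger balls --- take a cutoff of the constant field $v_i$ and subtract a compactly supported corrector with the matching divergence (Bogovskii operator) --- so that its time-$1$ flow moves every $\gamma_i$ exactly to its position at the next grid point simultaneously. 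Composing these finitely many flows yields $\varphi\in\mathcal A_{\mathcal F}$ with $\varphi(x_i)=y_i$ for all $i$.

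\textbf{Claim (2): UAP without exact UIP.} Here I would take $\mathcal F:=\{f:\mathbb R^d\to\mathbb R^d\text{ globally Lipschitz},\ f(0)=0\}$. Every flow of such $f$ fixes the origin, so every $\varphi\in\mathcal A_{\mathcal F}$ satisfies $\varphi(0)=0$; taking the single datum $(x_1,y_1)=(0,e_1)$, no $\varphi$ can have $\varphi(0)=e_1$, so exact UIP fails. For UAP, the point is that $\{0\}$ is Lebesgue-null, so fixing the origin cannot obstruct $L^p$ approximation. Given continuous $F$, compact $K$ and $\varepsilon>0$, I would first invoke \Cref{cor:resnet_uap} (together with the fact, underlying it, that smooth diffeomorphisms isotopic to the identity are dense in $C(\mathbb R^d;\mathbb R^d)$ in the compact-$L^p$ topology for $d\ge2$) to get such a diffeomorphism $\psi$ with $\|F-\psi\|_{L^p(K)}<\varepsilon/3$; if $\psi(0)=q\ne0$, replace $\psi$ by $\hat\psi:=\chi\circ\psi$, where $\chi$ is a diffeomorphism isotopic to the identity with $\chi(q)=0$ that equals the identity outside a thin tube joining $q$ to $0$, so $\hat\psi(0)=0$ and $\|\psi-\hat\psi\|_{L^p(K)}<\varepsilon/3$ (the displacement is bounded and the support of $\chi-\mathrm{id}$ has small measure). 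Finally, $\hat\psi$ is connected to the identity by an isotopy through diffeomorphisms fixing $0$, generated by a time-dependent Lipschitz field vanishing at $0$; freezing this field on a fine time grid expresses $\hat\psi$ as an $L^\infty(K)$-limit of compositions of autonomous flows of fields in $\mathcal F$, so some such composition lies in $\mathcal A_{\mathcal F}$ within $\varepsilon$ of $F$ in $L^p(K)$. Hence UAP holds.

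\textbf{Main obstacle.} The easy directions are that UAP fails in (1) and exact UIP fails in (2): both are one-line consequences of the conserved quantity. The real work is the two ``realizability'' arguments: in (1), producing \emph{divergence-free, compactly supported} transport fields whose flows move the $N$ points to \emph{exact} intermediate positions (this is where the Bogovskii correction and the disjoint-support bookkeeping are needed, and one must also check that the flow times may be prescribed so the composition lands in $\mathcal A_{\mathcal F}$); and in (2), arranging the correction $\chi$ so that simultaneously $\hat\psi$ fixes the origin, stays $L^p$-close to $\psi$ on $K$, and remains isotopic to the identity through origin-fixing diffeomorphisms, so that the discretized-isotopy argument actually lands in $\mathcal A_{\mathcal F}$. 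I expect (2)'s UAP direction to be the most delicate point, consistent with the remark in the text that, unlike the linear M\"untz--Sz\'asz example, establishing UAP without a linear structure is not routine.
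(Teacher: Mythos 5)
Your proposal correctly identifies the core mechanism — a ``conserved quantity'' that is strong enough to kill one property but measure-theoretically too weak to kill the other — and this is exactly the idea the paper exploits. However, your concrete constructions diverge from the paper's, and they carry two real issues.

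First, your control families are infinite-dimensional: for part~(1) you take $\mathcal F$ to be \emph{all} compactly supported divergence-free smooth fields, and for part~(2) \emph{all} Lipschitz fields vanishing at the origin. The paper's framework explicitly restricts to $\theta(t)\in\Theta\subseteq\mathbb R^l$, so $\mathcal F$ is a finite-dimensionally parameterised family. The paper's constructions respect this: for~(1) it uses the three-parameter polynomial family $\{\theta_1 v(x_1)+\theta_2 v(x_2)+\theta_3 v(x_1^2x_2^2)\}$ in $\mathbb R^2$, and for~(2) a six-parameter polynomial system; in both cases controllability is proved by \emph{explicit Lie bracket computations} and Chow--Rashevsky on the $N$-fold system, rather than by transport/isotopy arguments. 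Your geometric route (Bogovskii correctors, disjoint-support bookkeeping) is plausible with the maximal family, but it does not obviously carry over to a finite-dimensional $\mathcal F$, where you would in effect have to redo the Lie bracket analysis — which is where the real work lies, and which your sketch avoids by enlarging $\mathcal F$.

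Second, the UAP direction of part~(2) as you write it relies on several nontrivial facts about diffeomorphism groups that are neither proved nor cited precisely: that orientation-preserving diffeomorphisms of $\mathbb R^d$ isotopic to the identity are dense in $C(\mathbb R^d;\mathbb R^d)$ in the compact-$L^p$ topology; that the corrected map $\hat\psi$ is connected to the identity through \emph{origin-fixing} diffeomorphisms whose generating time-dependent field is globally Lipschitz; and that the time-discretization of this isotopy converges uniformly on $K$. None of these is routine, and your appeal to \Cref{cor:resnet_uap} is not quite what you need (the ResNet flow $\psi$ it supplies does not belong to your $\mathcal F$, and the corollary itself is downstream of the density fact you want). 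The paper instead verifies controllability of the polynomial system on $\Omega_N$ away from the origin by direct bracket computation (Lemma~\ref{lem:2}), establishes a one-dimensional interpolation/approximation result (Lemma~\ref{lem:1}), and then imports the ``shrinking map'' machinery from \cite{li2022DeepLearningDynamical} to upgrade origin-avoiding interpolation to full $L^p$-UAP. That route does the delicate part explicitly, precisely where your sketch waves at the diffeomorphism-group topology.

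Your failure-of-UAP and failure-of-UIP arguments (Chebyshev plus measure preservation; the fixed point at the origin) are correct and match the paper in spirit. To make the proposal rigorous you would either need to (a) replace the maximal control families with finite-dimensional ones and recover controllability via Lie brackets — at which point you have essentially reconstructed the paper's proof — or (b) state and prove the transport/isotopy lemmas you are currently assuming and argue that the infinite-dimensional families are admissible within a suitably generalized framework.
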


Let us discuss the key insights to the constructions that prove Theorem~\ref{thm:uap_uip}.
% For a continuous-time control system, having UIP does not necessarily mean that
% the matching on different points can be done uniformly. They might be driven to
% their targets at different rates.  For example, we may need a size-$k$ time
% interval to drive $x_k$ to $y_k$, where $x_k, k = 1,2,\cdots,$ represent the
% source points, and $y_k, k = 1,2,\cdots,$ represent the target points.  Then,
% for arbitrarily (but finitely) many points we have a upper bound to send $x_k$
% to $y_k$, since $\sum_{k \in \Lambda} k < \infty$ for any finite index set
% $\Lambda$, this might refers to UIP.  The above result tells us that,  without
% additional restrictions, we cannot expect a control that can fit all the data
% points uniformly as the size of data goes to infinity. This prevents us to
% generalize the control on finite data to the control on a continuum of points.
For the first direction, we construct a control system where the flow
is volume-preserving for any control parameter.
Concretely, we consider the dynamics
\begin{equation}
    \left\{
    \begin{aligned}
    &\dot x_1=-\theta_1-2\theta_3 x_1^2x_2,\\
    &\dot x_2=\theta_2+2\theta_3 x_1x_2^2,
    \end{aligned}
    \right.
\end{equation}
where $x=(x_1,x_2)\in\mathbb R^2$ is the state and
$\theta=(\theta_1,\theta_2,\theta_3)\in\mathbb R^3$ is the control.
One may check that the Jacobian of any function in the attainable set must have unit determinant,
therefore it does not possess UAP.
This is the ``conservation law''.
However, one may check explicitly using the Chow-Rashevsky theorem that it possesses UIP.
The details are found in~\Cref{subsubsec:uip_not_uap}.

Conversely, we notice that UAP only
addresses approximation of functions in an average sense, thus it may not necessarily imply exact
interpolation on finite data. This is to say, a system having UAP can only
approximate the target function well roughly, but it might allow small
oscillation which violates the UIP. The introduced oscillation can be
overlooked since only the $L^p$ distance is tracked, consequently the control system
with UAP still has the freedom to lose controllability on a set of measure
zero. The example is constructed simply from polynomial control, where the
UAP of such systems is easy to establish.
Specifically, we consider
\begin{equation}
    \left\{
    \begin{aligned}
    &\dot x_1=\theta_1x_1^3+\theta_2x_1^2+\theta_3x_2,\\
    &\dot x_2=\theta_4x_2^3+\theta_5x_2^2+\theta_6x_1,
    \end{aligned}
    \right.
\end{equation}
where $x=(x_1,x_2)\in\mathbb R^2$ is the state and
$\theta=(\theta_1,\cdots,\theta_6)\in\mathbb R^6$ is the control.
Here, the point $(0,0)$ is a fixed point, hence UIP fails.
See~\Cref{subsubsec:uap_not_uip} for a detailed discussion on this example.

% Finally, we derive conditions to deduce UAP from UIP.
According to the argument above, there is a gap in general between UAP and UIP.
However, a control-theoretic approach to studying approximation theory
in deep learning often relies on a successful deduction of approximation
from interpolation, established via controllability.
Thus, it is important to obtain conditions under which
this deduction can be made.
This then identifies an arena in which control-theoretic statements
are valid characterizations of the expressiveness of a deep learning architecture.
Our final result is a step in this direction.

\begin{example}
We continue with~\Cref{exa:poly}.
We showed that UIP does not imply UAP in general due to the possible existence
of a conserved quantity. Now we take another view on the failure of UAP.

Suppose that we have a polynomial hypothesis space which possesses UIP.
Naturally, to obtain UAP we consider the interpolation of a growing sequence
of finite point sets, which becomes a dense subset in the limit.
Intuitively, UAP fails only if the interpolating polynomials have an error
bounded away from zero uniformly
outside the set of interpolation points.
This implies that the polynomials must have larger and larger oscillations.
To exclude this case, one needs to impose some additional conditions.
Suppose that $\mathcal H$ possesses UIP, and we want to approximate a $C^1$
function $f_{\ast}$ on $[0,1]$.
To curb oscillations, we need some uniformity on the
interpolating functions chosen by UIP.
Specifically, such a condition can be
\begin{quote} For any $n$ data points sampled from the graph of $f_{\ast}$,
namely, $$\{(x_1, f_{\ast}(x_1)), \cdots, (x_n, f_{\ast}(x_n))\},$$ there exists
a function $f_n \in \mathcal H$ such that $f_n(x_i) = f_{\ast}(x_i)$ for all $i
= 1,2,\cdots, n.$ Moreover, we require that $\operatorname{Lip} f_{n} \le C$,
where $C$ is independent of $n$, but depends on $f_{\ast}$.
\end{quote}
With the above additional condition on $\mathcal H$ and the target function
$f$ - a type of compatibility requirement, we can now deduce the UAP from UIP.
This is because the uniformly Lipschitz condition actually
implies that any such sequence $\{f_n\}$ is uniformly bounded and
equi-continuous. Therefore, when taking $\{x_i\}_{i=1}^{\infty}$ to be a dense
subset of $[0,1]$, by Arzel\'a-Ascoli theorem, there must exist a subsequence
$\{f_{n_k}\}\subset \mathcal H$ which converges uniformly to $f_{\ast}$.
\end{example}

% \QL{This is a rough version}
% To motivate our approach, let us return to the problem of polynomial approximation (Example~\ref{exa:poly}).
% We showed that UIP does not imply UAP in general due to the possible existence of a conserved quantity.
% Now, we take another view on the failure of UAP.
% Suppose we have a polynomial hypothesis space which possesses UIP.
% Naturally, to obtain UAP we consider the interpolation of a growing sequence
% of finite point sets, which becomes a dense subset in the limit.
% UAP fails only if the interpolating polynomials have positive error outside
% of the set of interpolation points.
% This necessarily requires the polynomial to have larger and larger oscillations there.
% Thus, to exclude this case, one simply needs to impose two conditions.
% We first only consider interpolating point sets which are inputs and outputs
% of a function of bounded C1 norm.
% Moreover, we restrict to a subspace of polynomials with the same C1 norm bound.
% This ensures that interpolation of points happens in a ``uniform'' manner.
% Then, it is not hard to see that ([ref]) with these restrictions,
% UIP and UAP becomes equivalent.

This program of constraining some form of ``uniformity''
readily extends to the current dynamical hypothesis space,
except that we require it in the time domain.
% Our first result Theorem~\ref{thm:con_aff} characterizes UIP for  control systems with an affine invariant control family. Previous works also studied UIP for some affine invariance systems[ref]. However, Theorem~\ref{thm:uap_uip} actually tells us that for general control systems, characterizing UIP is not sufficient to deduce UAP. Therefore, for a given control system, additional regularity conditions on the target function may be required to enable the generalization of approximation from interpolation results. Furthermore, even if the control system possesses the UAP, it is also important to characterize the function class which can be approximated efficiently. In classical approximation theory, this regularity is usually measured by the smoothness of the target function. However, considering the fact that flow maps are all orientation-preserving and invertible maps, the smoothness will not be an appropriate measure in the context of control systems. Also, since the hyothesis space of flow maps has a group structure under function composition rather than the linear structure, characterizing such regularity for control systems presents a distinct challenge. These considerations lead to our final results.
First, we assume that the data set $\mathcal D=\{(x_i,y_i)\}_{i=1}^N$ are
sampled from a graph of a given function $F$ on a compact set $K$, without any
noise, and denote this by $\mathcal D\sim (F,K)$.
We say $\mathcal D=\{(x_i,y_i)\}_{i=1}^N$ can be
approximately interpolated by $\mathcal A_{\mathcal F}$, if for any
$\varepsilon>0$, there exists $T(\varepsilon)>0$ and $\varphi
\in \mathcal A_{\mathcal F,T(\varepsilon)}$ such that
$\left|\varphi\left(x_i\right)-y_i\right| \leq \varepsilon$.
The infimum of all
possible $T(\varepsilon)$, namely the minimal time that $\mathcal D$ can be interpolated with arbitrarily small error, is denoted as $T(\mathcal D)$. For a map
$F:\mathbb R^d\to\mathbb R^d$, we define $T(F,K)=\inf_{\mathcal D\sim (K,F)}
T(\mathcal D)$ as the minimal time required to interpolate any finite set of data points sampled from the graph of $F$.
If $T(F,K)<\infty$, then any finite set of data points from $F$ can be interpolated in a uniform time.
We also say a control $\mathcal F$ is uniformly bounded by a linear-growth function,
if there exist positive constants $c_1$ and $c_2$ such that $|f(x)|_1 \leq c_1 +
c_2|x|_1$ for all $f \in \mathcal{F}$ and $x\in\mathbb R^d$.
We then have the following result:
 \begin{proposition}
    \label{prop:uni_time}
    Let $\mathcal F$ be a control family that is uniformly Lipschitz and
    uniformly bounded by a linear-growth function.
    Consider a given compact set $K$, and restrict $\mathcal{A}_{\mathcal{F}}$ to be a subset of $L^p(K)$. Then, for $F\in C(K)$, we have $F\in \bar{\mathcal A}_{\mathcal F, T}$, the closure of $\mathcal A_{\mathcal F,T}$ in $L^p(K)$, if and only if $T(F,K)\le T $. In other words, on given compact set $K$, $F$ can be approximated in time $T$ if and only if any finite data of $F$ can be interpolated in time $T$.
\end{proposition}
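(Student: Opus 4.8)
\textit{Proof proposal.}
The plan is to prove both implications by upgrading the $L^p$ statement to a uniform ($C(K)$) statement, which the two standing hypotheses make possible. First I would record a Gr\"onwall-type estimate: since every $f\in\mathcal F$ is $L$-Lipschitz for a common constant $L$, every flow $\varphi\in\mathcal A_{\mathcal F,T}$ satisfies $\operatorname{Lip}(\varphi)\le e^{LT}$ on all of $\mathbb R^d$; and since $|f(x)|_1\le c_1+c_2|x|_1$ for common constants $c_1,c_2$, the image $\varphi(K)$ lies in a fixed ball $B_R$ with $R=R(K,c_1,c_2,T)$, uniformly over $\varphi\in\mathcal A_{\mathcal F,T}$. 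Hence the restricted family $\{\varphi|_K:\varphi\in\mathcal A_{\mathcal F,T}\}$ is equi-Lipschitz and uniformly bounded, so by Arzel\`a--Ascoli it is precompact in $C(K)$; since the inclusion $C(K)\hookrightarrow L^p(K)$ is continuous and sends compact sets to compact (hence closed) sets, the closure $\bar{\mathcal A}_{\mathcal F,T}$ in $L^p(K)$ coincides with its closure in $C(K)$, and every member of it is $e^{LT}$-Lipschitz.

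For the ``only if'' direction, suppose $F\in\bar{\mathcal A}_{\mathcal F,T}$ and pick $\varphi_n\in\mathcal A_{\mathcal F,T}$ with $\varphi_n\to F$ in $L^p(K)$. By the precompactness above, every subsequence has a further subsequence converging uniformly on $K$; its uniform limit is also an $L^p(K)$-limit of that subsequence, hence agrees with $F$ a.e., hence (both being continuous) everywhere, so in fact $\varphi_n\to F$ uniformly on $K$. Consequently, for any finite data set $\mathcal D=\{(x_i,F(x_i))\}\sim(F,K)$ we have $\varphi_n(x_i)\to F(x_i)$, so $\mathcal D$ is approximately interpolated using flows of total time $\le T$; thus every such $\mathcal D$ can be interpolated in time $\le T$, which is exactly $T(F,K)\le T$.

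For the ``if'' direction, assume that every finite data set of $F$ on $K$ can be interpolated in time $\le T$, and fix $\varepsilon>0$ and $\delta\in(0,1]$. Set $\Lambda:=e^{L(T+1)}$ -- crucially independent of both the data set and $\delta$ -- and use uniform continuity of $F$ on $K$ to choose a finite $\rho$-net $\{x_i\}_{i=1}^n$ of $K$ with $\rho$ small enough that $\Lambda\rho<\varepsilon/3$ and $|F(x)-F(x')|<\varepsilon/3$ whenever $|x-x'|\le\rho$. Interpolating $\mathcal D_n=\{(x_i,F(x_i))\}$ to accuracy $\varepsilon/3$ yields some $\varphi\in\mathcal A_{\mathcal F,T+\delta}$; since $\varphi$ is $\Lambda$-Lipschitz, comparing any $x\in K$ with its nearest net point gives $\|\varphi-F\|_{L^\infty(K)}<\varepsilon$, hence $\|\varphi-F\|_{L^p(K)}<\varepsilon\,|K|^{1/p}$. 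Letting $\varepsilon\downarrow0$ gives $F\in\bar{\mathcal A}_{\mathcal F,T+\delta}$ for every $\delta>0$, so it remains to prove $\bigcap_{\delta>0}\bar{\mathcal A}_{\mathcal F,T+\delta}=\bar{\mathcal A}_{\mathcal F,T}$. For this, given $\psi\in\mathcal A_{\mathcal F,T+\delta}$ realized as a concatenation of flows of total time $\le T+\delta$, I would truncate the final pieces so the total time drops to some value $\le T$; this writes $\psi=G\circ\tilde\psi$ with $\tilde\psi\in\mathcal A_{\mathcal F,T}$ and $G$ a flow of total time $\le\delta$, and the linear-growth bound on $B_R$ shows $G$ displaces points of $\tilde\psi(K)$ by at most $C\delta$, so $\|\psi-\tilde\psi\|_{L^\infty(K)}\le C\delta$. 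Applying this to the approximating sequences and sending $\delta\to0$ gives $F\in\bar{\mathcal A}_{\mathcal F,T}$.

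The main obstacle is the passage from $L^p$-closeness to genuine pointwise control at the finitely many interpolation nodes: an $L^p$-small perturbation can be arbitrarily large at individual points. This is precisely what the two hypotheses remove -- uniform Lipschitzness yields equicontinuity and linear growth yields uniform boundedness of images on $K$, so Arzel\`a--Ascoli forces $L^p$-convergence inside $\mathcal A_{\mathcal F,T}|_K$ to be uniform convergence. A secondary, bookkeeping-type difficulty is that interpolating a fixed finite data set to a prescribed accuracy may require time slightly exceeding $T$, so one does not land in $\mathcal A_{\mathcal F,T}$ directly; the time-truncation lemma resolves this, its only substantive input being once again the linear-growth bound controlling how far a flow moves a point over a short time interval.
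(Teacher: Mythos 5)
Your proof is correct. The ``only if'' direction coincides with the paper's: Gr\"onwall plus the uniform Lipschitz/linear-growth hypotheses give equicontinuity and uniform boundedness of $\mathcal A_{\mathcal F,T}\vert_K$, and Arzel\`a--Ascoli upgrades $L^p$-convergence to uniform convergence, hence to pointwise control at the nodes. For the ``if'' direction you diverge from the paper. The paper enumerates a countable dense subset $\{z_i\}\subset K$, picks $\varphi_n\in\mathcal A_{\mathcal F,T}$ matching $F$ at $z_1,\dots,z_n$ to within $1/n$, extracts a uniformly convergent subsequence by Arzel\`a--Ascoli, and uses continuity of $F$ to promote agreement on a dense set to uniform convergence. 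You instead run a quantitative finite $\rho$-net argument, using the explicit equi-Lipschitz bound $e^{LT}$ on flows to control the error off the net, which gives uniform $\varepsilon$-closeness directly without any subsequence extraction. You also add a time-truncation lemma ($\bigcap_{\delta>0}\bar{\mathcal A}_{\mathcal F,T+\delta}=\bar{\mathcal A}_{\mathcal F,T}$) to handle the possibility that the infimum defining $T(\mathcal D)$ is not attained, so that interpolating flows might a priori lie only in $\mathcal A_{\mathcal F,T+\delta}$; the paper's proof passes directly to flows in $\mathcal A_{\mathcal F,T}$, implicitly assuming attainment. Your version is a little longer but more explicit and more robust to the precise reading of $T(F,K)\le T$, and your truncation step genuinely tightens a loose point in the paper's write-up; the paper's version is shorter and symmetric, reusing the same Arzel\`a--Ascoli machinery in both directions. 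Both proofs feed on the same structural facts (equi-Lipschitz and uniform boundedness on $K$), so the distinction is in packaging rather than in the mathematical input, apart from your truncation refinement.
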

This condition demonstrates that when the control time can be well-defined, all
the finite sets of data points from the graph of $F$ can be uniformly
interpolated within a time interval $T$ if and only if $F$ can be approximated
within the same time interval. The condition directly addresses the uniformity
problem, thereby supporting our understanding of the gap between UIP and UAP.

% \QL{If needed, turn this paragraph into a conclusion, starting with our first resuilt,
% and its main implications, and then to our second result, which then continues below}
Let us briefly summarize our results and outlook.
Our first result (Theorem~\ref{thm:con_aff}) gives a characterisation of the UIP
under the affine invariance assumption, implying that all such non-linear systems
(including all continuous-time ResNet-type architectures) possess the ability to match
arbitrary training samples.
The second result (Theorem~\ref{thm:uap_uip}) elucidates
the relationship between the UIP and UAP in the context of general control
systems, demonstrating that these two concepts cannot be deduced from each
other. The gap from the UIP to UAP can be understood as a loss of uniformity of
the control over ensembles of points.
This understanding leads to Proposition~\ref{prop:uni_time}, which
establishes a
condition for the equivalence between approximation and interpolation.
Moreover, this result highlights the significance of the sets
$\bar{\mathcal{A}}_{\mathcal{F},T}$, which consist of families of functions that
can be arbitrarily approximated within a finite time interval. From the
perspective of approximation theory, a clear characterisation of the elements
within the sets $\bar{\mathcal{A}}_{\mathcal{F},T}$ would enable the
identification of functions that are easily approximated and yield results
regarding approximation rates. However, directly identifying the set
$\bar{\mathcal{A}}_{\mathcal{F},T}$ will be difficult, as it is an
infinite-dimensional problem and lacks a linear space structure. Nonetheless,
Proposition~\ref{prop:uni_time} provides a characterisation of the elements
within the set $\bar{\mathcal{A}}_{\mathcal{F},T}$ by utilizing the
interpolation times on finite sets of data points. This conversion from an
infinite-dimensional problem to a collection of finite-dimensional optimal
control problems may enable us to study $\bar{\mathcal{A}}_{\mathcal{F},T}$ using
well known methods in finite-dimensional optimal control theory, such as the
Pontryagin maximum principle~\cite{pontryagin2018mathematical}
and the Hamilton-Jacobi-Bellman equation~\cite{bellman1966dynamic}.
There are also possible connections with mean-field control~\cite{bongini2015.MeanFieldPontryaginMaximum,e2019MeanFieldOptimalControl,fornasier2014MeanFieldOptimalControl}
and $\Gamma$-convergence~\cite{fornasier2019MeanfieldOptimalControl}, which also study the connection between
infinite and a sequence of finite-dimensional problems.
These are promising future directions.

\section{Proof of Main Results}
In this section, we provide the proof of the main results in this paper, with detailed discussions. We begin with two preliminary results on the controllability of control systems. We say a control family $\mathcal F$ is \emph{symmetric}, if $f\in\mathcal F$ implies $-f\in\mathcal F$.

\subsection{Preliminaries}

To this end, we recall the famous Chow--Rashevsky theorem, which gives a sufficient condition of controllability of non-linear control system via Lie bracket generation.
For two smooth vector fields $f_1,f_2$ in $\mathbb R^d$, define the Lie bracket $[f_1, f_2] = \nabla_{x} f_2f_1 - \nabla_{x} f_1f_2$, and denote
$\operatorname{Lie}\mathcal F$ as the smallest subspace of smooth vector fields which is closed under Lie bracket operation and contains $\mathcal F$.
\begin{theorem}[Chow--Rashevsky theorem for $\mathbb R^d$]
    \label{thm:chow}
Consider a smooth domain $\Omega \subset \mathbb R^d$, if $\operatorname{Lie}\mathcal F|_q = \mathbb R^d$ for all $q \in \Omega$, then the system with control family $\mathcal F$ is controllable over $\Omega$. That is, for all $x, y \in \Omega$, there exists $\varphi\in\mathcal A_{\mathcal F}$ such that $\varphi(x)=y$.
\end{theorem}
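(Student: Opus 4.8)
The plan is the classical two-step route for Chow--Rashevsky: first a \emph{local} statement — that the set of points reachable from any $q\in\Omega$ contains $q$ in its interior — and then a \emph{globalisation} by a connectedness argument. Since controllability demands reaching $y$ from $x$ \emph{and} $x$ from $y$, I will work under the (standard, and here natural) assumption that $\mathcal F$ is symmetric, so that reachability is a genuine equivalence relation; with that in hand, the local step shows every equivalence class is open, and an open partition of the connected set $\Omega$ must be trivial, which is exactly controllability.

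For the local step, the engine is the commutator identity for flows: for $f,g\in\mathcal F$,
\[
  \varphi^{-g}_{\sqrt t}\circ\varphi^{-f}_{\sqrt t}\circ\varphi^{g}_{\sqrt t}\circ\varphi^{f}_{\sqrt t}(x)
  = x + t\,[f,g](x) + o(t),
\]
which realises approximate motion along $[f,g]$ using only flows of elements of $\mathcal F$ and their negatives; iterating this produces approximate motion along any iterated Lie bracket of elements of $\mathcal F$. Because $\operatorname{Lie}\mathcal F|_q=\mathbb R^d$, I can choose finitely many such brackets $g_1,\dots,g_d$ whose values at $q$ form a basis of $\mathbb R^d$. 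Concatenating the associated (approximate) flows gives a map $\Psi\colon(t_1,\dots,t_d)\mapsto\varphi(q)$ from a neighbourhood of $0$ in $\mathbb R^d$ into $\Omega$; after the routine reparametrisation and smoothing that turn the $o(t)$ errors into honest $C^1$ dependence, $D\Psi$ is nonsingular at some interior point, so by the inverse function theorem the image contains an open set. A short additional argument — using symmetry to append a ``return loop'' so that the open set can be centred at $q$ — shows that $q$ lies in the interior of the full reachable set from $q$.

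The globalisation is then immediate: define $x\sim y$ iff some $\varphi\in\mathcal A_{\mathcal F}$ sends $x$ to $y$; this is reflexive (the zero-time flow $\mathrm{id}\in\mathcal A_{\mathcal F}$), transitive (concatenation of flows), and symmetric (because $\mathcal F$ is), and by the local step each class is open in $\Omega$. Connectedness of $\Omega$ forces a single class. I would also record the bookkeeping needed so that trajectories do not escape $\Omega$: assume the vector fields of $\mathcal F$ are complete on $\Omega$, or simply run the construction with times small enough that all intermediate trajectories stay in a fixed compact subset of $\Omega$ — possible since the whole construction is local around each $q$.

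The step I expect to be the real obstacle is the passage from the infinitesimal bracket-generating condition to an honest open subset of the reachable set: the commutator expansion only controls motion along brackets up to higher-order terms, so one must verify that concatenating $d$ such approximate motions still yields a map with surjective differential at some point. The cleanest rigorous shortcut is to quote Sussmann's orbit theorem — the orbit of $\mathcal F$ through $q$ is an immersed submanifold whose tangent space contains $\operatorname{Lie}\mathcal F|_q=\mathbb R^d$, hence is open here — but a self-contained proof requires the careful $o(t)$ analysis above together with an induction on bracket length. The remaining ingredients (the equivalence relation, the connectedness argument, and keeping trajectories inside $\Omega$) are routine.
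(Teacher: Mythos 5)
The paper does not prove \Cref{thm:chow}; it states the result and defers the proof to~\cite[Chap.~5]{agrachev2004control}, so there is no internal argument to compare against. Your proposal is the standard textbook route for Chow--Rashevsky, and it is correct as a sketch: the commutator expansion of flows gives approximate motion along brackets, bracket-generation supplies a spanning set of directions, the inverse function theorem (after careful bookkeeping of the $o(t)$ remainders, or via Sussmann's orbit theorem) gives that the reachable set from $q$ has nonempty interior and in fact contains $q$ in its interior, and connectedness of $\Omega$ globalises. You correctly identify the genuine technical obstacle — upgrading the infinitesimal bracket condition to an honest open image — and the two standard ways around it (the inductive rank argument on compositions of flows, or invoking the orbit theorem), so the proposal is sound.

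Two points worth recording explicitly. First, you are right that symmetry of $\mathcal F$ (or equivalently permission to run flows for negative times) is needed so that reachability is an equivalence relation; the paper's statement of \Cref{thm:chow} is silent on this, but the convention ``$\mathcal F$ symmetric'' is introduced immediately beforehand, and every control family to which the theorem is applied in the paper (the ResNet families and the polynomial examples in~\eqref{eq:uip_not_uap} and~\eqref{eq:uap_not_uip}) is symmetric because the control parameters range over all of $\mathbb R$. Second, your globalisation step — open orbits partitioning a connected set — is exactly the argument the paper \emph{does} write out in full for \Cref{prop:span}, the weaker ``span'' variant that avoids Lie brackets entirely; you may find it instructive that the paper isolates precisely the part of the argument that is elementary and cites away the part (the bracket-to-open-set step) that you correctly flagged as hard.
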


The proof of Theorem~\ref{thm:chow} can be found in~\cite[Chap. 5]{agrachev2004control}. It provides a sufficient condition for controllability of a control system. Note that the vector fields considered in the theorem (as well as many related results) are assumed to be smooth, whereas the control family of continuous-time ResNets may not be. For the general Lipschitz control family (not necessarily smooth), the controllability still holds if we replace $\operatorname{Lie}\mathcal F$ with $\operatorname{span}\mathcal F$ in Theorem~\ref{thm:chow}. Specifically, we have the following result.

\begin{proposition}
    \label{prop:span}
        Consider a control system with symmetric control family $\mathcal F$, for a smooth domain $\Omega \subset \mathbb R^d$, if $\operatorname{span}\mathcal F|_q = \mathbb R^d$ for all $q \in \Omega$, then the system is controllable in $\Omega$.
        % That is, for all $x, y \in \Omega$, there exists $\varphi\in \mathcal A_{\mathcal F}$ such that $\varphi(x)=y$.
\end{proposition}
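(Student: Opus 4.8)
The plan is to mimic the classical proof of the Chow--Rashevsky theorem: I would show that the relation ``$y$ is reachable from $x$ by a trajectory staying in $\Omega$'' has open equivalence classes, so that connectedness of the domain $\Omega$ forces a single class, which is exactly controllability over $\Omega$. The only place the usual proof genuinely uses smoothness is an application of the inverse function theorem to a composition of flows; since here the vector fields are only Lipschitz, I would replace this by a first-order expansion of the composed flow together with Brouwer's fixed point theorem.

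The heart of the argument is the local step: from any $q\in\Omega$, the set of points reachable from $q$ while staying in $\Omega$ contains a neighbourhood of $q$. Since $\operatorname{span}\mathcal F|_q=\mathbb R^d$, I would pick $f_1,\dots,f_d\in\mathcal F$ with $M:=[\,f_1(q)\mid\cdots\mid f_d(q)\,]$ invertible. Symmetry of $\mathcal F$ makes the backward flows $\varphi_{s}^{f_i}=\varphi_{-s}^{-f_i}$ admissible, so
\[
\Psi(t_1,\dots,t_d):=\varphi_{t_d}^{f_d}\circ\cdots\circ\varphi_{t_1}^{f_1}(q)
\]
is a continuous map defined for all $t=(t_1,\dots,t_d)$ near $0$, with $\Psi(0)=q$, with each $\Psi(t)$ reachable from $q$, and --- for $|t|$ small, by continuity and openness of $\Omega$ --- along a trajectory that stays in $\Omega$. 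For a single globally Lipschitz field one has $\varphi_s^f(x)=x+sf(x)+E(s,x)$ with $|E(s,x)|\le Cs^2$ uniformly for $x$ in a compact set and $|s|\le 1$, which follows just from writing $\varphi_s^f(x)-x-sf(x)=\int_0^s\big(f(\varphi_u^f(x))-f(x)\big)\,du$ and bounding the integrand using $\operatorname{Lip}(f)$ and $\sup|f|$ on the compact set swept by the flow. Composing the $d$ such expansions, the cross-terms are again $O(|t|^2)$ by the Lipschitz bounds, which gives $\Psi(t)=q+Mt+R(t)$ with $|R(t)|\le C|t|^2$ for $|t|\le\delta_0$.

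To then cover a ball around $q$ I would \emph{not} invoke the inverse function theorem --- $R$ is Lipschitz but with no a priori control on its constant --- but instead note that $\Psi(t)=p$ is equivalent to $t$ being a fixed point of $G(t):=M^{-1}(p-q)-M^{-1}R(t)$. Choosing $\rho\le\min\{\delta_0,(2C\|M^{-1}\|)^{-1}\}$ and $r_0:=\rho/(2\|M^{-1}\|)$, an elementary estimate shows that $G$ maps the closed ball $\overline B(0,\rho)$ continuously into itself whenever $|p-q|\le r_0$; Brouwer's theorem then provides a fixed point, i.e. $p\in\Psi(\overline B(0,\rho))$. Hence $B(q,r_0)$ is reachable from $q$ inside $\Omega$, so every reachability class is open.

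It remains to assemble the global statement. Reachability inside $\Omega$ is reflexive (total time $0$) and transitive (concatenation of controls), and it is symmetric precisely because $\mathcal F$ is symmetric: reversing a trajectory from $x$ to $y$ amounts to flowing along $-f_1,\dots,-f_k\in\mathcal F$ in reverse order, which retraces the same curve and so also stays in $\Omega$. Thus $\Omega$ is partitioned into open equivalence classes, and being a domain it is connected, so there is only one class; every $y\in\Omega$ is reachable from every $x\in\Omega$ by some $\varphi\in\mathcal A_{\mathcal F}$. I expect the main obstacle to be the local step under non-smoothness, and the Lipschitz first-order expansion together with Brouwer's theorem (replacing the usual inverse-function / contraction argument) is what I expect to make it work. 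As an alternative one could mollify each $f_i$ to a smooth field $f_i^\varepsilon$ --- note $\operatorname{Lip}(f_i^\varepsilon)\le\operatorname{Lip}(f_i)$, so the flows stay uniformly controlled --- apply the classical smooth Chow--Rashevsky theorem, and pass to the limit via a topological-degree argument, but the direct route above needs no smooth input at all.
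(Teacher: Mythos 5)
Your proof is correct and follows the same overall strategy as the paper: show that the reachable set from any point contains a neighbourhood of that point (local step), then use connectedness of $\Omega$ to conclude there is a single reachability class (global step). The global step is essentially identical in both proofs.

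Where you genuinely diverge — and improve on the paper — is in the local step. The paper's proof considers the same composed-flow map $\tau(t_1,\dots,t_d)=\varphi_{t_d}^{f_d}\circ\cdots\circ\varphi_{t_1}^{f_1}(q)$, computes $\nabla\tau(0)=[f_1(q),\dots,f_d(q)]$, and then asserts that ``$\tau$ is a local diffeomorphism at $0$.'' That appeal to the inverse function theorem is not actually justified here: the control family is only assumed Lipschitz (the whole point of this proposition, as the paper explains, is to handle non-smooth families for which $\operatorname{Lie}\mathcal F$ is unavailable), and flows of Lipschitz vector fields are Lipschitz but generally \emph{not} $C^1$ in the initial condition, so $\tau$ need not be $C^1$ on any neighbourhood of $0$. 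You correctly flag this and supply the missing ingredient: a direct first-order expansion $\Psi(t)=q+Mt+R(t)$ with $|R(t)|\le C|t|^2$ (using only the Lipschitz bound on each $f_i$ and a sup bound over the compact set swept by the flow), which establishes total differentiability of $\Psi$ at $t=0$, followed by the Brouwer fixed-point argument to show that the image $\Psi(\overline B(0,\rho))$ contains a ball around $q$. This is exactly the standard replacement for the inverse function theorem when one has invertible differentiability only at a single point, and it makes the proposition's proof rigorous in the stated generality. Your version also gives the slightly stronger conclusion that the connecting trajectories stay inside $\Omega$, which is consistent with (and a bit more than) what the paper claims.
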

\begin{proof}
    For $x\in\Omega$, denote
    $
        \mathcal O_{x}:=\{\varphi(x)\mid \varphi\in\mathcal A_{\mathcal F}\}
    $
    as the orbit through $x$.
    For any $q\in\mathcal O_x$, since $\operatorname{span} \mathcal F|_q=\mathbb R^d$, there exists $f_1,\cdots, f_d\in\mathcal F$ such that $f_1(q),\cdots,f_d(q)$ are linearly independent. Consider the map
    \begin{equation}
        \tau: \mathbb R^d\to\mathbb R^d, (t_1,\cdots,t_d)\to \varphi_{t_d}^{f_d}\circ \cdots\circ \varphi_{t_1}^{f_1}(q).
    \end{equation}
    Since $\mathcal F$ is symmetric, the map is well-defined.
    % Note that the map $\tau $ is well-defined when some $t_i$s are negative as $\mathcal F$ is symmetric.
     We have $\tau(0)=q$ and $\nabla \tau(0)=[f_1(q),\cdots,f_d(q)]$. Since $f_1(q),\cdots ,f_d(q)$ are linearly independent, the Jacobian matrix $\nabla \tau(0)$ is invertible. Therefore, $\tau$ is a local diffeomorphism at $0$. It follows that the image of $\tau$ contains a neighbourhood of $q$. Since $q$ is arbitrary and the image of $\tau$ is contained in $\mathcal O_x$, we have $\mathcal O_x$ is open for any $x$. Suppose there are $x,y$ such that $y\notin \mathcal{O}_x$, then $\Omega\setminus O_x\neq \emptyset$. However,
    $$\Omega\setminus O_x=\bigcup_{y\in \Omega\setminus O_x}(\mathcal{O}_y\cap \Omega)$$
    is a non-empty open set. We then have $\Omega=(\Omega\cap\mathcal{O}_x)\cup (\Omega\setminus \mathcal{O}_x)$
    is the union of two non-empty disjoint open set, which contradicts to the connectedness of $\Omega$. Therefore, there is only one orbit $\mathcal O$ such that $\mathcal O\supseteq\Omega$, i.e., the system is controllable on $\Omega$.
\end{proof}
\begin{remark}
    The condition for Lipschitz control family are more restrictive, since it always holds that $\operatorname{span}\mathcal F \subseteq \operatorname{Lie} \mathcal F$, provided the latter one is well-defined.
\end{remark}

\subsection{Proof of Theorem~\ref{thm:con_aff}}

In this subsection, we present the proof of Theorem~\ref{thm:con_aff}, the characterisation of UIP under affine invariance conditions. We also discuss some extension of the result in \Cref{subsubsec:extension}. The idea in proving Theorem~\ref{thm:con_aff} can be summarized as follows:
% and give the characterisation of UIP for polynomial control families with affine invariance assumption.
% \subsubsection{Proof of Theorem~\ref{thm:con_aff}}

\begin{enumerate}
    \item  We first transform the UIP into a set of one-point control problems for high-dimensional systems \eqref{eq:Nd_system}.
    \item Then, we use Proposition~\ref{prop:span} to examine the controllability of each individual system to give a sufficient criterion for UIP.
    \item Finally, by using the affine invariance assumption and employing a Fourier transform technique, we find that the only exception for the criterion is the case of linear systems, thus giving a characterization of UIP.
\end{enumerate}

Recall that we assume $d \ge 2$ here.
For each positive integer $N$, we define
\begin{equation}
    \Omega_N=\big\{(x_1,\cdots,x_N)\mid \forall i\neq j ,x_i\neq x_j, i,j\in\{1,2\cdots, N\} \big\}.
\end{equation}
As we mentioned before, the simultaneous controllability of $N$ distinct data points is then equivalent to the classical controllability of the $Nd$ dimensional system
\begin{equation}
    \label{eq:Nd_system}
    \dot{X}_N(t)=\Big(f(x_1(t);{\theta}(t)), f(x_2(t);{\theta}(t)), \cdots , f(x_N(t);{\theta}(t))\Big),
\end{equation}
where $X_N(t)=(x_1(t),\cdots,x_N(t))\in\Omega_N\subset \mathbb R^{Nd}$.

The control family of the $(Nd)$-dimensional system~\eqref{eq:Nd_system} is
\begin{equation}
    \mathcal F_N:=\Big\{\big(f(x_1),f(x_2),\cdots,f(x_N)\big)\mid f\in \mathcal F\Big\}.
\end{equation}
As $\Omega_N$ is a connected open subset of $\mathbb R^{Nd}$ for all $d\ge 2$, we can directly apply Theorem~\ref{thm:chow} and Proposition~\ref{prop:span} to get a sufficient condition for the UIP:

A control system with control family $\mathcal F$ possesses UIP if
\begin{equation}
    \label{eq:controllability}
\operatorname{Span}\mathcal F_N(X_N)=\mathbb R^{Nd}\ \text{or}\ \operatorname{Lie}\mathcal F_N(X_N)=\mathbb R^{Nd}
\end{equation}
for any positive integer $N$ and $X_N\in\Omega_N\subset \mathbb R^{Nd}$. Note that the definition of Lie closure requires a higher regularity.

This condition serves as a sufficient condition for general control systems. However, by leveraging the affine invariance assumption on the control family $\mathcal F$, we are able to derive a characterization of UIP from the condition.
Now we present the proof of Theorem~\ref{thm:con_aff}.
\begin{proof}[Proof of Theorem~\ref{thm:con_aff}]

    Since the flow map of linear function are again linear, it suffices to prove the part of sufficiency.

    We prove it by contradiction. Suppose the UIP does not hold under the assumptions, then there must exist a positive integer $N$ and $X_N=(x_1,x_2,\cdots,x_N)\in\Omega_N$ such that $\operatorname{Span} \mathcal F_N(X_N)$ is contained in a hyperplane in $\mathbb R^{Nd}$. That is, there exist $c_{k,j}\in\mathbb R(1\le j\le d, 1\le k\le N)$, at least one being non-zero, such that for any $g=(g_1,\cdots, g_d)\in\mathcal F$,
    \begin{equation}
        \label{eq:rank_con_fail}
        \sum_{j = 1}^{d} \sum_{k = 1}^N c_{k,j} g_j(x_k) = c_{1,1}g_1(x_1)+\cdots+c_{1,d}g_d(x_1)+\cdots+c_{N,1}g_1(x_N)+\cdots+c_{N,d}g_d(x_N)=0.
    \end{equation}

    By assumption, there exists $f=(f_1,\cdots, f_d)\in \mathcal F$ with at least one $f_j$ being non-linear, without loss of generality we assume that $f_1:\mathbb R^d\to\mathbb R$ is non-linear. Since $\mathcal F$ is affine invariant, for any matrix $W, A\in\mathbb R^{d\times d}$ and vector $b\in\mathbb R^d$, it holds that $Wf(A\cdot-b)\in\mathcal F.$

    Choosing a non-zero $c_{m,l}$, we denote by $e_{l1}$ the matrix whose $(l,1)$-th entry is 1 and all other entries are zero. Let $g$ be $e_{l1}f(A\cdot-b)$ in~\eqref{eq:rank_con_fail}, we then have
    \begin{equation}
    \sum_{k=1}^N c_{k,l}f_1(Ax_k-b)=0
    \end{equation}
holds for all $A \in \mathbb R^{d\times d}$ and $b \in \mathbb R^{d}$.

Note that $f_1$ is of polynomial growth, and can be naturally regarded as a tempered distribution. Therefore, taking the Fourier transform with respect to $b$, we obtain
\begin{equation}
    \label{eq:tf-rank-fail}
    \left (\sum_{k = 1}^N c_{k,l} \exp(\mathbf i (\xi^\top Ax_k))\right )\hat f_1(\xi) = 0
\end{equation}
as a tempered distribution, for all matrix $A \in \mathbb R^d$ and $\xi\in\mathbb R^d$.

We now claim that \eqref{eq:tf-rank-fail} indicates $\supp \hat f_1$ is support at the origin. Suppose there exists $\xi_{0} \in \supp \hat f_1$ such that $\xi_{0}\neq 0$, then $A^\top\xi_{0}$ can take any value in $\mathbb R^d$ as $A$ goes through $\mathbb R^{d\times d}$. According to Lemma~\ref{lem:dist_supp}, this implies that for any $w\in\mathbb R^d$,
\begin{equation}
\sum_{k = 1}^N c_{k,l} \exp(\mathbf{i} (w^\top x_k))=0.
\end{equation}
As $x_1,\cdots, x_N$ are distinct, there exists $\tilde w\in\mathbb R^d$ such that $z_k:=\tilde w^\top x_k( k=1,\cdots, N)$ are distinct real numbers. Letting  $w = s\tilde w$ for $s \in \mathbb R$, we then have
\begin{equation}
v(s):=\sum_{k = 1}^N c_{k,l} \exp(\mathbf i sz_k)\equiv 0,
\end{equation}
for all $s\in\mathbb R$. It is straightforward to see that all $c_{k,l}$ are zero. In fact, the $j$-th($0\le j\le N-1$) derivative of $v$ at $0$ gives:
\begin{equation}
    v^{(j)}(0)=\sum_{k = 1}^N c_{k,l}(\mathbf iz_k)^j=0.
\end{equation}

Since all the $z_k$ are distinct, the Vandermonde matrix $[(\mathbf iz_k)^{j-1}]_{1\le k,j\le N}$ is invertible. It then indicates that all $c_{k,l}, k=1,\cdots,N$ must be zero, which contradicts to our assumption that $c_{m,l}\neq 0$ for some $m$. Therefore, $\supp \hat f_1$ must be $\{0\}$. According to Theorem 5 in~\cite[Appendix C]{lax2002FunctionalAnalysis}, this indicates that $f_1$ is a polynomial function. However, this contradicts to our assumption that $f_1$ is non-linear and globally Lipschitz. Hence, the UIP holds for systems with non-linear affine invariant control family $\mathcal F$.
\end{proof}

The following lemma on Fourier transform is required in our proof of Theorem~\ref{thm:con_aff}. In fact, it is an exercise of~\cite{lax2002FunctionalAnalysis}.

%  We denote $\mathcal S(\mathbb R^d)$ and $\mathcal S^\prime(\mathbb R^d)$ as the Schwartz space and the space of tempered distributions over $\mathbb R^d$, respectively.
% \begin{lemma}
%     \label{thm:ft}
%     Let $f\in \mathcal S^\prime(\mathbb R^d)$ be a tempered distribution on $\mathbb R^d$. Then, $\supp\hat f=\{0\}$, if and only if $f$ is a polynomial on $\mathbb R^d$.
% \end{lemma}
% Lemma~\ref{thm:ft} implies that the Fourier transform of any non-polynomial function with polynomial growth has a non-zero support. The proof of Lemma~\ref{thm:ft} can be found in [ref]. \lt{Maybe Lax's funtional analysis book? Or Evans' PDE book}
% \lt{I suggest to remove this Lemma.}
\begin{lemma}
    \label{lem:dist_supp}
Let $h:\mathbb R^d\to\mathbb R$ is a smooth function whose partial derivatives in each order are all of polynomial growth. Suppose for some $f\in\mathcal S^\prime(\mathbb R^d)$, $h\cdot f=0$ as a tempered distribution. Then, for any $x_0\in\mathbb R^d$ such that $h(x_0)\neq 0$, $x_0\notin \supp f$.
\end{lemma}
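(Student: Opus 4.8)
The plan is to prove this as a purely local statement about tempered distributions, exploiting that a nonvanishing smooth function is locally invertible with a smooth reciprocal. Fix $x_0$ with $h(x_0)\neq 0$. By continuity there is a bounded open neighbourhood $U$ of $x_0$ on which $h$ is nowhere zero; on $U$ the reciprocal $1/h$ is then smooth by the quotient rule. It suffices to show that $f$ vanishes on $U$ in the sense of distributions, that is, $\langle f,\phi\rangle=0$ for every $\phi\in C_c^\infty(U)$, since this yields $x_0\notin\supp f$ directly from the definition of support.

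So let $\phi\in C_c^\infty(U)$ and put $K:=\supp\phi$, a compact subset of $U$. I would define $\psi$ by $\psi=\phi/h$ on $U$ and $\psi=0$ on $\mathbb R^d\setminus K$; since $K\subset U$ these two prescriptions cover $\mathbb R^d$ and agree on the overlap $U\setminus K$ (both vanish there), so $\psi$ is well-defined, smooth on $\mathbb R^d$, compactly supported in $K$, and satisfies $h\psi=\phi$ everywhere. In particular $\psi\in C_c^\infty(\mathbb R^d)\subset\mathcal S(\mathbb R^d)$. At this point I would recall the standard fact that a smooth function all of whose derivatives have polynomial growth is a multiplier on $\mathcal S(\mathbb R^d)$ and hence on $\mathcal S^\prime(\mathbb R^d)$, so that $hf$ is the tempered distribution given by $\langle hf,\eta\rangle:=\langle f,h\eta\rangle$ for $\eta\in\mathcal S(\mathbb R^d)$; this is precisely what makes the hypothesis $h\cdot f=0$ meaningful in the first place.

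The conclusion is then immediate. Using $h\cdot f=0$,
\[
\langle f,\phi\rangle=\langle f,h\psi\rangle=\langle hf,\psi\rangle=0.
\]
Since $\phi\in C_c^\infty(U)$ was arbitrary, $f$ vanishes on $U$, so $x_0\notin\supp f$. There is no genuine obstacle in this argument — the lemma is elementary, as the paper notes. The only points requiring care are bookkeeping: checking that $\psi=\phi/h$, extended by zero, is a legitimate compactly supported smooth test function, which relies on $\supp\phi$ being compactly contained in the open set where $h\neq 0$ together with the smoothness of $1/h$ there; and invoking the multiplier property of $h$ on $\mathcal S^\prime(\mathbb R^d)$ to justify the identity $\langle f,h\psi\rangle=\langle hf,\psi\rangle$.
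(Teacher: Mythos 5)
Your proof is correct and takes essentially the same approach as the paper: localize to a neighbourhood where $h\neq 0$, divide the test function by $h$, and apply the hypothesis $hf=0$. The only cosmetic difference is that you absorb the cutoff into the test function $\phi$ itself rather than introducing a separate bump function as in the paper, which makes the bookkeeping slightly cleaner.
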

\begin{proof}
Since $hf=0$ as a tempered distribution, we have $f(h\cdot g)=0$ for all test function $g\in\mathcal S(\mathbb R^d)$. Because $h(x_0)\neq 0$, there exists a neighbourhood $U$ of $x_0$ such that $h\neq 0$ on $\bar U$. The smooth function $\frac{1}{h(x)}$ can be then defined on $\bar U$. Now we choose a smooth bump function $B(x)$ defined on $\mathbb R^d$ such that $\supp B\subset U$ and $B(x)\equiv 1$ on a neighborhood $V\subseteq U$ of $x_0$. Therefore, the function
\begin{equation}
    l(x):=\left\{
    \begin{alignedat}{2}
         &B(x)h(x),  \ && x\in U,\\
        &0, &&x\in \mathbb R^d\setminus U
    \end{alignedat}
    \right.
\end{equation}
is in $\mathcal S(\mathbb R^d)$ and coincide with $\frac{1}{h(x)}$ on $V$. Consequently, for any $g\in\mathcal S(\mathbb R^d)$ with $\supp g\subseteq V$, we have
$
    f(g)=f(h\cdot (l\cdot g))=0
$
Therefore, $\supp f\subset \mathbb R^d\setminus V$, which implies $x_0\notin \supp f$.
\end{proof}

\subsection{Discussions and Extensions}
In this subsection, we present some discussions and extensions of Theorem~\ref{thm:con_aff}. We will first discuss the concept of UIP and the result of Theorem~\ref{thm:chow} for one-dimensional systems. After that, we extend the approach in the proof of Theorem~\ref{thm:con_aff} to study UIP under weaker affine invariance assumptions. At the end, we will discuss the concept of attainable set for locally Lipschitz control families.
\label{subsubsec:extension}
% \begin{description}
\subsubsection{One-dimensional systems}
\label{subsubsec:1dsys}

    In the previous discussions, we only considered the concept of UIP and UAP for dimensions more than two. The main difference for the one-dimensional system is that the flow maps of one-dimensional dynamical systems are always increasing. As a consequence, the attainable set $\mathcal A_{\mathcal F}$ consists exclusively of increasing functions, limiting the expressive power of $\mathcal A_{\mathcal F}$. Therefore, for the UIP and UAP of one-dimensional systems, we additionally require
    \begin{enumerate}
        \item  $F$ to be increasing in Definition~\ref{def:uap}, and
        \item the data $\{(x_i,y_i)\}_{i=1}^N$ to satisfy that $x_1<\cdots <x_N$ and $y_1<\cdots< y_N$ in Definition~\ref{def:uip}.
    \end{enumerate}

    Note that the UIP of a one-dimensional system is equivalent to the controllability of system~\eqref{eq:Nd_system} over the domain
    $
     \{(x_1,\cdots, x_N) \in \mathbb R^{N}\mid x_1<\cdots<x_N\}.
    $
It follows that a one-dimensional control system possesses the UIP if and only if there exists $f\in \mathcal F$ which is non-linear, and the proof is similar to that in Theorem~\ref{thm:con_aff}.

    A crucial observation for one-dimensional control systems is that the UIP naturally implies the UAP. Since for any strictly increasing continuous function $h:\mathbb R\to\mathbb R$, interval $[a,b]$ and $\varepsilon>0$, there exists a sequence $a= x_1<x_2<\cdots< x_{N}=b$ such that $h(x_i)-h(x_{i-1})<\varepsilon$ for all $i=2,\cdots N$. The UIP implies that there exists $\varphi\in\mathcal A_{\mathcal F}$ such that $\varphi(x_i)=h(x_i)$ for all $i$. Therefore,
    \begin{equation}
        -\varepsilon<h(x_{i-1})-\varphi(x_i)\le h(x)-\varphi(x)\le h(x_i)-\varphi(x_{i-1})<\varepsilon
    \end{equation}
    whenever $x\in[x_{i-1},x_{i}]$. This indicates that $|h-\varphi|_{C(K)}<\varepsilon$. Given that the set of strictly increasing continuous functions is dense in the set of increasing continuous functions with respect to the $C(K)$ norm, we can deduce the UAP of systems with UIP.

\subsubsection{Conditions under weaker assumptions}
\label{subsubsec:weaker_aff}
    We weaken the assumptions on the affine invariance. Specifically, we have the following result.
    \begin{proposition}
        \label{prop:con_weaker_aff}
        Suppose a control family $\mathcal F$ satisfies that for any $f\in\mathcal F$, $Df(A\cdot-b)\in\mathcal F$ for any diagonal matrix $D$ and $A\in \mathscr A\subseteq \mathbb R^{d\times d}$, then $\mathcal A_{\mathcal F}$ possesses the UIP if for each coordinate index $j\in\{1,\cdots, d\}$, the set
    \begin{equation}
        \Xi_j(\mathscr A) = \{A^\top\xi\mid A\in\mathscr{A}, \xi\in \supp \hat f_j, \text{where}\ f_j \ \text{is the}\ j\text{-th coordinate of some}\ f\in\mathcal F \}
    \end{equation}
    contains an open set in $\mathbb R^d$

    \end{proposition}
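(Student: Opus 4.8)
The plan is to mimic, almost verbatim, the three-step scheme used to prove Theorem~\ref{thm:con_aff}, modifying only the closing Fourier-support argument so that it copes with the restricted matrix family $\mathscr A$. As in that proof, via Theorem~\ref{thm:chow} and Proposition~\ref{prop:span} it suffices to establish the span-generating condition $\operatorname{Span}\mathcal F_N(X_N)=\mathbb R^{Nd}$ for every positive integer $N$ and every $X_N=(x_1,\dots,x_N)\in\Omega_N$, since this gives controllability of each ensemble system~\eqref{eq:Nd_system} and hence the UIP. Arguing by contradiction, suppose the span condition fails: then there exist $N$, pairwise distinct points $x_1,\dots,x_N$, and real constants $c_{k,j}$ $(1\le k\le N,\ 1\le j\le d)$, not all zero, such that
\[
\sum_{j=1}^d\sum_{k=1}^N c_{k,j}\,g_j(x_k)=0\qquad\text{for every }g=(g_1,\dots,g_d)\in\mathcal F.
\]

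Fix an index $l$ with $c_{m,l}\neq 0$ for some $m$. For arbitrary $f\in\mathcal F$, $A\in\mathscr A$ and $b\in\mathbb R^d$, let $E_{ll}$ be the diagonal matrix whose only nonzero entry is a $1$ in position $(l,l)$; the weak affine invariance gives $E_{ll}f(A\cdot-b)\in\mathcal F$, and substituting this choice into the identity above isolates the $l$-th coordinate,
\[
\sum_{k=1}^N c_{k,l}\,f_l(Ax_k-b)=0\qquad\text{for all }A\in\mathscr A,\ b\in\mathbb R^d .
\]
Since $f_l$ has polynomial growth it is a tempered distribution, and taking the Fourier transform in $b$ exactly as in the proof of Theorem~\ref{thm:con_aff} yields
\[
\Big(\sum_{k=1}^N c_{k,l}\exp(\mathbf{i}\,\xi^\top Ax_k)\Big)\hat f_l(\xi)=0
\]
as tempered distributions, for every $A\in\mathscr A$. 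Put $\Psi(w):=\sum_{k=1}^N c_{k,l}\exp(\mathbf{i}\,w^\top x_k)$, a finite exponential sum and hence a real-analytic function on $\mathbb R^d$, and note $\Psi(A^\top\xi)=\sum_k c_{k,l}\exp(\mathbf{i}\,\xi^\top Ax_k)$. By Lemma~\ref{lem:dist_supp}, the displayed identity forces $\Psi(A^\top\xi)=0$ whenever $\xi\in\supp\hat f_l$; letting $A$ range over $\mathscr A$ and $f$ over $\mathcal F$ shows that $\Psi$ vanishes on the entire set $\Xi_l(\mathscr A)$. By hypothesis $\Xi_l(\mathscr A)$ contains a nonempty open subset of $\mathbb R^d$, so real-analyticity together with connectedness of $\mathbb R^d$ gives $\Psi\equiv 0$. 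Finally, choose $\tilde w$ with $z_k:=\tilde w^\top x_k$ pairwise distinct and restrict $\Psi$ to $w=s\tilde w$: the vanishing of $s\mapsto\sum_k c_{k,l}\exp(\mathbf{i}sz_k)$ together with its derivatives at $s=0$ gives a Vandermonde linear system in $(c_{1,l},\dots,c_{N,l})$, forcing all $c_{k,l}=0$ and contradicting $c_{m,l}\neq 0$. Hence the span condition holds for every $N$, and the UIP follows.

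The argument is a routine adaptation of Theorem~\ref{thm:con_aff}, and the one genuinely new issue — which is where care must be taken — is that once $A$ is confined to $\mathscr A$, a single nonzero frequency $\xi_0\in\supp\hat f_l$ no longer sweeps out all of $\mathbb R^d$ under the map $A\mapsto A^\top\xi_0$. The remedy is to aggregate over the coordinate functions of \emph{every} $f\in\mathcal F$ and \emph{every} admissible $A$, producing the set $\Xi_l(\mathscr A)$, and then to exploit its nonempty interior through the identity theorem for real-analytic functions; the quantified assumption that $\Xi_j(\mathscr A)$ has nonempty interior for each $j$ is precisely what makes this last step go through (in any single run of the contradiction only the one index $l$ is actually needed). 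No other step presents a genuine obstacle.
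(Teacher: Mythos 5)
Your proof is correct and follows the paper's argument step for step: isolate the $l$-th coordinate via a rank-one diagonal matrix, Fourier-transform in $b$, apply Lemma~\ref{lem:dist_supp}, aggregate over $f$ and $A$ to see that the exponential sum $\Psi$ vanishes on $\Xi_l(\mathscr A)$, and then kill the coefficients by the Vandermonde argument. The only (harmless) stylistic difference is in the analyticity step: you invoke the identity theorem for real-analytic functions on all of $\mathbb R^d$ to get $\Psi\equiv 0$ from its vanishing on an open set, whereas the paper restricts to a ray $s\mapsto \Psi(s w)$ with $w$ in the open set chosen so that $w^\top x_k$ are distinct, and uses one-variable analyticity on an interval; both routes lead to the same Vandermonde conclusion.
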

    \begin{proof}
        Similar to the proof of Theorem~\ref{thm:con_aff}, the UIP fails only if there exists distinct $x_1,\cdots, x_N$ and $c_{k,j}\in\mathbb R$ that are not all zeros such that for any $f=(f_1,\cdots, f_d)\in\mathcal F$,
        \begin{equation}
             \sum_{j = 1}^d \sum_{k= 1}^{N}c_{k,j} f_j(x_{k}) =0.
        \end{equation}
Now we fix an index $J \in \{1,2,\cdots, d\}.$ Since $(f_1,\cdots, f_d) \in \mathcal F$, by assumption, it holds that $(0, \cdots, 0, f_{J}(A\cdot -b), 0,\cdots, 0) \in \mathcal F.$
Therefore,
\begin{equation}
    \sum_{k = 1}^{N}c_{k,J} f_J(Ax_{k}-b) =0
\end{equation}
for all $x_i$, $A\in\mathscr A$ and $b\in\mathbb R^d$. Taking Fourier transform with respect to $b$ gives that
\begin{equation}
    \left (\sum_{k = 1}^N c_{k,J} \exp(\mathbf i (\xi^\top Ax_k))\right )\hat f_k(\xi) = 0
\end{equation}
Therefore, according to Lemma~\ref{lem:dist_supp}, for each $\xi\in\supp \hat g_k$ and $A\in\mathscr A$.
\begin{equation}
    \sum_{k = 1}^N c_{k,J} \exp(\mathbf i (\xi^\top Ax_k))=0.
\end{equation}
Based on the assumption, $A^\top \xi$ can take all the values in some open set $U$ in $\mathbb R^d$ when $f$ goes through $\mathcal F$, $\xi$ goes through $\supp f_j$ and $A$ goes through $\mathscr A$. Consequently, we can take some $A,\xi$ with $A^\top \xi=w$ such that $w^\top x_1,\cdots, w^\top x_N$ are $N$ distinct real numbers. Since $U$ is open, we know that there is some $\delta>0$ such that $sw\in U$ for all $s\in(1-\delta, 1+\delta)$. Therefore, we have
\begin{equation}
    \sum_{k = 1}^N c_{k,J} \exp(\mathbf i s( aw^\top x_k))=0,
\end{equation}
for all $s\in (1-\delta, 1+\delta)$. Because the left-hand side in the above equality is an analytic function in $s$, this implies that the equality holds for all $s\in\mathbb R$. Consequently, using the same discussion as in the proof of Theorem~\ref{thm:con_aff}, we deduce that all $c_{k,l}$ are all zero, which leads to a contradiction.
\end{proof}

Although the condition in Proposition~\ref{prop:con_weaker_aff} may seem technical, it  can be applied to specific cases to get relatively concise corollaries. The following corollaries can be deduced by directly checking the conditions in Proposition~\ref{prop:con_weaker_aff}.

\begin{corollary}
    Consider
    \begin{equation*}
    \mathcal F:=\{W\sigma(A\cdot +b)\mid W, A\in \mathbb R^{d\times d},b\in\mathbb R^d, W\ \text{is diagonal}, A\in\mathscr A\subseteq\mathbb R^{d\times d}\},
    \end{equation*}
    where $\sigma:\mathbb R\to\mathbb R$ is a non-linear Lipschitz function that is applied element-wise to a vector in $\mathbb R^d$. Then, the UIP holds if for any $l=1,\cdots, d$, the set
    $\{v\in \mathbb R^d\mid v \ \text{is the l-th row of}\ A\ \text{for some}\ A\in\mathscr A \}$
    equals to $\mathbb R^d$. In particular, $\mathscr A$ can be the set of all cyclic matrices or the set of rank-1 matrices. This implies that continuous ResNet with many sharing weights in the transformation matrix can still possess the UIP.
\end{corollary}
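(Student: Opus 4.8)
The plan is to deduce this corollary from Proposition~\ref{prop:con_weaker_aff}, checking its two hypotheses for the control family $\mathcal F = \{W\sigma(A\cdot + b) : W\ \text{diagonal},\ A\in\mathscr A,\ b\in\mathbb R^d\}$ with the given $\mathscr A$; I write ``row surjectivity'' for the standing assumption that, for each $l$, the $l$-th rows of matrices in $\mathscr A$ exhaust $\mathbb R^d$. The real content is the second hypothesis — that each $\Xi_j(\mathscr A)$ contain an open set — and it reduces to locating the Fourier support of the ridge functions that occur as coordinates of elements of $\mathcal F$.

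I would first check the structural requirement. Given $f = W_0\sigma(A_0\cdot + b_0)\in\mathcal F$, an index $J$, and $A\in\mathscr A$, $b\in\mathbb R^d$, the map $e_{JJ}\,f(A\cdot - b)$ — with $e_{JJ}$ the matrix carrying a $1$ in position $(J,J)$ and zeros elsewhere, as in the proof of Theorem~\ref{thm:con_aff} — has every coordinate zero except the $J$-th, which equals $w_{0,J}\,\sigma\bigl((a_{0,J}^{\top}A)\cdot + \beta\bigr)$, where $w_{0,J}$ is the $J$-th diagonal entry of $W_0$, $a_{0,J}^{\top}$ the $J$-th row of $A_0$, and $\beta\in\mathbb R$. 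Row surjectivity supplies an $A'\in\mathscr A$ whose $J$-th row is $a_{0,J}^{\top}A$, so this map equals $(w_{0,J}e_{JJ})\,\sigma(A'\cdot + b')$ for a suitable $b'$ and hence lies in $\mathcal F$; this is precisely the form in which the structural hypothesis enters the proof of Proposition~\ref{prop:con_weaker_aff}, so that argument applies unchanged. (For $\mathscr A$ equal to the circulant matrices the hypothesis $Df(A\cdot-b)\in\mathcal F$ in fact holds verbatim, since products of circulant matrices are circulant; for rank-one matrices one either enlarges $\mathscr A$ to matrices of rank at most one — altering neither row surjectivity nor the conclusion — or uses the coordinatewise statement just given.)

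Next I would analyze $\Xi_j(\mathscr A)$. For $a\in\mathbb R^d$ and $\beta\in\mathbb R$, the ridge function $x\mapsto\sigma(a^{\top}x+\beta)=\psi(a^{\top}x)$ with $\psi:=\sigma(\cdot+\beta)$ has at most linear growth, hence is a tempered distribution, and its Fourier transform is carried by the line $\mathbb R a$ with support $\{t a : t\in\supp\hat\psi\} = \{t a : t\in\supp\hat\sigma\}$ (translating $\sigma$ only modulates $\hat\sigma$ and does not move its support). Since $\sigma$ is Lipschitz and non-linear, $\supp\hat\sigma\neq\{0\}$: were it $\{0\}$, $\sigma$ would be a polynomial by Theorem~5 in~\cite[Appendix~C]{lax2002FunctionalAnalysis} and therefore affine, exactly the contradiction reached at the end of the proof of Theorem~\ref{thm:con_aff}. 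Fix a nonzero $\xi_0\in\supp\hat\sigma$. Taking $W=e_{jj}$ and $b=0$, the $j$-th coordinate of $e_{jj}\sigma(A\cdot)$ is $\sigma(a^{\top}\cdot)$ with $a^{\top}$ the $j$-th row of $A$, and by row surjectivity $a$ runs over all of $\mathbb R^d$; hence $\bigcup_{f\in\mathcal F}\supp\widehat{f_j}\supseteq\{\xi_0 a : a\in\mathbb R^d\}=\mathbb R^d$. Consequently, given any $v\in\mathbb R^d$, choosing $A\in\mathscr A$ with $j$-th row $v^{\top}$ and $\xi=e_j$ yields $A^{\top}\xi=v$, so $\Xi_j(\mathscr A)=\mathbb R^d$, which certainly contains an open set, and Proposition~\ref{prop:con_weaker_aff} gives the UIP. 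The two stated instances then follow because circulant matrices (each row a cyclic shift of an arbitrary first row) and rank-one matrices $uv^{\top}$ (whose $l$-th row $u_l v$ sweeps $\mathbb R^d$ as $u,v$ vary) both satisfy row surjectivity.

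The one step demanding care is the Fourier-support analysis of the $d$-dimensional ridge function and its distributional underpinnings — temperedness, concentration of the transform on $\mathbb R a$, and the presence of a nonzero frequency forced by non-linearity — all of which are handled exactly as in the proof of Theorem~\ref{thm:con_aff}, using Lemma~\ref{lem:dist_supp} and the Lax characterization of distributions supported at the origin. Everything else is routine, up to the minor rank-one caveat noted above.
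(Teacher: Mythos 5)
Your proof is correct and follows the same route the paper intends, namely verifying the two hypotheses of Proposition~\ref{prop:con_weaker_aff} for the given family. Your Fourier-support computation for the ridge functions $\sigma(a^\top\cdot+\beta)$, the use of non-linearity plus the Lipschitz bound to force $\supp\hat\sigma\neq\{0\}$, and the choice $\xi=e_j$ with $A$ having $j$-th row $v^\top$ to show $\Xi_j(\mathscr A)=\mathbb R^d$ are all sound and match the mechanism behind the proposition.

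One thing worth flagging in your favour: the paper presents this corollary as following by ``directly checking the conditions,'' but the stated first hypothesis of Proposition~\ref{prop:con_weaker_aff} --- $Df(A\cdot-b)\in\mathcal F$ for \emph{all} diagonal $D$ and $A\in\mathscr A$ --- would require $\mathscr A$ to be closed under left multiplication by its own elements, which is not a consequence of row surjectivity (and indeed fails literally for the set of exactly-rank-one matrices). You correctly noticed that the proof of Proposition~\ref{prop:con_weaker_aff} only ever invokes the hypothesis through $D=e_{JJ}$, and that in that coordinatewise form the required closure reduces to row surjectivity, since one only needs some $A'\in\mathscr A$ whose $J$-th row equals $a_{0,J}^\top A$. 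That observation patches what would otherwise be a small gap in a too-terse application of the proposition, and the circulant/rank-at-most-one remark handles the two specific families. This is the same approach as the paper's, carried out with the extra care the statement actually requires.
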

\begin{corollary}
    Suppose there exists $(f_1,\cdots, f_d)\in\mathcal F$ such that $\supp \hat f_j$ contains an open set in $\mathbb R^d$ for some $j$, then the UIP hold if for any $f\in\mathcal F$, $W$ and $b$, $Wf(\cdot-b)\in\mathcal F$. That means, we do not need the scaling transform.
\end{corollary}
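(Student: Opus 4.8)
The plan is to deduce this from Proposition~\ref{prop:con_weaker_aff} applied with $\mathscr A=\{I\}$ (i.e.\ no scaling transform). Two things must be checked. First, the structural hypothesis of that proposition — that $Df(A\cdot-b)\in\mathcal F$ for every diagonal matrix $D$, every $A\in\mathscr A$ and every $b$ — is immediate here: diagonal matrices form a subset of $\mathbb R^{d\times d}$, and with $A=I$ the required closure is a special case of the assumed invariance $f\mapsto Wf(\cdot-b)$. Second, one must verify that $\Xi_j(\{I\})=\bigcup_{f\in\mathcal F}\supp\widehat{f_j}$ contains an open set for \emph{every} coordinate index $j$, whereas the hypothesis supplies only one coordinate $j_0$ and one function $f=(f_1,\dots,f_d)\in\mathcal F$ with $\supp\hat f_{j_0}$ containing a nonempty open set $V$.

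This gap is closed by ``routing'' with elementary matrices. For any target slot $l\in\{1,\dots,d\}$, let $W=e_{l j_0}$ be the matrix whose only nonzero entry is a $1$ in position $(l,j_0)$; by the assumed invariance (taking $b=0$), the function $Wf$, which equals $(0,\dots,0,f_{j_0},0,\dots,0)$ with $f_{j_0}$ in the $l$-th coordinate, belongs to $\mathcal F$. Hence $\supp\widehat{(Wf)_l}=\supp\hat f_{j_0}\supseteq V$, so $\Xi_l(\{I\})\supseteq V$ for every $l$, and both hypotheses of Proposition~\ref{prop:con_weaker_aff} are met. If one prefers a self-contained argument, one can instead repeat the proof of Theorem~\ref{thm:con_aff} verbatim: assuming UIP fails, pick $(m,l)$ with $c_{m,l}\neq0$ in the resulting identity $\sum_{j,k}c_{k,j}g_j(x_k)=0$, specialise $g=e_{l j_0}f(\cdot-b)$ to obtain $\sum_k c_{k,l}f_{j_0}(x_k-b)=0$ for all $b$, take the Fourier transform in $b$ to get $\big(\sum_k c_{k,l}\exp(\mathbf i\,\xi^\top x_k)\big)\hat f_{j_0}(\xi)=0$, invoke Lemma~\ref{lem:dist_supp} to force the multiplier to vanish on $V$, and finish with the Vandermonde step after restricting to a suitable line.

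The single step that genuinely differs from Theorem~\ref{thm:con_aff}, and which I expect to be the crux, is the passage from ``the multiplier $h(\xi)=\sum_k c_{k,l}\exp(\mathbf i\,\xi^\top x_k)$ vanishes on the open set $V$'' to ``$h$ vanishes identically''. Previously this came for free by letting the scaling matrix $A$ sweep a single frequency $\xi_0\neq 0$ over all of $\mathbb R^d$; here it must instead come from the real-analyticity (indeed entireness) of $h$, or equivalently from translating $V$ to the origin and differentiating along a line $\xi=s\tilde w$ on which $\tilde w^\top x_1,\dots,\tilde w^\top x_N$ are pairwise distinct. This is precisely why the hypothesis now demands that $\supp\hat f_{j_0}$ contain an open set rather than merely a nonzero point: without scaling transforms, a lower-dimensional Fourier support could fail to propagate. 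Everything else — the reduction of failure of UIP to a linear functional identity over $\mathcal F$, the complex-multiplier version of Lemma~\ref{lem:dist_supp}, and the Vandermonde argument — is a verbatim repetition of the proof of Theorem~\ref{thm:con_aff}.
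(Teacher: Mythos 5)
Your proof is correct and follows the route the paper intends: apply Proposition~\ref{prop:con_weaker_aff} with $\mathscr A=\{I\}$. The paper leaves the verification unstated, but your routing argument with elementary matrices $e_{l j_0}$ (exploiting the assumed closure under arbitrary $W$, not just diagonal $D$) correctly supplies the only non-obvious step, showing that the open Fourier support of $\hat f_{j_0}$ propagates to every coordinate slot $l$, so that the condition on $\Xi_j(\{I\})$ in that proposition holds for all $j$ and not merely $j_0$.
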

\begin{corollary}
    Suppose for each $j=1,\cdots, d$, there exists some $(f_1,\cdots, f_d)\in\mathcal F$ with some $(\xi_1,\cdots, \xi_d)\in \hat f_j$ such that $\xi_1\cdots\xi_d\neq 0$. In this case, UIP holds if $f\in\mathcal F$ implies $Df(A\cdot-b)\in\mathcal F$ for all diagonal matrices $D$ and $A$.
\end{corollary}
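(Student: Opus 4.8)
The plan is to obtain this corollary as an immediate specialization of Proposition~\ref{prop:con_weaker_aff}; the only work is to pick the right matrix set $\mathscr A$ and check the Fourier-support hypothesis. I would take $\mathscr A$ to be the set of all diagonal $d\times d$ matrices. With this choice, the assumption of the corollary — that $f\in\mathcal F$ implies $Df(A\cdot-b)\in\mathcal F$ for every pair of diagonal matrices $D,A$ and every $b\in\mathbb R^d$ — is exactly the weak affine-invariance condition demanded by Proposition~\ref{prop:con_weaker_aff}. So it remains only to verify, for each coordinate index $j\in\{1,\dots,d\}$, that the set
\[
\Xi_j(\mathscr A)=\bigl\{A^\top\xi\mid A\in\mathscr A,\ \xi\in\supp\hat f_j\ \text{for some}\ f\in\mathcal F\bigr\}
\]
contains an open subset of $\mathbb R^d$. (Here $\hat f_j$ makes sense as a tempered distribution because, under the standing assumptions, the coordinates of elements of $\mathcal F$ have at most linear growth.)

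To check this, fix $j$ and use the hypothesis to pick $f=(f_1,\dots,f_d)\in\mathcal F$ together with a point $\xi^{(j)}=(\xi_1,\dots,\xi_d)\in\supp\hat f_j$ all of whose coordinates are non-zero, i.e. $\xi_1\xi_2\cdots\xi_d\neq0$. Since every diagonal matrix is symmetric, for $A=\operatorname{diag}(a_1,\dots,a_d)\in\mathscr A$ we get $A^\top\xi^{(j)}=(a_1\xi_1,\dots,a_d\xi_d)$; and since each $\xi_i\neq0$, setting $a_i=t_i/\xi_i$ for an arbitrary target $(t_1,\dots,t_d)\in\mathbb R^d$ shows that $A^\top\xi^{(j)}$ sweeps out all of $\mathbb R^d$ as $A$ runs over $\mathscr A$. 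Hence $\Xi_j(\mathscr A)=\mathbb R^d$, which in particular contains an open set, and Proposition~\ref{prop:con_weaker_aff} then yields the UIP.

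I do not anticipate a real obstacle: the argument is purely a matter of fitting the hypotheses to Proposition~\ref{prop:con_weaker_aff}. The one subtlety worth flagging is the role of the non-vanishing condition $\xi_1\cdots\xi_d\neq0$ — it is precisely what allows the coordinatewise rescalings $\xi_i\mapsto a_i\xi_i$ to reach every point of $\mathbb R^d$; if a single $\xi_i$ vanished, the image of $\xi^{(j)}$ under diagonal matrices would be trapped in a coordinate hyperplane and would not contain an open set, so the deduction would fail.
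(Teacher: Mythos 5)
Your proof is correct and follows exactly the route the paper intends: the paper states these corollaries are "deduced by directly checking the conditions in Proposition~\ref{prop:con_weaker_aff}," and you do precisely that with $\mathscr A$ the set of all diagonal matrices, correctly observing that the non-vanishing condition $\xi_1\cdots\xi_d\neq 0$ is exactly what makes $\Xi_j(\mathscr A)=\mathbb R^d$ under coordinatewise scaling. No gap or divergence from the paper.
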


\subsubsection{Locally Lipschitz control family}
\label{subsubsec:loc_lip}
 So far, we assume that a control family consisting of globally Lipschitz functions, whereas the polynomial function class is excluded. Let us now consider some non-Lipschitz case. The difficulty in studying such control systems is that the flow map is not globally well-defined in general.
\begin{example}
    Take $\dot{x} = x^2$ as an example, for initial data $x(0) = a > 0$, the solution $x(t) = \frac{1}{1/a - t}$, which will blow up in time $\frac{1}{a}$. Consequently, the global flow map might not exist for any time $t > 0$.

\end{example}

A similar result holds for general non-Lipschitz $f$, there may not be a uniform time $T>0$ such that the solution of ~\eqref{eq:con_sys} exists in $[0,T]$ for all initial values $x_0$. However, by sacrificing global well-posedness,
    the family of flow maps for locally Lipschitz control family can be defined correspondingly, with assuming all the local flow maps are well-defined.

    For a given bounded set $K$, we can only consider $x_0$ in a bounded set $K$ and only consider those controls ${\theta}(\cdot):[0,T]\to \Theta$ such that the solution of~\eqref{eq:con_sys} exists in $[0,T]$ for all $x_0\in K$. In the above example, the local flow map is well-defined whenever $T < \inf_{x \in K} \frac{1}{x}.$ For each $f \in \mathcal F$ and a compact set $K$, there exists $T_{\max}(f, K)>0$ such that the local flow map exists for all $t \in (0, T_{\max}(f,K))$.   Define
    \begin{equation}
        \Phi(\mathcal{F}, T):=\left\{\varphi_{t_k}^{f_k}\circ \varphi_{t_{k-1}}^{f_{k-1}}\circ \cdots\circ \varphi_{t_1}^{f_1}(\cdot) \mid t_1+\cdots t_k=T,f_1,\cdots,f_k\in \mathcal F , k\ge 1\right\},
    \end{equation}
    where the constraint of the time $t_i$ are specified as follows:
    \begin{enumerate}
    \item First, $0 < t_1 < T_{\max}(f_1, K)$. Define $K_1 = \varphi_{t_1}^{f_1}(K).$
    \item Suppose that the condition on $t_j$ and $K_j$ are given. Let $0 < t_{j+1} < T_{\max}(f_{j+1}, K_{j})$, and define $K_{j+1} = \varphi_{t_{j+1}}^{f_{j+1}}(K_j)$.
    \end{enumerate}
    Note that in some cases, the local attainable set might be empty for some time horizon $T$. Finally, we can define $\mathcal A_{\mathcal F,T}(K)$ and $\mathcal A_{\mathcal F}(K)$ for a local Lipschitz function class $\mathcal F$, notice that the definition of both the function class depend on $K$.

    Therefore, the concept of UIP and UAP can be extended to the system with locally Lipschitz control families (i.e., consisting of locally Lipschitz function), as long as we replace $\mathcal A_{\mathcal F}$ with $\mathcal A_{\mathcal F}(K)$ in Definition~\ref{def:uap} and Definition~\ref{def:uip}. In the following subsection, we will extend to consider the UAP and UIP for locally Lipschitz control families. In this case, the controllability condition in Chow-Rashavsky theorem(Theorem~\ref{thm:chow}) and Proposition~\ref{prop:span} also holds, since they only concern the flow maps locally. Therefore, we can still study UIP for systems with locally Lipschitz control family based on these controllability conditions.

\subsection{Relationship between UAP and UIP}
This subsection is devoted to the relationship between UIP and UAP.
First, we recall the result in~\cite{li2022DeepLearningDynamical} that deduces UAP from UIP. Based on that result, we can further derive UAP from the sufficient conditions for UIP established in the previous sections.

The main body of this subsection illustrates that UAP and UIP cannot be derived from each other in the context of general control systems, by verifying two specific examples ~\eqref{eq:uip_not_uap} and \eqref{eq:uap_not_uip}. This actually proves Theorem~\ref{thm:uap_uip}. Subsequently, we provide the proof of Proposition~\ref{prop:uni_time}, bridging the relation between universal interpolation and approximation.
\subsubsection{A sufficient condition for UIP to imply UAP}
In many previous works\cite{li2022DeepLearningDynamical,ruiz-balet2023NeuralODEControl,tabuada2020UniversalApproximationPower}, the UAP of continuous-time ResNets has been derived from the established UIP using a similar technique. Specifically, these works consider a specific type of ``shrinking maps'' from $\mathbb{R}^d$ to $\mathbb{R}^d$ defined on a given compact set $K$. These maps are continuous and uniformly bounded, capable of contracting most parts of $K$ (in terms of measure) into a finite set of points within $K$. Thus, if the attainable set $\mathcal{A}_{\mathcal{F}}$ can approximate such shrinking maps in the $L^{\infty}$ sense, we can first contract the domain $K$ into a finite set of data points and then apply interpolation results to this finite set, thereby establishing the UAP for the control system. The following proposition provides a specific condition based on this methodology. For a detailed proof, refer to the proof of Theorem 2.1 in Section 4.3 of~\cite{li2022DeepLearningDynamical}.
\begin{proposition}
    \label{prop:uap_suf}
Let $\mathcal F$ be the control family of a control system that possesses the UIP. If for any compact set $K$ and increasing function $h:\mathbb R\to\mathbb R$, the attainable set $\mathcal A_{\mathcal F}$ is capable of approximating the function $(h(x_1),\cdots, h(x_d))$ in the $L^\infty(K)$ norm, then the system also possesses the UAP in $L^p$ sense for all $1\le p<\infty$.
\end{proposition}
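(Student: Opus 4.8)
\emph{Overall plan.} I would run the classical ``shrink, then interpolate'' construction; the only structural fact needed is that $\mathcal A_{\mathcal F}$ is closed under composition, since concatenating the piecewise-constant controls of two flows in $\mathcal A_{\mathcal F}$ yields another flow in $\mathcal A_{\mathcal F}$. Fix a continuous $F:\mathbb R^d\to\mathbb R^d$, a compact $K$, an exponent $p\in[1,\infty)$ and $\varepsilon>0$; the goal is to produce $\varphi=\varphi_2\circ\varphi_1\in\mathcal A_{\mathcal F}$ with $\|F-\varphi\|_{L^p(K)}$ small, where $\varphi_1$ is an approximate shrinking map supplied by the hypothesis and $\varphi_2$ is an interpolating flow supplied by the UIP. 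First I would enclose $K$ in a cube $Q=[a,b]^d$ and, by uniform continuity of $F$ on $Q$, choose $n$ large and cut $Q$ into the uniform grid $\{Q_{\mathbf k}\}_{\mathbf k\in\{0,\dots,n-1\}^d}$ of $n^d$ half-open subcubes, each with $\operatorname{osc}_{\overline{Q_{\mathbf k}}}F<\varepsilon/4$; write $q_{\mathbf k}$ for the ``lower-left'' corner of $Q_{\mathbf k}$, so that $q_{\mathbf k}\in\overline{Q_{\mathbf k}}\subseteq Q$ and the $q_{\mathbf k}$ are pairwise distinct.

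\emph{The interpolation ingredient.} Let $h:\mathbb R\to\mathbb R$ be a continuous increasing function that, on a slightly shrunk subinterval $I_k'$ of the $k$-th grid interval, is constant equal to the left endpoint of that interval, and that maps $[a,b]$ into $[a,b]$; then the coordinatewise map $\psi=(h(x_1),\dots,h(x_d))$ is constant equal to $q_{\mathbf k}$ on the shrunk subcube $Q_{\mathbf k}':=\prod_j I'_{k_j}$, and $\psi(Q)\subseteq Q$. The target values $F(q_{\mathbf k})$ may coincide, so I would first perturb them to pairwise distinct $\tilde y_{\mathbf k}$ with $\|\tilde y_{\mathbf k}-F(q_{\mathbf k})\|_\infty<\varepsilon/4$ (harmless when $d\ge2$; in the one-dimensional monotone setting one instead orders the cells and perturbs so that $\tilde y_0<\tilde y_1<\cdots$). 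By the UIP there is $\varphi_2\in\mathcal A_{\mathcal F}$ with $\|\varphi_2(q_{\mathbf k})-\tilde y_{\mathbf k}\|_\infty<\varepsilon/4$ for all $\mathbf k$; being a flow of globally Lipschitz vector fields, $\varphi_2$ is globally Lipschitz, so on a fixed ball $\bar B$ containing a unit neighbourhood of $Q$ it has some Lipschitz constant $L$ and sup-norm bound $R$.

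\emph{The shrinking ingredient and the error split.} Having fixed $\varphi_2$ (hence $L,R$), I would now choose the shrinkage so small that the ``bad set'' $E:=Q\setminus\bigcup_{\mathbf k}Q_{\mathbf k}'$ has $(R+\|F\|_{L^\infty(K)})^p\,|E|<\frac{1}{2}\varepsilon^p|K|$, and pick $\rho\in(0,\min\{1,\varepsilon/(4L)\})$. The hypothesis on $\mathcal A_{\mathcal F}$ then furnishes $\varphi_1\in\mathcal A_{\mathcal F}$ with $\|\varphi_1-\psi\|_{L^\infty(Q)}<\rho$, so in particular $\varphi_1(Q)\subseteq\bar B$; set $\varphi:=\varphi_2\circ\varphi_1\in\mathcal A_{\mathcal F}$. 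For $x\in K\cap Q_{\mathbf k}'$ we have $\|\varphi_1(x)-q_{\mathbf k}\|<\rho$, and chaining through $\varphi_2(q_{\mathbf k})$, $\tilde y_{\mathbf k}$ and $F(q_{\mathbf k})$ via the triangle inequality bounds the four terms by $L\rho$, $\varepsilon/4$, $\varepsilon/4$, $\varepsilon/4$, so $\|\varphi(x)-F(x)\|_\infty<\varepsilon$; on $K\cap E$ one only uses the crude bound $\|\varphi(x)-F(x)\|\le R+\|F\|_{L^\infty(K)}$. Since $K\subseteq E\cup\bigcup_{\mathbf k}Q_{\mathbf k}'$, combining the two estimates gives $\|F-\varphi\|_{L^p(K)}^p<\varepsilon^p|K|+\frac{1}{2}\varepsilon^p|K|\le 2\varepsilon^p|K|$, and since $\varepsilon>0$ and $p$ were arbitrary (the construction used $p$ only in the final inequality), the UAP in $L^p$ follows.

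\emph{Main obstacle.} The delicate point is the composition estimate: the hypothesis delivers only an $L^\infty$-approximation $\varphi_1\approx\psi$, and this error must be pushed through the \emph{fixed} map $\varphi_2$ without amplification. This is exactly why $\psi$, and hence all its approximants $\varphi_1$, must be uniformly bounded, so that $\varphi_2$ can be controlled on a single compact ball, and why the order of choices matters --- $\varphi_2$ (and the constants $L,R$) must be fixed before the shrinkage parameter and $\rho$, so that no circularity arises. A second technical nuisance is that an increasing scalar function cannot be simultaneously flat on every grid interval and uniformly close to a genuine step function; one accepts a transition region, which is precisely the bad set $E$, and makes $|E|$ small enough to be invisible to the $L^p$ norm. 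Finally, making the interpolation data $\{(q_{\mathbf k},\tilde y_{\mathbf k})\}$ admissible for the UIP (distinct sources, distinct targets) is the role of the small perturbation of the targets.
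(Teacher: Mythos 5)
Your proposal is correct and follows the same ``shrink, then interpolate'' strategy that the paper outlines and then delegates to Theorem~2.1 (Section 4.3) of \cite{li2022DeepLearningDynamical}: cover $K$ by a fine grid, use the hypothesis to obtain a flow $\varphi_1\approx\psi$ collapsing most of $K$ onto the grid corners, use the UIP to obtain $\varphi_2$ interpolating slightly perturbed (hence distinct) targets there, and compose. Your account supplies exactly the details the paper omits --- fixing $\varphi_2$ (and its Lipschitz and sup bounds on a ball enclosing $Q$) \emph{before} choosing the shrinkage width and the $L^\infty$ tolerance $\rho$, then splitting the $L^p$ error between the good cells and the small transition set $E$ --- and is sound.
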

Combined with Theorem~\ref{thm:con_aff}, and characterisation of UIP in 1 dimension, a corollary of Proposition~\ref{prop:uap_suf} is the UAP of continuous-time ResNet with non-linear activation function applied element-wisely.
\begin{corollary}
    \label{cor:resnet_uap}
    Any continuous-time ResNet of the form $\dot x(t)=W(t)\sigma(A(t)x(t)+b(t))$, where $W(t),A(t)\in\mathbb R^{d\times d}$, $b(t)\in\mathbb R^d$ and $\sigma:\mathbb R\to\mathbb R$ is a non-linear Lipschitz activation function applied element-wise to a vector in $\mathbb R^d$, possesses the UAP.
\end{corollary}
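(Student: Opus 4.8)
The plan is to obtain the UAP by combining \Cref{thm:con_aff} with the shrinking-map criterion of \Cref{prop:uap_suf}. Two things need to be verified: (i) that the ResNet control family $\mathcal F_{ResNet}=\{W\sigma(A\cdot+b)\mid W,A\in\mathbb R^{d\times d},\,b\in\mathbb R^d\}$ is affine invariant and contains a non-linear component, so that \Cref{thm:con_aff} yields the exact UIP; and (ii) that $\mathcal A_{\mathcal F_{ResNet}}$ can approximate, in $L^\infty(K)$, the diagonal maps $x\mapsto(h(x_1),\dots,h(x_d))$ for every compact $K$ and every (continuous) increasing $h:\mathbb R\to\mathbb R$, which is exactly the hypothesis of \Cref{prop:uap_suf}. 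Granting both, \Cref{prop:uap_suf} immediately delivers the UAP in $L^p$ for all $1\le p<\infty$. For $d=1$ the statement is already contained in the UIP$\Rightarrow$UAP discussion of \Cref{subsubsec:1dsys}, so we may assume $d\ge 2$.

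For (i): if $f(x)=W_0\sigma(A_0x+b_0)$ then $Wf(A\cdot-b)=(WW_0)\sigma\big((A_0A)\cdot+(b_0-A_0b)\big)$, which again has the form $W'\sigma(A'\cdot+b')$; hence $\mathcal F_{ResNet}$ is affine invariant in the sense of \Cref{def:aff}. Taking $W_0=A_0$ to be the matrix with all entries zero except a $1$ in the $(1,1)$ position and $b_0=0$ gives $f=(\sigma(x_1),0,\dots,0)\in\mathcal F_{ResNet}$, whose first component $\sigma(x_1)$ is non-linear and globally Lipschitz since $\sigma$ is. \Cref{thm:con_aff} then gives the exact UIP.

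For (ii): the one-dimensional family $\{w\sigma(a\cdot+b)\mid w,a,b\in\mathbb R\}$ is likewise affine invariant and contains the non-linear map $\sigma$, so by the one-dimensional characterisation and the UIP$\Rightarrow$UAP argument of \Cref{subsubsec:1dsys}, for any compact interval $I$, any continuous increasing $h$, and any $\delta>0$ there is a one-dimensional ResNet flow $\psi$ with $\sup_{y\in I}|\psi(y)-h(y)|<\delta$. Now embed this into $d$ dimensions one coordinate at a time: a layer $\dot x=W\sigma(Ax+b)$ with $W$ and $A$ having all entries zero except the $(j,j)$ entry reduces to $\dot x_j=w\sigma(ax_j+b_j)$ with all other coordinates frozen, so concatenating such layers realises any one-dimensional ResNet flow acting on coordinate $j$ alone as an element of $\mathcal A_{\mathcal F_{ResNet}}$ (a finite composition of finite-time flows). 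Fix a box $B=\prod_{j}[a_j,b_j]\supseteq K$; for each $j$ pick $\psi_j$ with $\sup_{[a_j,b_j]}|\psi_j-h|<\delta$, lift it to $\Psi_j\in\mathcal A_{\mathcal F_{ResNet}}$ acting only on coordinate $j$, and set $\Psi=\Psi_d\circ\cdots\circ\Psi_1$. Since the $\Psi_j$ act on disjoint coordinates, one has $\Psi(x)_j=\psi_j(x_j)$, hence $\|\Psi-(h(x_1),\dots,h(x_d))\|_{L^\infty(B)}<\delta$, and restricting to $K\subseteq B$ verifies the hypothesis of \Cref{prop:uap_suf}.

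The genuinely substantive inputs — the Fourier-analytic proof of the UIP (\Cref{thm:con_aff}) and the shrinking-map lemma (\Cref{prop:uap_suf}) — are already available, so the only real care is needed in step (ii): checking that the $(j,j)$-supported layers decouple the coordinates, that the composition of the $d$ coordinate-wise flows stays inside $\mathcal A_{\mathcal F_{ResNet}}$, and that "increasing $h$" in \Cref{prop:uap_suf} should be read as continuous increasing (which is what the shrinking-map construction in~\cite{li2022DeepLearningDynamical} uses). None of these points is difficult, but they are where a careful write-up must concentrate its effort.
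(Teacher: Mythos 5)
Your proof is correct and takes essentially the same route as the paper: deduce exact UIP from Theorem~\ref{thm:con_aff}, use the one-dimensional UIP$\Rightarrow$UAP discussion of \Cref{subsubsec:1dsys} to handle the coordinate-wise monotone maps, lift them to $d$ dimensions via layers with $(j,j)$-supported $W$ and $A$ (which freeze the other coordinates), and invoke \Cref{prop:uap_suf}. You have merely spelled out the affine-invariance computation and the coordinate-decoupling step, which the paper leaves implicit.
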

\begin{proof}
The UIP of such system is directly followed from Theorem~\ref{thm:con_aff}. As $\sigma$ is non-linear, the 1-dimensional system $\dot x(t)=\sigma(a(t)x(t)+c(t))$ possesses the UAP, see \Cref{subsubsec:1dsys}. Since for each $j=1,\cdots, d$, the function $(0,\cdots, 0, \sigma(x_j),0,\cdots,0)$(the $j$-th coordinate is $\sigma(x_j)$ and all others are zero) is in the control family, by composition, this implies that the condition in Proposition~\ref{prop:uap_suf} holds. Consequently, by Proposition~\ref{prop:uap_suf}, the UAP holds.
\end{proof}
Therefore, our result on UIP of certain ResNet structure can be extended to the UAP. However, the condition in Proposition~\ref{prop:uap_suf} just serves as a sufficient one and may not work for general control systems.

\subsubsection{UIP does not Imply UAP: An Example}
\label{subsubsec:uip_not_uap}
The examples presented below use the fact that divergence-free vector fields generates volume-preserving flows.
% Consequently, for a control system with a control family $\mathcal{F}$ consisting of divergence-free vector fields, any image set of its flow maps keeps the volume of the domain. As a result, the system cannot effectively approximate target functions that possess a different volume compared to the domain.
Therefore, such a control systems cannot approximate the target mapping with varying ``volume''.
On the other hand, we can demonstrate that such systems can still possess the UIP, as $\operatorname{Lie}\mathcal{F}$ can potentially be a large set, satisfying the rank condition \eqref{eq:controllability}. Specifically, we provide an example in two dimensions using polynomial vector fields.

For any two-dimensional vector field $V=(v_1(x_1,x_2),v_2(x_1,x_2)):\mathbb R^2\to \mathbb R^2$, if $\partial_{x_1}v_1+\partial_{x_2}v_2\equiv 0$, then the flow maps of $F$ are area-preserving. That is, $m(\varphi_t^V(U))=m(U)$ for any time $t$ and measurable set $U\subset \mathbb R^2$.

For any smooth function $f:\mathbb R^2\to \mathbb R$, let $ v(f)$ be the vector field $\curl f = (-f_{x_2},f_{x_1})$. Then we have $\operatorname{div} v(f)=0$. For smooth functions $f$ and $g$, a direct computation gives:
\begin{equation}
\label{eq:divlie}
\begin{aligned}
    &[v(f), v(g)]=[(-f_{x_2},f_{x_1}),(-g_{x_2},g_{x_1})]\\
    &=(f_{x_2}g_{x_1x_2}-g_{x_2}f_{x_1x_2}-f_{x_1}g_{x_2x_2}+g_{x_1}f_{x_2x_2},-f_{x_2}g_{x_1x_1}+g_{x_2}f_{x_1x_1}+f_{x_1}g_{x_1x_2}-g_{x_1}f_{x_1x_2})\\
    &=(-\partial_{x_2}(f_{x_1}g_{x_2}-f_{x_2}g_{x_1}), \partial_{x_1}(f_{x_1}g_{x_2}-f_{x_2}g_{x_1}))\\
    &=v(f_{x_1}g_{x_2}-f_{x_2}g_{x_1})
\end{aligned}
\end{equation}
Using~\eqref{eq:divlie}, we can find a small family $\mathcal{F}$ of two-dimensional polynomial vector fields such that $\operatorname{Lie}\mathcal F$ is the set of all divergence-free polynomial vector fields in $\mathbb R^2$.
\begin{lemma}
\label{lem:vollie}
    Let $\mathcal F:=\{ v(x_1),v(x_2),v(x_1^2x_2^2)\}=\{(0,1),(-1,0),(-2x_1^2x_2,2x_1x_2^2)\}$. Then, $\operatorname{Lie}\mathcal F=v(\mathbb R[x_1,x_2]):=\{v(f)\mid f\in \mathbb R[x_1,x_2]\}$.
\end{lemma}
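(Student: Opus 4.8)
The plan is to move the whole problem onto the level of \emph{potentials} and the Poisson bracket. Recall that $v(f)=\curl f=(-f_{x_2},f_{x_1})$, so $v\colon\mathbb R[x_1,x_2]\to\{\text{polynomial vector fields}\}$ is a linear map whose kernel is the constants, and the identity~\eqref{eq:divlie} says exactly that $[v(f),v(g)]=v(\{f,g\})$, where $\{f,g\}:=f_{x_1}g_{x_2}-f_{x_2}g_{x_1}$ is the canonical Poisson bracket on $\mathbb R[x_1,x_2]$. Thus $v$ is a Lie algebra homomorphism, and for any subset $S$ the image $v\bigl(\langle S\rangle_{\mathrm{Lie}}\bigr)$ is a bracket-closed subspace containing $v(S)$, while every iterated bracket of $v(S)$ is $v$ of an iterated bracket of $S$; hence $\operatorname{Lie}(v(S))=v\bigl(\langle S\rangle_{\mathrm{Lie}}\bigr)$. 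Applying this to $S=\{x_1,x_2,x_1^2x_2^2\}$, we get $\operatorname{Lie}\mathcal F=v(L)$, where $L\subseteq\mathbb R[x_1,x_2]$ is the smallest subspace containing $x_1,x_2,x_1^2x_2^2$ and closed under $\{\cdot,\cdot\}$. So it suffices to show $L\supseteq\bigoplus_{k\ge1}P_k$, where $P_k$ denotes the span of the degree-$k$ monomials; since $v$ annihilates constants, this already yields $v(L)\supseteq v(\mathbb R[x_1,x_2])$, and the reverse inclusion is trivial. (In fact $1=\{x_1,x_2\}\in L$, so $L=\mathbb R[x_1,x_2]$ outright.)

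The heart is a degree induction, using $\{P_a,P_b\}\subseteq P_{a+b-2}$. First I would clear the low degrees by direct computation: $x_1,x_2\in L$ by definition; $x_1^2x_2=\tfrac12\{x_1,x_1^2x_2^2\}$ and $x_1x_2^2=-\tfrac12\{x_2,x_1^2x_2^2\}$; then $x_1^2=\{x_1,x_1^2x_2\}$, $x_2^2=-\{x_2,x_1x_2^2\}$, $x_1x_2=\tfrac12\{x_1,x_1x_2^2\}$; and $x_1^3x_2=\tfrac14\{x_1^2,x_1^2x_2^2\}$, $x_1^3=\{x_1,x_1^3x_2\}$, with $x_2^3\in L$ by the symmetric computation. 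Hence $P_1,P_2,P_3\subseteq L$. For the inductive step, assume $P_3,\dots,P_k\subseteq L$ with $k\ge3$, and fix a monomial $x_1^px_2^q$ with $p+q=k+1$. If $q\ge2$, then $x_1^{p+1}x_2^{q-2}\in P_k\subseteq L$ and $\{x_2^3,\,x_1^{p+1}x_2^{q-2}\}=-3(p+1)\,x_1^px_2^q$, so $x_1^px_2^q\in L$. If $q\le1$, then $p\ge k\ge3$, so $x_1^{p-2}x_2^{q+1}\in P_k\subseteq L$ and $\{x_1^3,\,x_1^{p-2}x_2^{q+1}\}=3(q+1)\,x_1^px_2^q$, again giving $x_1^px_2^q\in L$. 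In either case the coefficient is nonzero, so $P_{k+1}\subseteq L$, and by induction $P_k\subseteq L$ for all $k\ge1$. Combining with the first paragraph, $\operatorname{Lie}\mathcal F=v(L)=v(\mathbb R[x_1,x_2])$.

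The only genuinely delicate part is the bookkeeping in the inductive step: for each target monomial of degree $k+1$ one must exhibit a pair of strictly lower-degree polynomials already known to lie in $L$ whose Poisson bracket is a \emph{nonvanishing} multiple of that monomial, and the case split $q\ge2$ versus $q\le1$ (bracketing against $x_2^3$, respectively $x_1^3$, which act as ``raising'' operators on the degree) is what guarantees every monomial is reached without the coefficient collapsing. Everything else — the homomorphism reduction via~\eqref{eq:divlie} and the finitely many base-case identities — is a routine direct calculation.
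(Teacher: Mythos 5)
Your proof is correct, and it takes a noticeably different route from the paper's. You make explicit what the paper leaves implicit: since \eqref{eq:divlie} says $v$ is a Lie algebra homomorphism from $(\mathbb R[x_1,x_2],\{\cdot,\cdot\})$ to polynomial vector fields, computing $\operatorname{Lie}\mathcal F$ reduces to computing the Poisson subalgebra $L$ generated by $\{x_1,x_2,x_1^2x_2^2\}$, and you then run a clean degree induction in which $x_1^3$ and $x_2^3$ (obtained in the base case) serve as raising operators that bump any monomial of degree $k$ up to any monomial of degree $k+1$, with a two-case split to keep the bracket coefficient nonzero. The paper instead works directly with $v$, first climbs the ``diagonal'' by iterating $\tfrac12[v(x_1^{m+1}x_2^m),v(x_1^2x_2^2)]=v(x_1^{m+2}x_2^{m+1})$ to get all $v(x_1^{m+1}x_2^m)$, and then descends to an arbitrary $v(x_1^ax_2^b)$ using the lowering identities $\tfrac1n[v(x_1),v(x_1^mx_2^n)]=v(x_1^mx_2^{n-1})$ and $\tfrac1m[v(x_1^mx_2^n),v(x_2)]=v(x_1^{m-1}x_2^n)$. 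Both arguments are elementary and of comparable length; yours has the advantage of isolating the Lie-homomorphism reduction as a reusable principle and of a uniform single induction on total degree, while the paper's raise-then-lower scheme avoids any case split at the cost of a slightly less symmetric bookkeeping. I verified your base-case identities and the inductive brackets $\{x_2^3,x_1^{p+1}x_2^{q-2}\}=-3(p+1)x_1^px_2^q$ and $\{x_1^3,x_1^{p-2}x_2^{q+1}\}=3(q+1)x_1^px_2^q$; the coefficients are indeed never zero in the respective cases, so the induction closes.
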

\begin{proof}
From~\eqref{eq:divlie}, we have $\frac{1}{2}[v(x_1),v(x_1^2x_2^2)]=v(x_1^2x_2)\in \operatorname{Lie} \mathcal F$. From the relation
$$\frac{1}{2}[v(x_1^{m+1}x_2^{m}),v(x_1^2x_2^2)]=v(x_1^{m+2}x_2^{m+1}),$$
we know by induction that  $v(x_1^{m+1}x_2^{m})\in \operatorname{Lie} \mathcal F$ for all $m\in \mathbb N^*$.

For any $v(x_1^mx_2^n)\in \operatorname{Lie} \mathcal F$ with $m,n\in \mathbb{N}^*$, we have  $\frac{1}{n}[v(x_1),v(x_1^mx_2^n)]=v(x_1^mx_2^{n-1})$ and $\frac{1}{m}[v(x_1^mx_2^n),v(x_2)]=v(x_1^{m-1}x_2^{n})$ are in $\operatorname{Lie}\mathcal F$. Thus, for any $a,b\in \mathbb N$, from $x_1^{a+b+1}x_2^{a+b}\in \operatorname{Lie} \mathcal F$, we can show by induction that  $x_1^ax_2^b\in \operatorname{Lie} \mathcal F$.

Since $v(x_1^ax_2^b)\in \operatorname{Lie} \mathcal F$ for any monomial $x_1^ax_2^b$, we have $\operatorname{Lie}\mathcal F=\{v(f)\mid f\in \mathbb R[x_1,x_2]\}$.
\end{proof}
\begin{proposition}
    \label{prop:uip_not_uap}
    The two-dimension control system
    \begin{equation}
        \label{eq:uip_not_uap}
        \left\{
        \begin{aligned}
        &\dot x_1=-\theta_1-2\theta_3 x_1^2x_2,\\
        &\dot x_2=\theta_2+2\theta_3 x_1x_2^2
        \end{aligned}
        \right.
    \end{equation}
with control $\bm \theta=(\theta_1,\theta_2,\theta_3)\in \mathbb R^3$ possesses the UIP, but does not possess the UAP.
\end{proposition}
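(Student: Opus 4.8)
The plan is to establish the two assertions separately, since they hinge on completely different features of \eqref{eq:uip_not_uap}. The first step is to identify the control family: the right-hand side of \eqref{eq:uip_not_uap} is exactly $\theta_1 v(x_2)+\theta_2 v(x_1)+\theta_3 v(x_1^2x_2^2)$, so $\mathcal F=\operatorname{span}\{v(x_1),v(x_2),v(x_1^2x_2^2)\}$. Bilinearity of the Lie bracket gives $\operatorname{Lie}\mathcal F=\operatorname{Lie}\{v(x_1),v(x_2),v(x_1^2x_2^2)\}$, which by \Cref{lem:vollie} equals $v(\mathbb R[x_1,x_2])$, the space of all divergence-free polynomial vector fields on $\mathbb R^2$. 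These fields are smooth but only locally Lipschitz, so I will work in the locally-Lipschitz formalism of \Cref{subsubsec:loc_lip}, in which the rank criterion \eqref{eq:controllability} and \Cref{thm:chow} remain applicable because the controllability argument is entirely local.

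For the exact UIP I would verify \eqref{eq:controllability}. Fix $N$ and distinct points $p_1,\dots,p_N\in\mathbb R^2$. Since the Lie bracket of two ``block-decoupled'' vector fields on $\mathbb R^{2N}$ is again block-decoupled and is computed blockwise, $\operatorname{Lie}\mathcal F_N$ evaluated at $(p_1,\dots,p_N)$ contains $\{(v(f)(p_1),\dots,v(f)(p_N)):f\in\mathbb R[x_1,x_2]\}$. It therefore suffices to show that for any prescribed $w_1,\dots,w_N\in\mathbb R^2$ there is a polynomial $f$ with $v(f)(p_k)=w_k$ for every $k$; since $v(f)=(-f_{x_2},f_{x_1})$, this amounts to prescribing $\nabla f(p_k)=g_k$ at $N$ distinct points, where $g_k=(w_k^{(2)},-w_k^{(1)})$. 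This is classical Hermite interpolation: with $P_k(x)=\prod_{j\neq k}|x-p_j|^2/|p_k-p_j|^2$, which equals $1$ at $p_k$ and vanishes to second order at every other $p_j$, the polynomial $f=\sum_k\big(g_k\cdot(x-p_k)\big)P_k(x)$ satisfies $\nabla f(p_k)=g_k$ for all $k$. Hence $\operatorname{Lie}\mathcal F_N(X_N)=\mathbb R^{2N}$ for every $N$ and every $X_N\in\Omega_N$, and as $\Omega_N$ is open and connected, \eqref{eq:controllability} yields exact UIP.

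For the failure of UAP I would use volume preservation. Every $g\in\mathcal F$ has the form $v(f)$, hence $\operatorname{div}g=0$, so by Liouville's theorem each flow satisfies $\det D\varphi_t^g\equiv 1$; consequently every $\varphi\in\mathcal A_{\mathcal F}$, being a composition of such flows, is an injective $C^1$ map with $|\det D\varphi|\equiv 1$, so $m(\varphi(A))=m(A)$ for every measurable $A$. Now take $F(x)=x/2$, $K=[0,1]^2$, and any $p\in[1,\infty)$. If $\|\varphi-F\|_{L^p(K)}\le\varepsilon$ then, by Chebyshev's inequality, $E:=\{x\in K:|\varphi(x)-F(x)|>\delta\}$ has $m(E)\le(\varepsilon/\delta)^p$, while $\varphi(K\setminus E)$ lies in the $\delta$-neighbourhood of $F(K)=[0,1/2]^2$, a square of area $(1/2+2\delta)^2$. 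Injectivity and measure preservation give $m(\varphi(K\setminus E))=m(K\setminus E)\ge 1-(\varepsilon/\delta)^p$, so $1-(\varepsilon/\delta)^p\le(1/2+2\delta)^2$; taking $\delta=1/10$ forces $\varepsilon$ below an absolute positive constant. Hence no $\varphi$ can approximate $F$ on $K$ to arbitrary precision, and UAP fails.

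The bookkeeping for the Lie algebra and the interpolation construction is routine; I expect the two genuinely delicate points to be (i) justifying the use of \Cref{thm:chow} and \eqref{eq:controllability} even though the polynomial vector fields lack global flow maps — this is precisely what the locally-Lipschitz setting of \Cref{subsubsec:loc_lip} accommodates, since steering points is a local construction — and (ii) passing from ``$\varphi$ is volume preserving'' to ``$\varphi$ cannot be close in $L^p$ to a strict contraction'', because $L^p$-proximity does not control the Jacobian pointwise; the way around this is to transfer the estimate to the images of sets using injectivity together with the exact identity $m(\varphi(A))=m(A)$, rather than reasoning about derivatives. I regard (ii) as the main obstacle to a clean write-up.
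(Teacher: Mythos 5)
Your proof is correct, and both halves take a genuinely different route from the paper's. For the UIP, the paper verifies the rank condition \eqref{eq:controllability} by an explicit Vandermonde construction: it picks directions $a\neq b$ so that $\{ax_{1,i}-x_{2,i}\}$ and $\{bx_{1,i}-x_{2,i}\}$ are distinct, takes the polynomials $(ax_1-x_2)^i$ and $(bx_1-x_2)^{j}$, and reduces invertibility to a $2\times 2$ block matrix built from Vandermonde matrices. You instead observe that $f\mapsto (v(f)(p_1),\dots,v(f)(p_N))$ being surjective onto $\mathbb R^{2N}$ is exactly a gradient-interpolation problem, and you solve it with the Hermite-type basis $f=\sum_k\big(g_k\cdot(x-p_k)\big)\prod_{j\neq k}|x-p_j|^2/|p_k-p_j|^2$. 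This is a cleaner reduction (no genericity of the directions $a,b$ to argue), though the paper's concrete computation is also short. For the UAP failure, the paper targets the constant map $F\equiv 0$ on the unit disc and uses an $L^2$ rearrangement-type estimate ($\int_{\varphi(K)}(u^2+v^2)\,du\,dv\ge\pi/2$ because among sets of area $\pi$ the minimizer is the unit disc); you target the contraction $F(x)=x/2$ on $[0,1]^2$ and use a Chebyshev plus volume-counting argument: a small exceptional set where $|\varphi-F|>\delta$ cannot absorb the discrepancy between $m(K)=1$ and $m\big((F(K))_\delta\big)=(1/2+2\delta)^2$. Your argument is slightly more elementary (no rearrangement inequality), and it is phrased uniformly in $p$ from the start. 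You also correctly flag the two genuine subtleties — the locally-Lipschitz justification of \Cref{thm:chow} via \Cref{subsubsec:loc_lip}, which the paper's proof leaves implicit, and the fact that $L^p$-proximity does not control the Jacobian pointwise, which both you and the paper circumvent by arguing through measures of image sets rather than derivatives.
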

\begin{proof}

   This system is just $(\dot x_1,\dot x_2)=\theta_1 v(x_1)+\theta_2 v(x_2)+\theta_3 v(x_1^2x_2^2)$.

   By Lemma~\ref{lem:vollie}, the control family $\mathcal F=\{\theta_1 v(x_1)+\theta_2 v(x_2)+\theta_3 v(x_1^2x_2^2)\mid \theta_1,\theta_2,\theta_3\in \mathbb R\}$ satisfies $\operatorname{Lie}\mathcal F=v(\mathbb R[x_1,x_2])$.  To show the system has UIP, by~\eqref{eq:controllability}, we only need to verify that for any given distinct samples $\{(x_{1,i},x_{2,i})\}_{i=1}^{N}\subseteq \mathbb R^2$, there exist polynomials $f_1,\cdots,f_{2N} \in \mathbb R[x_1,x_2]$ such that  the vectors
   \begin{equation}
    \{F_i:=(v(f_i)(x_{1,1},x_{2,1}),\cdots, v(f_i)(x_{1,N},x_{2,N}))\}_{i=1}^{2N},
   \end{equation}
are linearly independent.

   Since the samples are distinct, there exists $a, b\in \mathbb R, a\neq b$ such that $\{ax_{1,i}-x_{2,i}\}_{i=1}^N$ and $\{bx_{1,i}-x_{2,i}\}_{i=1}^N$ are both sets of $N$ distinct real numbers. We choose the $2N$ polynomials as $f_i(x_1,x_2)=(ax_1-x_2)^i, i=1,2\cdots, N, f_j(x_1,x_2)=(bx_1-x_2)^{j-N}, j=N+1,\cdots, 2N$. For convenience, we make the permutation of coordinates
   \begin{equation}
       (z_1, z_2,\cdots, z_{2N})\mapsto (z_1, z_3,\cdots, z_{2N-1}, z_2,z_4,\cdots,z_{2N})
   \end{equation}
   on $F_i$ and consider
   $$
   \tilde F_i:=(-\partial_{x_2}f_i(x_{1,1},x_{2,1}),\cdots,-\partial_{x_2}f_i(x_{1,N},x_{2,N}), \partial_{x_1}f_i(x_{1,1},x_{2,1})\cdots, \partial_{x_1}f_i(x_{1,N},x_{2,N})),
   $$
for $i=1,\cdots, 2N$. By a direct computation, the matrix $(\tilde F_1,\cdots,\tilde F_{2N})^\top$ has the form:
\begin{equation}
(\tilde F_1,\cdots,\tilde F_{2N})^\top=
   \left ( \begin{array}{cc}
      A   &  B\\
      aA   & bB
    \end{array}
    \right ),
\end{equation}
where $A$ is the $n\times n$ Vandermonde matrix of $ax_{1,1}-x_{2,1},\cdots, ax_{1,N}-x_{2,N} $, and $B$ is the Vandermonde matrix of $bx_{1,1}-x_{2,1},\cdots, bx_{1,N}-x_{2,N} $. Thus, $A$ and $B$ are both invertible. After a simple row transformation, $(\tilde F_1,\cdots,\tilde F_{2N})^\top$ is equivalent to
\begin{equation}
   \left ( \begin{array}{cc}
      A   &  B\\
      0   & (b-a)B
    \end{array}
    \right ).
\end{equation}
Since $b\neq a$, the matrix $(\tilde F_1,\cdots,\tilde F_{2N})^\top$ is invertible. As $\tilde F_i$ is just a coordinate permutation of $F_i$($i=1,\cdots,2N$), it follows that the vectors $\{F_i\}_{i=1}^{2N}$ are linearly independent.

Now we show that the above control system does not possess the UAP. Consider the compact set $K$ as the closed unit disc $B_1(0)$ in $\mathbb R^2$. We know that for any flow map $\varphi\in\mathcal A_{\mathcal F}$, the area of $\varphi(K)$ is invariant as $\pi$. Now we consider the constant function $F\equiv (0,0)$ as the target. It follows that
$$
\Vert F-\varphi_t\Vert_{L^2(K)}^2=\int_K \Vert\varphi_t(x_1,x_2)\Vert^2 \,dx\,dy=\int_{\varphi(K)} (u^2+v^2) \,du\,dv\ge 2\pi \int_{0}^1 r^3 \,dr=\frac \pi 2,
$$
Therefore, the system cannot approximate $F$.

\end{proof}
\subsubsection{UAP does not Imply UIP: An Example}
\label{subsubsec:uap_not_uip}
As the UAP we defined is in $L^p$ sense, it is still possible for a control system to possess UAP if it cannot interpolate data points over a measure zero set in $\mathbb R^d$. The following proposition gives a specific example of a 2-dimensional system with UAP but has a fixed point at the origin.
\begin{proposition}
\label{prop:uap_not_uip}
    The 2-dimensional control system
    \begin{equation}
        \label{eq:uap_not_uip}
        \left\{
        \begin{aligned}
        &\dot x_1=\theta_1x_1^3+\theta_2x_1^2+\theta_3x_2,\\
        &\dot x_2=\theta_4x_2^3+\theta_5x_2^2+\theta_6x_1
        \end{aligned}
        \right.
    \end{equation}
    possesses UAP but does not possess UIP.
\end{proposition}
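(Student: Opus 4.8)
The statement has two halves of very different difficulty, and I would handle them separately.

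\textbf{UIP fails.} This is immediate: the right-hand side of \eqref{eq:uap_not_uip} vanishes identically at $x=(0,0)$ for every $\theta$, so $(0,0)$ is a common equilibrium of the control family $\mathcal F$ and hence a fixed point of every flow map, i.e. $\varphi(0,0)=(0,0)$ for all $\varphi\in\mathcal A_{\mathcal F}(K)$ whenever $0\in K$. The dataset $\{((0,0),(1,1))\}$ then cannot be interpolated, not even approximately, so exact (and approximate) UIP fails. This is the ``conserved quantity'' of this example: the origin is pinned.

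\textbf{UAP holds: controllability away from the origin.} The plan is to show the system is fully controllable off the origin and then bury the bad behaviour near the origin (and near the coordinate axes) in the $L^p$ error. First, $\mathcal F$ is symmetric ($\theta\mapsto-\theta$), and the fields are polynomial hence only locally Lipschitz, so one works throughout with $\mathcal A_{\mathcal F}(K)$ as in \Cref{subsubsec:loc_lip}, composing short-time flow segments that keep $K$ in a fixed compact set; the controllability criteria of Theorem~\ref{thm:chow} still apply since they are local. The core computation is that $\operatorname{Lie}\mathcal F$ is large: $\mathcal F$ contains $(x_2,0),(0,x_1),(x_1^2,0),(x_1^3,0),(0,x_2^2),(0,x_2^3)$, and from the bracket identity $[\,x_1^{a}x_2^{b}e_1,\;x_1^{c}x_2^{d}e_1\,]=(c-a)\,x_1^{a+c-1}x_2^{b+d}e_1$ and its $e_2$-analogue, an induction shows $\operatorname{Lie}\mathcal F$ contains $(p,0)$ for every polynomial $p$ lying in the ideal $(x_1^2,x_2)$ (equivalently $p(0)=\partial_{x_1}p(0)=0$) and $(0,q)$ for every $q\in(x_1,x_2^2)$. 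Now fix $N$ and a configuration $X_N=(x^{(1)},\dots,x^{(N)})$ of distinct \emph{nonzero} points: since $0$ is not among the $x^{(i)}$, a Lagrange-interpolation argument shows these two ideals each separate $\{x^{(i)}\}$, so the vector fields above already evaluate to a spanning set of $\mathbb R^{2N}$ at $X_N$. The set of such $X_N$ is open and connected (here $d=2$), so Theorem~\ref{thm:chow} gives controllability of the $Nd$-system \eqref{eq:Nd_system} on it — a ``UIP away from the origin'': any $N$ distinct nonzero source points can be steered simultaneously to any $N$ distinct nonzero targets.

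\textbf{UAP holds: upgrading to $L^p$-approximation.} Given a continuous $F$ and compact $K$, I would follow the ``shrinking map'' technique of \cite{li2022DeepLearningDynamical,ruiz-balet2023NeuralODEControl,tabuada2020UniversalApproximationPower}: discard from $K$ a small ball $B_\delta(0)$ and a thin neighbourhood of the coordinate axes (both of arbitrarily small measure), partition the remainder into small cells $K_i$ with centres $z_i$, and build a flow $\varphi_1$ collapsing each $K_i$ into a tiny ball around a distinct nonzero grid point $p_i$. This uses the decoupled subsystem $\dot x_j=\theta x_j^3+\theta' x_j^2$, which is controllable on each of $(0,\infty)$ and $(-\infty,0)$ and hence enjoys the one-dimensional UAP there (cf. \Cref{subsubsec:1dsys}), so its flows $L^\infty$-approximate near-step maps that crush intervals onto grids. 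Then the ``UIP away from the origin'' supplies $\varphi_2$ with $\varphi_2(p_i)=q_i$, where $q_i$ is a small perturbation of $F(z_i)$ keeping the $q_i$ distinct and nonzero. The composition $\varphi_2\circ\varphi_1$ is $L^p(K)$-close to $F$: the contributions of $B_\delta(0)$ and the axial neighbourhood are $O(\mathrm{meas}^{1/p})$ since all flows used keep $K$ bounded, the cell-collapse error is controlled by $\operatorname{diam}K_i$ via uniform continuity of $F$, and the $q_i$-versus-$F(z_i)$ perturbation is negligible.

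The Lie-bracket computation, the separation argument, and the final $L^p$ accounting are routine. The main obstacle is the collapse step in the third part: one must collapse $K$ across all four open quadrants at once, but within each coordinate the \emph{same} control $\theta$ drives $\dot x_j$ on $\{x_j>0\}$ and on $\{x_j<0\}$, with the origin an unavoidable fixed point, so a single decoupled flow cannot crush two separated regions onto prescribed nonzero locations simultaneously. Making this rigorous therefore needs either a careful sequential construction — collapse one quadrant at a time and track that subsequent flows only push the already-built grids toward the axes, which is again $L^p$-harmless — or use of the coupling terms $\theta_3x_2,\theta_6x_1$ to move mass across the axes, or an appeal to general UAP results for polynomial control systems from \cite{li2022DeepLearningDynamical} after verifying their hypotheses for this $\mathcal F$.
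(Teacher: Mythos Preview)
Your approach mirrors the paper's: failure of UIP via the fixed origin, controllability away from the origin via a Lie-algebra computation (your description of $\operatorname{Lie}\mathcal F$ matches the paper's Lemma~\ref{lem:2} exactly), and then the shrinking-map route to $L^p$ UAP. So structurally you are on the same track.

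The one substantive issue is the ``main obstacle'' you flag at the end. It is not an obstacle. The decoupled one-dimensional system $\dot x=\theta_1 x^3+\theta_2 x^2$ \emph{is} simultaneously controllable on points lying on both sides of the origin: its Lie algebra contains $x^n$ for every $n\ge 2$ (from $[x^n,x^2]=(n-2)x^{n+1}$), and for distinct nonzero reals $x_1,\dots,x_{M+N}$ the vectors $(x_1^n,\dots,x_{M+N}^n)$, $n=2,\dots,M+N+1$, are linearly independent by a Vandermonde argument. Hence the $N$-fold system is controllable on the domain $\{x_1<\cdots<x_M<0<x_{M+1}<\cdots<x_{M+N}\}$, and by the discussion in \Cref{subsubsec:1dsys} the one-dimensional flows $C(K)$-approximate every increasing $h$ with $h(0)=0$. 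This is precisely the paper's Lemma~\ref{lem:1}. Consequently the product map $(h(x_1),h(x_2))$ can be approximated on the full square $[-L,L]^2$, not just on one quadrant at a time, and there is no need to excise a neighbourhood of the axes, to process quadrants sequentially, or to invoke the coupling terms. Your confusion seems to be between the flow of a \emph{single} field $\theta_1 x^3+\theta_2 x^2$ (which indeed ties the two half-lines together) and the flow of the \emph{control system} with time-varying $\theta$, whose reachable set is governed by the full Lie algebra.

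Once this is fixed, the rest of your argument goes through exactly as in the paper: shrink $K$ onto a finite grid of distinct nonzero points via $(h(x_1),h(x_2))$, then interpolate that grid using the ``UIP away from the origin'' you established, and absorb the residual near the origin in the $L^p$ error.
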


It is obvious that all the flow maps of system~\eqref{eq:uap_not_uip} have a fixed point at the origin. Therefore, we only need to show that system~\eqref{eq:uap_not_uip} possesses the UAP. The key idea is to show that $\mathcal A_{\mathcal F}$ can still interpolate any finite set of data points without input value and output value at the origin, and then apply the sufficient condition used in Proposition~\ref{prop:uap_suf} to derive UAP from the weakened UIP. The proof is almost the same with the proof of Theorem 4.11 in Section 4.3 of~\cite{li2022DeepLearningDynamical}, provided the following two auxiliary results.
\begin{lemma}
    \label{lem:1}
    For any compact set $K\subset \mathbb R$, increasing function $h:\mathbb R\to\mathbb R$ with $h(0)=0$ and $\varepsilon >0$, there exists a flow map $\varphi$ of the following 1-dimensional control system
    \begin{equation}
        \label{eq:lem_1d_sys}
        \dot x=\theta_1 x^3+\theta_2 x^2
    \end{equation}
    with control $\bm \theta=(\theta_1,\theta_2)$, such that $\Vert\varphi-h\Vert_{C(K)}<\varepsilon$
\end{lemma}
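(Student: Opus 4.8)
The plan is to reduce the $C(K)$-approximation of $h$ to an \emph{interpolation} problem for the lifted multi-point system, and then to verify a Lie-bracket rank condition so that \Cref{thm:chow} applies. Since $x^3$ and $x^2$ both vanish at the origin, every flow map of \eqref{eq:lem_1d_sys} is increasing and fixes $0$. So, enclosing $K$ in an interval $[a,b]$, I first replace $h$ by the strictly increasing function $\tilde h(x)=h(x)+\delta x$ with $\delta>0$ so small that $\|\tilde h-h\|_{C(K)}<\varepsilon/2$; note $\tilde h(0)=0$, and $\tilde h(x)$ has the same sign as $x$ because $h(x)$ has the sign of $x$ (as $h(0)=0$ and $h$ is increasing). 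Then I choose a partition $a=x_0<x_1<\cdots<x_M=b$, containing $0$ if $0\in[a,b]$, fine enough that $\tilde h(x_i)-\tilde h(x_{i-1})<\varepsilon/2$ for all $i$. It suffices to find $\varphi\in\mathcal A_{\mathcal F}(K)$ with $\varphi(x_i)=\tilde h(x_i)$ for every $i$: then for $x\in[x_{i-1},x_i]$ monotonicity of $\varphi$ and $\tilde h$ gives $|\varphi(x)-\tilde h(x)|\le \tilde h(x_i)-\tilde h(x_{i-1})<\varepsilon/2$, hence $\|\varphi-h\|_{C(K)}<\varepsilon$. The data point at $0$ (if present) is matched automatically since $\varphi(0)=0=\tilde h(0)$, so only the finitely many nonzero $x_i$ need be interpolated.

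Next I recast this as controllability. The system \eqref{eq:lem_1d_sys} is control-affine, $\dot x=\theta_1 v_1(x)+\theta_2 v_2(x)$ with $v_1(x)=x^3$, $v_2(x)=x^2$, and $\mathcal F=\{\theta_1 v_1+\theta_2 v_2\mid \theta\in\mathbb R^2\}$ is symmetric. Because the flow cannot cross the equilibrium at $0$, the nonzero $x_i$ keep their signs; writing the relevant data as one block of negative and one block of positive abscissae, both the source $M$-tuple $(x_1,\dots,x_M)$ and the target $M$-tuple $(\tilde h(x_1),\dots,\tilde h(x_M))$ lie in the same connected component $\Omega'$ of $\{(z_1,\dots,z_M)\mid z_1<\cdots<z_M,\ z_k\neq 0\}$, on which every coordinate is distinct and nonzero. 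Lifting to the $M$-point control family $\mathcal F_M=\{(f(z_1),\dots,f(z_M))\mid f\in\mathcal F\}$, the interpolation amounts to controllability of $\mathcal F_M$ on $\Omega'$, which by \Cref{thm:chow} holds once $\operatorname{Lie}\mathcal F_M|_q=\mathbb R^M$ for all $q\in\Omega'$ (the bracket condition is local, so the locally Lipschitz framework of \Cref{subsubsec:loc_lip} poses no obstruction).

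The crux is the Lie-algebra computation. In one dimension $[f,g]=g'f-f'g$, so $[v_1,v_2]=2x\,x^3-3x^2\,x^2=-x^4$, whence $x^4\in\operatorname{Lie}\mathcal F$; and $[v_2,x^k]=k x^{k-1}\,x^2-2x\,x^k=(k-2)x^{k+1}$, so by induction $x^k\in\operatorname{Lie}\mathcal F$ for every $k\ge 2$. Since the lift of a bracket is the bracket of the lifts, $\operatorname{Lie}\mathcal F_M$ contains $(z_1^k,\dots,z_M^k)$ for all $k\ge 2$; evaluating those for $k=2,3,\dots,M+1$ at $q=(a_1,\dots,a_M)\in\Omega'$ yields the matrix $[\,a_i^{\,k}\,]$, whose determinant is $\big(\prod_i a_i^2\big)\prod_{i<j}(a_j-a_i)\neq 0$ because the $a_i$ are distinct and nonzero. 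Hence the rank condition holds on all of $\Omega'$, producing the desired interpolant $\varphi$ and, by the reduction above, proving the lemma. The conceptual heart is this short bracket computation; the only genuinely delicate point is that \eqref{eq:lem_1d_sys} is not globally Lipschitz, so one must carry out the steering through short-time flows that keep the image of $[a,b]$ inside a fixed bounded interval, ensuring the composed maps stay in $\mathcal A_{\mathcal F}(K)$ — exactly the kind of bookkeeping accommodated by \Cref{subsubsec:loc_lip}, and where I would concentrate the care in writing out the full argument.
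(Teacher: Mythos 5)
Your proposal is correct and follows essentially the same route as the paper: reduce the $C(K)$ approximation to exact interpolation at a fine partition via monotonicity of the flow maps, lift to the multi-point system, compute Lie brackets $[x^n,x^2]=(n-2)x^{n+1}$ to obtain $x^k$ for all $k\geq 2$, and verify a Vandermonde rank condition so that \Cref{thm:chow} yields controllability on the sign-fixed domain. The only cosmetic differences are that you explicitly construct the strictly increasing approximant $\tilde h=h+\delta x$ (where the paper appeals to density of strictly increasing functions while implicitly needing $\tilde h(0)=0$, which your choice makes visible) and you explicitly flag the locally Lipschitz bookkeeping of \Cref{subsubsec:loc_lip}, a point the paper's proof of this lemma leaves tacit.
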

\begin{proof}
    It is straightforward to see that $\varphi(0)=0$ for all $\varphi\in\mathcal A_{\mathcal F}$. We first show that the set of flow maps $\mathcal A_{\mathcal F}$ can interpolate any dataset $\{(x_i,y_i)\}_{i=1}^{M+N}$, where $M,N$ are positive integers, $x_1<\cdots< x_M<0<x_{M+1}<\cdots <x_{M+N}$ and $y_1<\cdots< y_M<0<y_{M+1}<\cdots< y_{M+N}$. Similar to the discussion in Section~\ref{subsubsec:1dsys}, it only requires to check the controllability of the $(M+N)$-fold system on the following domain:
\begin{equation}
    \Omega_{M,N}:=\left \{(x_1,\cdots,x_{M+N})\mid x_1<\cdots< x_M<0<x_{M+1}<\cdots <x_{M+N}\right \}\subset \mathbb R^{M+N}.
\end{equation}
A basis of the control family $\mathcal F$ of system~\eqref{eq:lem_1d_sys} is $x^3$ and $x^2$. Apply the equality $[x^n,x^2]=(n-2)x^{n+1}$ inductively for all $n\ge 3$, we can show that $x^n\in \operatorname{Lie}\mathcal F$ for all $n\ge 3$. Therefore, for any $X=(x_1,\cdots,x_{M+N})\in\Omega_{M,N}$, the vectors $(x_1^n,\cdots, x_{M+N}^n)\in \operatorname{Lie} \mathcal F^{M+N}(X)$ for all $n$. Since $x_1,\cdots, x_{M+N}$ are all distinct and non-zero in $\Omega_{M+N}$, $\{(x_1^n,\cdots, x_{M+N}^n)\}_{n=2}^{M+N+1}$ form a basis of $\mathbb R^{M+N}$. By Theorem~\ref{thm:chow}, the $(M+N)$-fold system is controllable for all $M$ and $N$. Therefore, for any strictly increasing continuous function $h$ with $h(0)=0$, $\mathcal A_{\mathcal F}$ can interpolate any finite data points sampled from the graph of $h$.

Consequently, similar to we have discussed in Section~\ref{subsubsec:1dsys}, that implies that $\mathcal A_{\mathcal F}$ can approximate any increasing function $h$ with $h(0)=0$. Given that the set of strictly increasing continuous functions is dense in the set of increasing continuous functions with respect to the $C(K)$ norm, we deduce the conclusion of the lemma.
\end{proof}

\begin{lemma}
    \label{lem:2}
    System~\eqref{eq:uap_not_uip} can interpolate any finite set of data points $\{(x_i,y_i)\}_{i=1}^{N}$ such that $\{x_i\}_{i=1}^{N}$ and $\{y_i\}_{i=1}^{N}$ are both $N$ distinct and \emph{non-zero} points in $\mathbb R^2$.
\end{lemma}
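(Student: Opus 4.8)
The plan is to reduce the statement to a controllability question for the $N$-fold lift of~\eqref{eq:uap_not_uip} and then apply the Chow--Rashevsky theorem (Theorem~\ref{thm:chow}), exactly as in the proof of Lemma~\ref{lem:1}. Given distinct nonzero points $x_1,\dots,x_N$ and distinct nonzero points $y_1,\dots,y_N$ in $\mathbb R^2$, both $N$-tuples lie in the open set $\Omega_N=\{(p_1,\dots,p_N)\in(\mathbb R^2\setminus\{0\})^N\mid p_i\ne p_j\ \text{for}\ i\ne j\}\subset\mathbb R^{2N}$, which is connected since it is the configuration space of $N$ points in the connected surface $\mathbb R^2\setminus\{0\}$ (one moves the points to a target configuration one at a time, each time along a path avoiding the finitely many others). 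The vector fields in the control family of~\eqref{eq:uap_not_uip} are polynomial, hence smooth, so Theorem~\ref{thm:chow} is applicable in the locally Lipschitz framework of~\Cref{subsubsec:loc_lip}; thus it suffices to verify the rank condition $\operatorname{Lie}\mathcal F_N|_q=\mathbb R^{2N}$ for every $q\in\Omega_N$, where (as in the proof of Lemma~\ref{lem:1}) brackets of lifted fields are lifts of brackets, so $\operatorname{Lie}\mathcal F_N|_q=\{(V(p_1),\dots,V(p_N))\mid V\in\operatorname{Lie}\mathcal F\}$.

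The heart of the argument is a Lie-bracket computation showing that $\operatorname{Lie}\mathcal F$ contains $P\,\partial_{x_1}$ for every polynomial $P$ with $P(0)=0$ and $\partial_{x_1}P(0)=0$, and symmetrically $Q\,\partial_{x_2}$ for every polynomial $Q$ with $Q(0)=0$ and $\partial_{x_2}Q(0)=0$. Writing the generators as $x_1^3\partial_{x_1},\ x_1^2\partial_{x_1},\ x_2\partial_{x_1},\ x_2^3\partial_{x_2},\ x_2^2\partial_{x_2},\ x_1\partial_{x_2}$, one first obtains $x_1^m\partial_{x_1}\in\operatorname{Lie}\mathcal F$ for all $m\ge2$ by iterating $[x_1^n\partial_{x_1},x_1^2\partial_{x_1}]=(2-n)x_1^{n+1}\partial_{x_1}$, and likewise $x_2^m\partial_{x_2}\in\operatorname{Lie}\mathcal F$ for all $m\ge2$. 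Cross-brackets such as $[x_2^m\partial_{x_2},x_2\partial_{x_1}]=x_2^m\partial_{x_1}$ and $[x_2^m\partial_{x_1},x_1^k\partial_{x_1}]=k\,x_1^{k-1}x_2^m\partial_{x_1}$ then produce every monomial vector field $x_1^jx_2^m\partial_{x_1}$ except $\partial_{x_1}$ and $x_1\partial_{x_1}$, i.e. all of $\{P\partial_{x_1}\mid P(0)=0,\ \partial_{x_1}P(0)=0\}$, and symmetrically on the $\partial_{x_2}$ side. The coupling terms $x_2\partial_{x_1}$ and $x_1\partial_{x_2}$ are indispensable here: without them the system decouples and all $\partial_{x_2}$-fields would vanish identically on $\{x_2=0\}$.

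With this, the rank condition becomes a polynomial-interpolation fact. Fix $q=(p_1,\dots,p_N)\in\Omega_N$ and suppose a covector $(c^{(k)}_1,c^{(k)}_2)_{k=1}^N$ annihilates $\operatorname{Lie}\mathcal F_N|_q$. Testing against the fields $P\,\partial_{x_1}$ above gives $\sum_{k=1}^N c^{(k)}_1 P(p_k)=0$ for every polynomial $P$ with $P(0)=0$ and $\partial_{x_1}P(0)=0$; since this space is the common kernel of the two independent functionals $P\mapsto P(0)$ and $P\mapsto\partial_{x_1}P(0)$, the functional $P\mapsto\sum_k c^{(k)}_1 P(p_k)$ must be a linear combination of $\operatorname{ev}_0$ and $\partial_{x_1}|_0$, i.e. $\sum_k c^{(k)}_1\delta_{p_k}=\alpha\operatorname{ev}_0+\beta\,\partial_{x_1}|_0$ on $\mathbb R[x_1,x_2]$. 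Because $p_1,\dots,p_N$ are distinct and none equals the origin, the functionals $\delta_{p_1},\dots,\delta_{p_N},\operatorname{ev}_0,\partial_{x_1}|_0$ are linearly independent on $\mathbb R[x_1,x_2]$ — for each index one exhibits a polynomial isolating it, e.g. products of the quadratics $\Vert x-p_j\Vert^2$, which also kill both functionals at the origin — forcing all $c^{(k)}_1=0$; symmetrically all $c^{(k)}_2=0$. Hence $\operatorname{Lie}\mathcal F_N|_q=\mathbb R^{2N}$, and Theorem~\ref{thm:chow} yields $\varphi\in\mathcal A_{\mathcal F}$ with $\varphi(x_i)=y_i$ for all $i$.

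The main obstacle is the Lie-algebra bookkeeping in the second step: one must track exactly which monomial vector fields are reachable and confirm that precisely $1$ and $x_1$ (resp. $1$ and $x_2$) remain unreachable on the $\partial_{x_1}$ (resp. $\partial_{x_2}$) side, because it is exactly this pair of ``missing'' monomials that forces the nonzero hypothesis in the statement through the functional-independence (generalized-Vandermonde) argument, mirroring the role of nonzero data in Lemma~\ref{lem:1}. A secondary point to dispatch cleanly is the connectedness of $\Omega_N$, so that Chow--Rashevsky delivers genuine controllability between the two prescribed configurations, together with the observation that invoking it for the smooth but only locally Lipschitz polynomial fields is legitimate by~\Cref{subsubsec:loc_lip}.
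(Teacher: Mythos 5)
Your proof is correct and follows essentially the same route as the paper: reduce to controllability of the $N$-fold lifted system, show by iterated Lie brackets that the Lie hull contains every monomial vector field on the $\partial_{x_i}$ side except $\partial_{x_i}$ and $x_i\partial_{x_i}$, and then verify the Chow--Rashevsky rank condition at all nonzero distinct configurations. The paper invokes ``the interpolation property of polynomials'' at the last step where you give a slightly more explicit dual/annihilator argument, and you also spell out the connectedness of $\Omega_N$ and the locally Lipschitz caveat, but these are elaborations of the same argument rather than a different approach.
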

\begin{proof}
    We need to check the controllability of the $N$-fold system on the following domain:
    \begin{equation}
        \tilde\Omega_N:=\{(x_1,\cdots,x_N)\mid 0\neq x_i\in\mathbb R^2.\ \forall i\neq j, x_i\neq x_j. i,j\in\{1,\cdots, N\} \}\subset \mathbb R^{2N}
    \end{equation}
The control family $\mathcal F$ of system~\eqref{eq:uap_not_uip} is spanned by the vectors
\begin{equation}
(x_1^2,0), (x_1^3,0), (0,x_2^2),(0,x_2^3),(x_2,0),(0,x_1).
\end{equation}
Direct calculations show that $(x_1^n,0),(0,x_2^n)\in\operatorname{Lie} \mathcal F$ for all $n\ge 2$. Direct computation gives
\begin{equation}
    [(x_1^n,0),(0,x_1)]=(0,x_1^n),[(0,x_2^n),(x_2,0)]=(x_2^n,0),
\end{equation}
\begin{equation}
    [(x_1^n,0),(x_2^m,0)]=(-nx_1^{n-1}x_2^m,0),\quad[(0,x_1^n),(0,x_2^m)]=(0,mx_1^nx_2^{m-1}).
\end{equation}
A combination of these identities indicates that all the vector fields $(x^iy^j,0)$ and $(0,x^iy^j)$ with $i+j\ge 2, i,j\ge 0$ are in $\operatorname{Lie} \mathcal F$. This implies that, for any polynomial $p_1(x_1,x_2)$ and $p_2(x_1,x_2)$ with vanishing linear and constant terms, the vector $(p_1,p_2)\in\operatorname{Lie}\mathcal F$. Therefore, by the interpolation property of polynomials, we know that $\operatorname{Lie}\mathcal F(x)=\mathbb R^{2N}$ for all $X\in\tilde\Omega_N$. This completes the proof.
\end{proof}
\begin{proof}[Proof of Proposition~\ref{prop:uap_not_uip}]
    Lemma~\ref{lem:1} implies that the set of flow maps $\mathcal A_{\mathcal F}$ of system~\eqref{eq:uap_not_uip} can approximately interpolate any finite set of data points $\{(x_i,y_i)\}_{i=1}^N$ with $x_i\neq 0$.

    When taking the control $\theta_2,\theta_6=0$, Lemma~\ref{lem:2} shows that $\mathcal A_{\mathcal F}$ can approximate the function $(h(x_1),h(x_2))$ in $C([-L,L]^2)$ for any $h:\mathbb R\to\mathbb R$ that is increasing and satisfies $h(0)=0$. Specifically, $\mathcal A_{\mathcal F}$ can approximate a family of ``shrinking functions'', similar to the one defined in Section 4.3 of~\cite{li2022DeepLearningDynamical}, over any cube $[-L,L]\times [-L,L]$ for $L>0$.

    Therefore, following essentially the same approach as the proof of Theorem 4.11 in~\cite{li2022DeepLearningDynamical}, we show that $\mathcal A_{\mathcal F}$ can approximate any continuous function $F$ with $F(0,0)=(0,0)$ in $L^p$ sense. Since any continuous function can be approximated by continuous functions vanishes at $0$ in $L^p(K)$, it follows that system~\eqref{eq:uap_not_uip} can approximate any continuous function in $L^p(K)$ sense.

\end{proof}
\subsubsection{Proof of Proposition~\ref{prop:uni_time}}

Finally, we give the proof of Proposition~\ref{prop:uni_time}. The key idea is that the uniformity assumptions in Proposition~\ref{prop:uni_time} actually implies the precompactness of $\mathcal A_{\mathcal F,T}$ in $C(K)$. This property establishes the equivalence between approximation and universal interpolation of a target function using $\mathcal A_{\mathcal F}$.
% Specifically, we first establish the following result.
The following result can be directly derived from the classical ODE argument.
\begin{lemma}
    $\mathcal A_{\mathcal F,T}$ is uniformly Lipschitz and uniformly bounded.
\end{lemma}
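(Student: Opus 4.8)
The plan is a straightforward Gr\"onwall argument carried out segment by segment along the concatenation that defines a flow in $\mathcal A_{\mathcal F,T}$, using that the segment lengths sum to at most $T$. Fix $\varphi=\varphi_{t_k}^{f_k}\circ\cdots\circ\varphi_{t_1}^{f_1}\in\mathcal A_{\mathcal F,T}$ with $f_1,\dots,f_k\in\mathcal F$ and $t_1+\cdots+t_k=t\le T$, and fix a compact set $K$; here I read ``uniformly bounded'' as $\sup_{\varphi}\sup_{x\in K}|\varphi(x)|_1<\infty$. For boundedness, on a single autonomous segment $\dot x=f_i(x)$ the linear-growth bound $|f_i(x)|_1\le c_1+c_2|x|_1$ gives $\frac{d}{ds}\bigl(|x(s)|_1+c_1/c_2\bigr)\le c_2\bigl(|x(s)|_1+c_1/c_2\bigr)$, hence $|x(s)|_1+c_1/c_2\le\bigl(|x(0)|_1+c_1/c_2\bigr)e^{c_2 s}$. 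Since the same additive shift $c_1/c_2$ appears at every segment, these estimates chain multiplicatively, so after the whole flow $|\varphi(x_0)|_1+c_1/c_2\le\bigl(|x_0|_1+c_1/c_2\bigr)e^{c_2 t}\le\bigl(|x_0|_1+c_1/c_2\bigr)e^{c_2 T}$. Writing $R_K:=\max_{x\in K}|x|_1$, every $\varphi\in\mathcal A_{\mathcal F,T}$ maps $K$ into the ball $\bar B_{M(K,T)}$ of radius $M(K,T):=(R_K+c_1/c_2)e^{c_2T}$, and applying the same computation to each initial partial composition shows that all intermediate trajectories remain in that ball.

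For the Lipschitz estimate I would argue identically. Let $L$ be the common Lipschitz constant furnished by the uniform Lipschitz hypothesis on $\mathcal F$. On a segment $\dot x=f_i(x)$, two trajectories $x(\cdot),y(\cdot)$ satisfy $\frac{d}{ds}|x(s)-y(s)|_1\le L|x(s)-y(s)|_1$, so $|x(s)-y(s)|_1\le e^{Ls}|x(0)-y(0)|_1$; composing over the $k$ segments yields $|\varphi(x_0)-\varphi(y_0)|_1\le e^{Lt}|x_0-y_0|_1\le e^{LT}|x_0-y_0|_1$, i.e.\ every $\varphi\in\mathcal A_{\mathcal F,T}$ is $e^{LT}$-Lipschitz. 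If only local Lipschitz bounds were available, as in the locally Lipschitz setting of \Cref{subsubsec:loc_lip}, one would first invoke the confinement to $\bar B_{M(K,T)}$ from the previous step and then take the Lipschitz constant of $\mathcal F$ on that ball; under the standing global hypotheses this refinement is unnecessary.

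Combining the two parts, each $\varphi\in\mathcal A_{\mathcal F,T}$ sends $K$ into $\bar B_{M(K,T)}$ and has Lipschitz constant at most $e^{LT}$, with both constants depending only on $K$, $T$ and the structural constants $c_1,c_2,L$ of $\mathcal F$ — not on the number of segments $k$, the times $t_i$, or the choice of the $f_i$. This is precisely uniform boundedness and uniform Lipschitzness of $\mathcal A_{\mathcal F,T}$, and it is exactly what is needed to obtain, via Arzel\`a--Ascoli, the precompactness of $\mathcal A_{\mathcal F,T}$ in $C(K)$ used in the proof of \Cref{prop:uni_time}. I do not expect a substantive obstacle; the only point requiring care is that the Gr\"onwall exponents are additive in the segment lengths, so that chaining the per-segment estimates produces constants governed by the total time $T=\sum_i t_i\le T$ rather than by the a priori unbounded segment count $k$ — which is exactly what the constraint $t_1+\cdots+t_k\le T$ guarantees.
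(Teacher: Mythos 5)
Your proof is correct and uses essentially the same approach as the paper: a Gr\"onwall estimate on each constant-control segment, chained across the concatenation and controlled by the total time $\sum_i t_i \le T$ rather than the segment count $k$. The only cosmetic difference is that for boundedness you telescope via the additive shift $|x|_1 + c_1/c_2$ (valid here since the hypothesis takes $c_2>0$), whereas the paper carries the inhomogeneous term $c_1 t$ through the chaining and bounds the resulting sum by $c_1 T e^{c_2 T}$; both yield the same conclusion.
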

\begin{proof}
    For any $f\in\mathcal F$ and the dynamical system
    \begin{equation}
        \label{eq:ode}
    \dot x(t)=f(x(t)),
    \end{equation}
we have
\begin{equation}
    |x(t)|_1=|x(0)+\int_0^t f(x(s))~\mathrm ds|_1\le |x(0)|_1+\int_0^t (c_1+c_2|x(s)|)~\mathrm ds.
\end{equation}
    Applying Gr\"onwall's inequality gives
    \begin{equation}
        |\varphi_t^f(x(0))|_1=|x(t)|_1\le (|x(0)|_1+c_1t)e^{c_2t},
    \end{equation}
    for all $t\ge 0$ and $x(0)\in\mathbb R^d$.

    Therefore, for any
    $  \varphi(\cdot)=\varphi_{t_k}^{f_k}\circ \cdots
    \varphi_{t_2}^{f_2}\circ \varphi_{t_1}^{f_1}(\cdot)\in \mathcal A_{\mathcal F}$ with $ t_1+\cdots t_k\le T$ and $f_i\in \mathcal F(i=1,\cdots, k)$, we have
    \begin{equation}
        |\varphi(x)|_1\le e^{c_2(t_1+\cdots+t_k)}|x|_1+c_1\sum_{i=1}^{k}t_ie^{c_2(t_i+\cdots+t_k)}\le e^{c_2T}|x|_1+ c_1Te^{c_2T}.
    \end{equation}
Consequently, functions in $\mathcal A_{\mathcal F,T}$ are uniformly bounded on the compact set $K$.

Denote $L$ as the uniform Lipschitz constant of $\mathcal F$. For any $f\in\mathcal F$, we denote $x_1(t)$ and $x_2(t)$ as the solution of system~\eqref{eq:ode} with initial value $x_1(0)$ and $x_2(0)$, respectively. We then have
\begin{equation}
    \begin{aligned}
    |x_1(t)-x_2(t)|_1&=|x_1(0)-x_2(0)+\int_{0}^t (f(x_1(s)-x_2(s)))~\mathrm ds|_1\\
    &\le |x_1(0)-x_2(0)|_1+L\int_0^t |x_1(s)-x_2(s)|_1~\mathrm ds.
    \end{aligned}
\end{equation}
Using Gr\"onwall's inequality, we can get the following estimate holding for all $\varphi\in\mathcal A_{\mathcal F,T}$:
\begin{equation}
    |\varphi(x_1)-\varphi(x_2)|_1\le e^{LT} |x_1-x_2|_1.
\end{equation}
That means $\mathcal A_{\mathcal F,T}$ is uniformly Lipschitz.
\end{proof}
Now we can prove Proposition~\ref{prop:uni_time}.
\begin{proof}[Proof of Proposition~\ref{prop:uni_time}]

For any continuous $F\in \bar{\mathcal A}_{\mathcal F,T}$, there exists a sequence $\{\varphi_n\}_{n=1}^\infty
\in \mathcal A_{F,T}$  such that $\lim_{n\to \infty} \varphi_n=F$ in $L^p(K)$. Since $\mathcal A_{\mathcal F,T}$ is uniformly bounded and Lipschitz, and hence equicontinuous. By  Arzel\`a-Ascoli theorem, there exists a subsequence $\{\varphi_{n_k}\}$ converging to $F$ uniformly. Therefore, for any finite data $\mathcal D\sim (K,F)$, and tolerance $\varepsilon >0$, there exists some $\varphi_{n_k}$ such that it interpolates $\mathcal D$ with error less than $\varepsilon$. This implies the necessity.

Suppose any finite data $ \mathcal D\sim (K,F)$ can be approximately interpolated in time $T$. Since the set of points in $K$ with rational coordinates is countable but dense, we can enumerate these points as sequence $\{z_i\}_{i=1}^\infty$. By assumption, for any positive integer $n$, there exists $ \varphi_n\in\mathcal A_{\mathcal F,T}$ such that $|\varphi_n(z_i)-F(z_i)|<\frac{1}{n}$ for $i=1,2\cdots,n$. According to Arzel\`a-Ascoli theorem, there exists a uniformly convergent subsequence $\{\varphi_{n_k}\}$ of $\{\varphi_n\}$. Therefore, $\{\varphi_{n_k}\}$ uniformly converges to $F$ on a dense subset of $K$. Since $F$ is continuous, we deduce that $\{\varphi_{n_k}\}$ converges to $F$ uniformly on $K$. This implies the sufficiency.
\end{proof}

\bibliographystyle{siamplain}
\bibliography{ref}

\end{document}